\definecolor{poliblue3}{RGB}{25,43,67}
\newtheorem{theorem}{Theorem}[section]
\newtheorem{lemma}[theorem]{Lemma}
\newtheorem{proposition}[theorem]{Proposition}
\theoremstyle{remark}
\newcommand{\cmark}{\textcolor{green}{\ding{51}}} 
\newcommand{\xmark}{\textcolor{red}{\ding{55}}} 
\title{Efficient Learning of POMDPs with Known Observation Model in Average-Reward Setting}
\author {
    Alessio Russo\textsuperscript{\rm 1},
    Alberto Maria Metelli\textsuperscript{\rm 1},
    Marcello Restelli\textsuperscript{\rm 1}
}
\begin{document}

\maketitle

\begin{abstract}
Dealing with Partially Observable Markov Decision Processes is notably a challenging task. We face an average-reward infinite-horizon POMDP setting with an unknown transition model, where we assume the knowledge of the observation model. Under this assumption, we propose the Observation-Aware Spectral (OAS) estimation technique, which enables the POMDP parameters to be learned from samples collected using a belief-based policy. Then, we propose the OAS-UCRL algorithm that implicitly balances the exploration-exploitation trade-off following the \emph{optimism in the face of uncertainty} principle. The algorithm runs through episodes of increasing length. For each episode, the optimal belief-based policy of the estimated POMDP interacts with the environment and collects samples that will be used in the next episode by the OAS estimation procedure to compute a new estimate of the POMDP parameters. Given the estimated model, an optimization oracle computes the new optimal policy. 
We show the consistency of the OAS procedure, and we prove a regret guarantee of order \(\mathcal{O}(\sqrt{T \log(T)})\) for the proposed OAS-UCRL algorithm. We compare against the oracle playing the optimal stochastic belief-based policy and show the efficient scaling of our approach with respect to the dimensionality of the state, action, and observation space. We finally conduct numerical simulations to validate and compare the proposed technique with other baseline approaches.
\end{abstract}

%

\section{Introduction}
In Reinforcement Learning (RL)~\cite{sutton1998}, an agent interacts with an unknown or partially known environment to maximize the expected long-term reward. This class of approaches has been successfully used in a large variety of problems~\cite{mnih2016async, schulman2017PPO, casas2017deep}. Most existing results consider learning under \emph{fully observable} systems where the underlying state of the system is visible to the learner. However, many real-world problems present a \emph{partially observable} nature, further complicating the task since an agent needs to implement a memory mechanism and keep a belief over the system state. Some applications include autonomous driving~\cite{levinson2011towards}, medical diagnosis~\cite{hauskrecht2000PlanningTO}, resource allocation~\cite{gilbert2005resource}.\\
Dealing with Partially Observable MDPs (POMDP)~\cite{sondik1978optimal} is both statistically and computationally intractable in general~\cite{mossel2005learning, mundhenk2000complexity}.\\
Some recent works have focused on considering a class of POMDPs where sample-efficient algorithms can be devised. 
\citet{Azizzadenesheli2016Reinforcement} and~\citet{guo2016pac} extend to the decision-making problem the use of techniques based on \emph{spectral decomposition} (SD), which have been initially devised for estimation of latent variable models such as HMMs~\cite{Anandkumar2014Tensor}. \citet{jin2020sample} presents a sample-efficient algorithm for undercomplete POMDPs, where the number of observations is larger than the number of states. All these classes fall under the richer family of \emph{weakly-revealing} POMDPs, which have been lately defined in the work of~\citet{liu2022partially}.

This paper considers a POMDP setting where the \emph{observation model is known}, and the agent needs to learn the \emph{unknown transition model}. The assumption of having knowledge of some portions of the system has been previously used in problems with partial observability. Examples are the works on Latent Bandits~\cite{MaillardLatent2014, hongLatent2020} and Switching Latent Bandits~\cite{anonymous}, where the reward distributions of each bandit are known, but the identity of the active bandit (latent state) is hidden to the learner.\\ 
In many real-world applications, situations where the observation model is known emerge. This happens, for example, when the learning phase of the process is decoupled into an offline phase (used for learning the observation model) and an online phase where the knowledge of the dynamics of the underlying state is hidden. Examples of this type are applications that initially learn through simulators~\cite{thanan2021resource, alonso2017virtual}.
Furthermore, this setting is of practical interest in non-stationary environments where changes do not affect the entire environment but only the dynamics of the system. In these cases, previous knowledge of the observation model can be retained and employed for faster learning of the system dynamics. Analogously, this condition is applicable in the field of \emph{transfer learning} where the knowledge of some domain can be used to help learning on a similar domain.

\paragraph{Contributions.}The contributions of the paper are summarized as follows:
\begin{itemize}[noitemsep, leftmargin=*, topsep=0pt]
    \item We propose a consistent procedure that estimates the transition model of a POMDP in the average-reward infinite-horizon setting under the assumption of knowing the observation model. In particular, by assuming the ergodicity of the Markov chain induced by the belief-based policy used to collect samples, we prove that our OAS procedure provides an estimation rate of \(\mathcal{O}(1/\sqrt{T})\) while also showing a desirable dependency on the problem parameters;
    \item We plug this approach into a regret minimization algorithm and, by exploiting the obtained convergence result, we establish that our OAS-UCRL  algorithm achieves a regret bound of \(\widetilde{\mathcal{O}}(\sqrt{T})\)\footnote{The notation with \(\widetilde{\mathcal{O}}(\cdot)\) ignores logarithmic terms.} under the set of stochastic belief-based policies;
    \item We provide numerical simulations validating both our OAS estimation procedure and showing the effectiveness of the OAS-UCRL algorithm over state-of-the-art approaches.
\end{itemize}

\section{Related Works}\label{sec:relatedWorks}
Learning in the fully observable RL setting has been extensively studied in the last few years. Some works considered the episodic finite-horizon setting~\cite{azar2017MinimaxRB, jin2018qlearning, zanette2020LearningNO}, while a different line of works studied the non-episodic undiscounted setting~\cite{bartlett2009regal, jaksch2010near, ortner2012e}. Differently, the study of the partially observable RL setting has been relatively less explored.\\
Learning in the general POMDP setting is known to be statistically hard~\cite{krishnamurthy2016PACRL}, and the hard instances are generally those where observations do not contain useful information for identifying the system dynamics.

\paragraph{Efficient Learning in POMDPs.}
Some recent works have focused on specific POMDP classes where the observation model is full-rank or have considered its robust version called \(\alpha\)-weakly revealing where \(\alpha\) intuitively quantifies the amount of information that observations provide when inferring the latent states. These conditions rule out the pathological POMDP instances that can lead to hardness results.\\
A first line of work focuses on the \textbf{episodic} setting~\cite{guo2016pac, jin2020sample, liu2022partially, liu2022optimistic}. \citet{jin2020sample} propose an algorithm with optimal sample complexity \((1/\epsilon^2)\) for finding an \(\epsilon\)-optimal policy for the undercomplete case, where the number of latent states must not be greater than the number of observations (\(S\le O\)), while \citet{liu2022partially} present a new generic algorithm based on \emph{Maximum Likelihood Estimation} that reaches a \(\widetilde{\mathcal{O}}(\sqrt{K})\) regret for the undercomplete setting, with \(K\) being the number of episodes. They are also able to handle the more difficult overcomplete case (\(S > O\)), for which they show a regret of order \(\widetilde{\mathcal{O}}(K^{2/3})\).\\
A second line of works instead focuses on the \emph{non-episodic} average reward setting~\cite{ xiong2022sublinear, jahromi2022online, Azizzadenesheli2016Reinforcement}. Among them,~\citet{xiong2022sublinear} employ assumptions similar to ours and devise an algorithm alternating exploration and exploitation phases, they make use of spectral decomposition techniques to learn the POMDP parameters from samples collected during the exploration phase. Their algorithm suffers a regret of \(\mathcal{O}(T^{2/3})\) with respect to the optimal POMDP policy.\\
~\citet{jahromi2022online} propose a posterior sampling-based algorithm for which they prove a Bayesian regret of order \(\mathcal{O}(T^{2/3})\) under some technical assumptions on the belief state approximation and the transition parameters estimation. Differently,~\citet{Azizzadenesheli2016Reinforcement} adapt spectral decomposition techniques for learning the POMDP parameters under stochastic memoryless policies and present an algorithm suffering a \(\widetilde{\mathcal{O}}(\sqrt{T})\) regret bound. However, they compare against an oracle using the best stochastic memoryless policy, which is weaker than the optimal POMDP oracle used in the previously mentioned works.\\
For a thorough comparison with the most relevant works, we refer the reader to Table~\ref{tab:comparisonRL} in Appendix~\ref{appendix:comparison}.

\paragraph{Settings with Known Observation Model.} 
Having partial knowledge of the model is a relatively common assumption in RL:~\cite{azar2017MinimaxRB} for example, assume to know the reward model characterizing the environment. Concerning instead the partial observable setting, this assumption has been largely used in MAB problems, such as Latent Bandits~\cite{MaillardLatent2014, hongLatent2020} where the learning agent knows reward distributions of arms conditioned on an unknown discrete latent, or its non-stationary variants~\cite{hongNonStationary2020, anonymous} where the reward distributions are still known but the latent state keeps changing according to an unknown Markov chain. Analogously to our case, for the POMDP setting,~\citet{jahromi2022online} develop a posterior sampling-based algorithm under the assumption of knowing the observation model.

\section{Preliminaries}\label{sec:preliminaries}
In this section, we provide the necessary backgrounds and
notations that will be used throughout the rest of the article.

\paragraph{Controlled Markov Process.} A Controlled Markov Process (CMP) is a tuple \((\mathcal{S}, \mathcal{A}, \mathbb{T}, \mu)\), where \(\mathcal{S}\) is a finite state space (\(|\mathcal{S}| \eqqcolon S\)), \(\mathcal{A}\) is a finite action space (\(|\mathcal{A}|\eqqcolon A\)), \(\mathbb{T}=\{\mathbb{T}_a\}_{a \in \mathcal{A}}\) is a matrix of dimension \(SA \times S\) and represents a collection of transition matrices \(\mathbb{T}_a\) with size \(S\times S\) such that \(\mathbb{T}_a(\cdot|s) \in \Delta(\mathcal{S})\) denotes the distribution of the next state when the agent takes action \(a \in \mathcal{A}\) in state \(s \in \mathcal{S}\), and \(\bm{\mu} \in \Delta(\mathcal{S})\) is the distribution of the initial state.\footnote{We denote with \(\Delta(\mathcal{X})\) the simplex over a finite set \(\mathcal{X}\).}

\paragraph{Policies and State Distributions in CMPs.} A policy \(\pi\) defines the behavior of an agent interacting in an environment. It is characterized by a sequence of decision rules \(\pi:= (\pi_t)_{t=0}^\infty\). Each of them maps a history \(h \in \mathcal{H}_t\) into distributions over actions \(\pi_t: \mathcal{H}_t \rightarrow \Delta(\mathcal{A})\) such that \(\pi_t(a|h)\) defines the conditional probability of taking action \(a \in \mathcal{A}\) having observed history \(h \in \mathcal{H}_t\). We denote as \(\mathcal{H}\) the space of histories of arbitrary length. When interacting with a CMP the history can be defined as \(h:=(s_j,a_j)_{j=0}^t \in \mathcal{H}_t\).\\
We denote with \(\Pi\) the set of all the policies. A policy \(\pi \in \Pi\) interacting with a CMP induces a state visitation distribution \(d_t^\pi(s) \coloneqq P(S_t=s|\pi)\)~\cite{puterman2014discrete}, also defined as \(d_t^\pi(s)=\sum_{s' \in \mathcal{S}} \sum_{a' \in \mathcal{A}} d_{t-1}^\pi(a',s')\mathbb{T}_{a'}(s|s')\), 
where \(\bm{d}_t^\pi \in \Delta(\mathcal{A}\times \mathcal{S})\) is the \(t\)-step action-state visitation distribution and is defined as \(d_t^\pi(a,s) \coloneqq P(A_t=a, S_t=s|\pi)\). The fixed point of the defined temporal relation (when it exists) is \(d_\infty^\pi(s) \coloneqq \lim_{t \to \infty} d_t^\pi(s)\) and is called \emph{stationary state distribution}. If the limiting distribution \(d_\infty^\pi(s)\) exists and \(d_\infty^\pi(s) > 0\) for all \(s \in \mathcal{S}\), then this distribution is unique. Analogous considerations hold for the {stationary action-state distribution} \(\bm{d}_\infty^\pi \in \Delta(\mathcal{A} \times \mathcal{S})\).\\
From now on, we will use vector notation \(\bm{d}^{\pi}_S \in \Delta(\mathcal{S})\) for the stationary state distribution and \(\bm{d}^{\pi}_{AS} \in \Delta(\mathcal{A} \times \mathcal{S})\) for the stationary action-state distribution.\footnote{The subscript represents the support size of the distribution.}

\paragraph{Partially Observable MDP.} A Partially Observable Markov Decision Process (POMDP)~\cite{astrom1965optimal} is described by a tuple \(\mathcal{Q}\coloneqq(\mathcal{S}, \mathcal{A}, \mathcal{O}, \mathbb{T}, \mathbb{O}, \mu, r)\) where \(\mathcal{S}\), \(\mathcal{A}\), \(\mathbb{T}\), \(\mu\) are denoted as in the CMP. \(\mathcal{O}\) denotes the finite space of observation (\(|\mathcal{O}|\eqqcolon O\)); \(\mathbb{O}=\{\mathbb{O}_a\}_{a \in \mathcal{A}}\) denotes the set of emission matrices of size \(O \times S\) such that \(\mathbb{O}_a(\cdot|s) \in \Delta(\mathcal{O})\) gives the distribution over observations when the agent takes action \(a\in \mathcal{A}\) conditioned on the hidden state \(s \in\mathcal{S}\); \(r: \mathcal{O} \rightarrow [0,1]\) is the known reward function that deterministically maps each observation to a finite reward such that \(r(o)\) is the received reward when the agent observes \(o \in \mathcal{O}\).
In a POMDP, states are hidden, and the agent can only see the observations and its actions. The interaction proceeds as follows. At each step \(t\), the agent takes an action \(a_t \in \mathcal{A}\) and receives an observation \(o_t \in \mathcal{O}\) conditioned on the hidden state \(s_t \in \mathcal{S}\) according to the law \(\mathbb{O}_{a_t}(\cdot|s_t)\). Finally, the action played makes the POMDP transition into a new hidden state \(s_{t+1}\) according to the distribution \(\mathbb{T}_{a_t}(\cdot|s_t)\).\\

\paragraph{From POMDP to Belief MDP.} By knowing the transition and emission matrices \(\mathbb{T}\) and \(\mathbb{O}\) respectively, and from the observed history at time \(t-1\), \(h_{t-1}:= (a_j, o_j)_{j=0}^{t-1}\), it is possible to build a belief vector \(b_t \in \mathcal{B}\) with \(\mathcal{B}:=\Delta(\mathcal{S})\) defined as a simplex over the state space. At time \(t\), we define \(b_t(s):=P(S_t=s|h_{t-1})\). From the agent's point of view, a POMDP can be seen as a belief MDP~\cite{krish2016partially}. The update rule of the belief \(b_{t+1}\) is determined by Bayes's theorem as follows:
\begin{align}\label{eq:beliefUpdate}
	b_{t+1}(s) = \frac{\sum_{s'}b_t(s') \mathbb{O}_{a_t}(O_t=o_t|S_t=s') \mathbb{T}_{a_t}(s|s')}{\sum_{s''} \mathbb{O}_{a_t}(O_t=o_t|S_t=s'') b_t(s'')}.
\end{align}
The average reward of the infinite-horizon belief MDP given an initial belief \(b \in \mathcal{B}\) is defined as: \(\rho_b^\pi:=\lim \sup_{T \to \infty} (1/T)\mathbb{E}[\sum_{t=1}^{T}r(o_t)|b_1=b]\). If the underlying MDP is weakly communicating,~\citet{bertsekas1995dynamic} showed that \(\rho^*:=\sup_\pi \rho_b^\pi\) is independent of the initial belief \(b\) and the following Bellman optimality equation can be defined:
\begin{align}\label{eq:Bellman}
	\rho^* + v(b) = \underset{a \in \mathcal{A}}{\max}\; \left[g(b,a) + \int_{\mathcal{B}}P(\,db'|b, a) v(b') \right],
\end{align}
where \(g(b,a)\) represents the expected instantaneous reward obtained when taking action \(a\in\mathcal{A}\) having belief \(b \in \mathcal{B}\) such that \(g(b,a) = \sum_{s \in \mathcal{S}} \sum_{o \in \mathcal{O}} b(s) \mathbb{O}_a(o|s)r(o)\). Instead, \(v: \mathcal{B} \to \mathbb{R}\) is the bias function that commonly appears in the average reward MDP setting~\cite{Mahadevan1996average}. 

\section{Problem Formulation}\label{sec:problemSetting}
We consider a non-episodic POMDP setting, as the one described in Section~\ref{sec:preliminaries}. Specifically, we focus on \emph{undercomplete} POMDPs~\cite{jin2020sample}, where the number of states is less than or equal to the number of observations \(S \le O\). Similarly to~\citet{jahromi2022online}, we assume knowledge of the emission matrices \(\mathbb{O}=\{\mathbb{O}_a\}_{a \in \mathcal{A}}\), while we learn the transition model \(\mathbb{T}=\{\mathbb{T}_a\}_{a \in \mathcal{A}}\).\\
We focus on stochastic belief-based policies which map the space \(\mathcal{B}\) of belief over the states to a distribution over actions, such that \(\pi: \mathcal{B} \to \Delta(\mathcal{A})\). We denote by \(\mathcal{P}\) the set of all stochastic belief-based policies having probability at least \(\iota\) (with \(\iota > 0\)) of exploring all actions:
\begin{align}\label{def:policySet}
	\mathcal{P} \coloneqq \{\pi: \underset{b \in \mathcal{B}}{\min}\: \underset{a \in \mathcal{A}}{\min}\: \pi(a|b) \ge \iota\}.
\end{align}
In the following, we denote the assumptions that we enforce for our setting.
\begin{restatable}[\textbf{Minimum Value Transition Matrices}]{assumption}{assMinElem}\label{ass:minElem}
	The smallest value appearing in the transition matrices is given by \(\epsilon := \underset{s,s'\in \mathcal{S}, a \in \mathcal{A}}{\min}\mathbb{T}_a(s'|s) > 0\).
\end{restatable}
This assumption is necessary in our setting for multiple purposes. From a technical point of view, it ensures geometric ergodicity~\cite{krish2016partially} under each policy \(\pi\), thus allowing it to reach its stationary distribution exponentially fast. Furthermore,~\citet{xiong2022sublinear} showed that under this assumption the diameter \(D\) of the belief MDP can be bounded and this allows Equation~\eqref{eq:Bellman} to be well defined.
Most importantly, this assumption is fundamental in the theoretical analysis to bound the error in the belief computed using the estimated transition matrices, using the result appearing in Proposition~\ref{prop:beliefErrorPOMDP} (see the appendix). Despite seeming a strong assumption, it is satisfied in relevant POMDP applications involving information gathering as detailed in~\cite{guo2016pac}. Furthermore, this assumption has been commonly used in works facing settings with partial observability ~\cite{zhou2021regime, anonymous, jiang2023online, xiong2022sublinear}.

\begin{restatable}[\textbf{\(\bm{\alpha}\)-weakly Revealing Condition}]{assumption}{assWeaklyRev}\label{ass:weaklyRev}
	There exists \(\alpha>0\) such that \(\underset{a \in \mathcal{A}}{\min} \; \sigma_S(\mathbb{O}_a) \ge \alpha\).
\end{restatable}
Here, we denote with \(\sigma_S(\mathbb{O}_a)\) the \(S\)-th singular value of matrix \(\mathbb{O}_a\).
This second assumption is related to the identifiability of the POMDP parameters and has been largely used in its \emph{full-rank} version in works using spectral decomposition techniques~\cite{Azizzadenesheli2016Reinforcement, zhou2021regime, hsu2012spectral}. This condition allows excluding hard POMDP instances and has been employed to define the tractable subclass of \emph{weakly revealing} POMDPs~\cite{jin2020sample, liu2022partially, liu2022optimistic} (see Section~\ref{sec:relatedWorks}). Moreover, it directly implies that \(S \le O\), which is a common scenario in many real-world applications. Just to name a few, in spoken dialogue systems, the number of observations is much larger than the state (meaning) of the conversation~\cite{png2012building}; or medical applications where the state (physical condition) of a patient generates a large number of different observations~\cite{hauskrecht2000PlanningTO}.

\begin{restatable}[\textbf{Policy Set}]{assumption}{policySet}\label{ass:policySet}
	The policy \(\pi\) interacting with the POMDP belongs to \(\mathcal{P}\).
\end{restatable}
This assumption guarantees that each action \(a \in \mathcal{A}\) is constantly chosen and is commonly used in spectral-based approaches. Indeed, we will see soon that our estimation procedure resembles SD methods.

~\citet{Azizzadenesheli2016Reinforcement} use a similar assumption when defining their policy class. In particular, they focus on a less powerful class that denotes the set of all stochastic memoryless policies having non-zero probabilities for all actions.\footnote{When using a memoryless policy, the choice of each action is conditioned only on the last received observation rather than the whole history.}

\paragraph{Learning Objective.} 
Having defined all the relevant elements for our setting, we are ready to reformulate the Bellman equation in~\eqref{eq:Bellman} as follows:
\begin{equation}\label{eq:ourBellmanEq}\resizebox{.9\linewidth}{!}{$\displaystyle
	\rho^* + v(b) = \underset{\pi \in \mathcal{P}}{\max} \underset{a \sim \pi(\cdot|b)}{\mathbb{E}}\left[g(b,a) + \int_{\mathcal{B}}P(\,db'|b, a) v(b') \right],$}
\end{equation}
where the maximization is over the policy class \(\mathcal{P}\). Under Assumption~\ref{ass:minElem}, this equation is always verified (see Proposition~\ref{prop:uniformBoundBias} in the appendix). 
Determining the optimal policy for the POMDP model is generally computationally intractable~\cite{madani1999computability}. In this work, we do not focus on solving this planning problem for a known model. Instead, we assume access to an optimization oracle capable of solving Equation~\eqref{eq:ourBellmanEq} and providing the optimal average reward \(\rho^*\) and the optimal stationary policy \(\pi\) for a given POMDP instance \(\mathcal{Q}\).
Our learning objective is to minimize the total regret after \(T\) periods. It is defined as: 
\begin{align}\label{def:regret}
	\mathcal{R}_T := T\rho^* - \sum_{t=1}^{T} r_t^\pi(o_t),
\end{align}
with \(r_t^\pi(o_t)\) being the reward obtained from the observation received while following policy \(\pi \in \mathcal{P}\).

\section{OAS Estimation Procedure}\label{sec:estimationProcedure}
The core idea behind the presented approach is to exploit the known relation between the observations and the underlying latent state. In the following, we will use the term ergodic to denote a Markov chain induced by a policy \(\pi \in \mathcal{P}\) having stationary distribution \(\bm{d}_S^{\pi} \in \Delta(\mathcal{S})\) such that the chain is irreducible and aperiodic and \(d_S^{\pi}(s) > 0\) for all states \(s \in \mathcal{S}\).

Before proceeding, we provide the following result, whose proof is reported in Appendix~\ref{appendix:propUniqueStat}:
\begin{restatable}[]{proposition}{uniqueStationaryPOMDP}\label{proposition:uniqueStationaryPOMDP}
	Let us assume that a policy \(\pi \in \mathcal{P}\) induces an ergodic Markov chain with stationary distribution \(\bm{d}^{\pi}_S \in \Delta(\mathcal{S})\) when interacting with a POMDP instance $\mathcal{Q}$. Then, there exists a unique stationary distribution \(\bm{d}_{A^2S^2}^{\pi}\in \Delta(\mathcal{A}^2 \times \mathcal{S}^2)\) over consecutive action-state pairs.\footnote{Formally, \(d_{A^2S^2}^{\pi}(a,a',s, s'):= \lim_{t \to \infty}d_t^\pi(a,a',s,s')\) with \(d_t^\pi(a,a',s,s'):= P(A_t=a, A_{t+1}=a', S_t=s, S_{t+1}=s'|\pi)\).}
\end{restatable}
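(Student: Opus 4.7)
The plan is to construct the candidate distribution $\bm{d}_{A^2S^2}^\pi$ as a push-forward from a genuinely Markov augmented process, and then deduce uniqueness from the assumed uniqueness of $\bm{d}_S^\pi$. The first step is to note that under a belief-based policy $\pi \in \mathcal{P}$, the state sequence $\{S_t\}$ alone is not Markov (actions depend on the belief $B_t$, hence on the entire history), but the augmented process $Z_t := (S_t, B_t)$ on $\mathcal{S} \times \mathcal{B}$ is genuinely Markov: given $Z_t$, we sample $A_t \sim \pi(\cdot | B_t)$, $O_t \sim \mathbb{O}_{A_t}(\cdot | S_t)$, $S_{t+1} \sim \mathbb{T}_{A_t}(\cdot | S_t)$, and $B_{t+1}$ is deterministically determined by~\eqref{eq:beliefUpdate}. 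Under Assumption~\ref{ass:minElem} the Bayesian filter contracts, which together with the assumed ergodicity of the state marginal yields a unique stationary law for $\{Z_t\}$ whose projection on the $S$-coordinate coincides with $\bm{d}_S^\pi$.

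For existence, I would then define
\begin{align*}
    d_{A^2S^2}^\pi(a, a', s, s') := \lim_{t \to \infty} P(A_t = a, A_{t+1} = a', S_t = s, S_{t+1} = s' \mid \pi),
\end{align*}
and argue that the random vector $(A_t, S_t, A_{t+1}, S_{t+1})$ is a measurable functional of the pair $(Z_t, Z_{t+1})$ together with the independent randomizations used to draw $A_t \sim \pi(\cdot | B_t)$ and $A_{t+1} \sim \pi(\cdot | B_{t+1})$. Ergodicity of $\{Z_t\}$ therefore lifts, via the push-forward, to convergence of this joint marginal to a well-defined stationary measure on $\mathcal{A}^2 \times \mathcal{S}^2$. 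For uniqueness, suppose two stationary laws $\bm{d}^{(1)}, \bm{d}^{(2)}$ satisfied the invariance equations. Their marginals on $S_t$ must both equal $\bm{d}_S^\pi$ by the assumed uniqueness of the state-chain stationary distribution; the Markov structure and the unique stationary belief-given-state conditional law then determine the conditional $(A_t, A_{t+1}, S_{t+1}) \mid S_t$, forcing $\bm{d}^{(1)} = \bm{d}^{(2)}$.

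The principal obstacle will be cleanly translating the assumed ergodicity of the \emph{state} marginal into ergodicity of the augmented \emph{state-belief} chain on $\mathcal{S} \times \mathcal{B}$, since only the latter is genuinely Markov and only on it can one directly invoke tools such as contraction of Bayesian filters. I expect Assumption~\ref{ass:minElem} to do the heavy lifting here: positivity of transitions allows invoking standard filter-stability results to conclude that the belief process forgets its initial condition geometrically. Once that is established, the structure of $\bm{d}_{A^2S^2}^\pi$ is entirely prescribed by $\mathbb{T}$, $\mathbb{O}$, $\pi$, and the stationary belief-state law, and the claimed uniqueness is immediate from the chain-rule factorization along a single step of the augmented Markov dynamics.
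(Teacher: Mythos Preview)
Your argument is correct but takes a genuinely different route from the paper's. The paper does not work with the augmented state--belief chain at all. Instead, it uses an \emph{indicator-reward trick}: it writes down the Bellman expectation equation for the belief MDP, notes that the associated average reward satisfies $\rho^\pi = \sum_{s,a} d^\pi(a,s)\,\mu(s,a)$, and then \emph{chooses} the reward function $\mu(s,a) = \mathds{1}[s=s',\,a=a']$ for each fixed pair $(s',a')$. This forces $\rho^\pi = d^\pi(a',s')$, and since $\rho^\pi$ is guaranteed to exist by the Bellman equation, each entry of $\bm{d}_{AS}^\pi$ exists. The passage from $\bm{d}_{AS}^\pi$ to $\bm{d}_{A^2S^2}^\pi$ is then asserted to be immediate.

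Your approach is the more orthodox Markov-chain argument: lift to the genuinely Markov process $(S_t,B_t)$, invoke filter stability to get a unique invariant law there, and push forward. This is arguably cleaner and makes the dependence structure explicit; it also spells out the step from $\bm{d}_{AS}^\pi$ to $\bm{d}_{A^2S^2}^\pi$, which the paper leaves as ``directly follows.'' The cost is that you lean on Assumption~\ref{ass:minElem} to obtain filter contraction, whereas the proposition as stated only hypothesizes ergodicity of the induced state chain. In the paper's setting Assumption~\ref{ass:minElem} is globally in force, so this is harmless, but strictly speaking you are importing an extra hypothesis that the paper's Bellman-equation argument (at least on its face) does not invoke. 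Conversely, the paper's argument tacitly needs the Bellman expectation equation to hold for the belief MDP, which itself rests on structural conditions not far from what you use; so the two proofs are closer in actual content than in presentation.
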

Clearly, because of the partially observable nature of the considered setting, we are not able to directly estimate the induced distribution \(\bm{d}_{A^2S^2}\). However, we show how they are related to the received observations and how to use this information for estimation. For this purpose, we first need to define an analogous distribution \(d_t^\pi(a,o):= P(A_t=a, O_t=o|\pi)\) defined over action-observation pairs and its doubled version \(d_t^\pi(a,a',o,o'):= P(A_t=a, A_{t+1}=a', O_t=o, O_{t+1}=o'|\pi)\) considering consecutive action-observation pairs. We denote as well their limit versions as \(\bm{d}_{AO}^\pi \in \Delta(\mathcal{A} \times \mathcal{O})\) and \(\bm{d}_{A^2O^2}^\pi \in \Delta(\mathcal{A}^2 \times \mathcal{O}^2)\) respectively.\\
The relations holding between these quantities can be characterized as follows:
\begin{align*}
	d_{AO}^{\pi}(a,o) = \sum_{s \in \mathcal{S}} \mathbb{O}_a(o|s)d_{AS}^{\pi}(a,s) \quad \forall a \in \mathcal{A}, o \in \mathcal{O}.
\end{align*}
Similarly, \(\forall a,a' \in \mathcal{A}\) and \(\forall o,o' \in \mathcal{O}\), we have:
\begin{align}\label{eq:relationOS}
	& d_{A^2O^2}^{\pi}(a,a',o,o') = \notag \\ 
 & \qquad \sum_{s,s' \in \mathcal{S}}\mathbb{O}_a(o|s) \mathbb{O}_{a'}(o'|s')d_{A^2S^2}^{\pi}(a,a',s,s').
\end{align}
These equations link the probability of action-state pairs with that of action-observation pairs and turn out to be relevant for the estimation of the distribution \(\bm{d}_{A^2S^2}^{\pi} \in \Delta(\mathcal{A}^2 \times \mathcal{S}^2)\), as will be shown in the following subsection.

For the remainder of this section, we show how the estimates of the transition model \(\widehat{\mathbb{T}}_a\) can be computed using the estimated \(\widehat{\bm{d}}_{A^2S^2}^{\pi}\). We first need to marginalize over the second component such that:
\begin{align}\label{eq:fromAASStoASS}
	\widehat{d}_{AS^2}^{\pi}(a,s,s') = \sum_{a' \in \mathcal{A}} \widehat{d}_{A^2S^2}^{\pi}(a,a',s,s'),
\end{align}
where \(\widehat{\bm{d}}_{AS^2}^{\pi}\) is an estimate of \(\bm{d}_{AS^2}^{\pi} \in \Delta(\mathcal{A} \times \mathcal{S}^2)\)
representing the stationary distribution over the tuple \((a,s,s')\). Since it is immediate to see that \(\mathbb{T}_a(s'|s) \:\propto \: d_{AS^2}^{\pi}(a,s,s')\), we use this relation to obtain \(\widehat{\mathbb{T}}_a\) from \(\widehat{\bm{d}}_{AS^2}^{\pi}\).
Indeed, the final step results in normalizing the obtained quantities such that each row of the estimated transition matrix \(\widehat{\mathbb{T}}_a\) sums to one. For any \((a,s,s') \in \mathcal{A} \times \mathcal{S}^2\), we get:
\begin{align}\label{eq:fromASStoT}
	\widehat{\mathbb{T}}_a(s'|s) = \frac{\widehat{d}_{AS^2}^{\pi}(a,s,s')}{\sum_{s'' \in \mathcal{S}}\widehat{d}_{AS^2}^{\pi}(a,s,s'')}.
\end{align}

\subsection{Proposed Approach and Theoretical Guarantees}\label{subsec:estimationAlgorithm}
Let us now introduce the quantities that will be useful for estimating the stationary distribution of consecutive action-state pairs \(\bm{d}_{A^2S^2}^{\pi}\) induced by a policy \(\pi \in \mathcal{P}\). First of all, for each pair of actions \(a,a' \in \mathcal{A}\), we define the element:
\begin{align}\label{eq:fromSingleObsToDoubleObs}
	\mathbb{O}_{a,a'} := \mathbb{O}_{a} \otimes \mathbb{O}_{a'},
\end{align}
where symbol \(\otimes\) denotes the Kronecker product~\cite{loan2000kron} between the two matrices. The obtained matrix \(\mathbb{O}_{a,a'}\) has dimensions \(O^2 \times S^2\). We encode each pair of ordered observations and ordered states into variables \(i \in [O^2]\) and \(j \in [S^2]\), respectively. We have that for any variable \(i\) mapping a pair \((o,o')\) and any variable \(j\) mapping a pair \((s, s')\), it holds that: \(\mathbb{O}_{a,a'}(i, j)=\mathbb{O}_{a}(o|s)\mathbb{O}_{a'}(o'|s')\).\\ 
From the properties of the Kronecker product, we have that \(\forall a, a' \in \mathcal{A}\):
\begin{align*}
	\sigma_{\min}(\mathbb{O}_{a, a'}) = \sigma_{S^2}(\mathbb{O}_{a, a'}) = \sigma_S(\mathbb{O}_{a})\sigma_S(\mathbb{O}_{a'}),
\end{align*}
which also implies \(\sigma_{\min}(\mathbb{O}_{a, a'}) \ge \alpha^2\), holding under Assumption~\ref{ass:weaklyRev}.

We define now a block diagonal matrix \(\mathbb{B}\) created by aligning matrices \(\{\mathbb{O}_{a,a'}\}_{(a,a') \in \mathcal{A}^2}\) along the diagonal of \(\mathbb{B}\). This matrix has dimension \(A^2O^2 \times A^2S^2\) and can be directly derived from the knowledge of the emission matrices \(\mathbb{O}\).

The definition of this new quantity allows us to rewrite Equation~\eqref{eq:relationOS} in matrix notation as follows:
\begin{align}\label{eq:matRelationOS}
	\bm{d}_{A^2O^2}^{\pi} = \mathbb{B} \: \bm{d}_{A^2S^2}^{\pi},
\end{align}
where we recall that \(\bm{d}_{A^2O^2}^{\pi}\) and \(\bm{d}_{A^2S^2}^{\pi}\) are vectors of dimension \(A^2O^2\) and \(A^2S^2\), respectively.

The stationary distribution \(\bm{d}_{A^2O^2}^{\pi}\) can be directly estimated from data by counting the number of occurrences of each tuple of the type \((a_t, a_{t+1}, o_t, o_{t+1})\) collected under policy \(\pi\). Exploiting the knowledge of the block diagonal matrix \(\mathbb{B}\) and inverting Equation~\eqref{eq:matRelationOS}, we are able to derive an estimation of \(\bm{d}_{A^2S^2}^{\pi}\):
\begin{align}\label{eq:matRelationOSInv}
	\widehat{\bm{d}}_{A^2S^2}^{\pi} = \mathbb{B}^\dagger \widehat{\bm{d}}_{A^2O^2}^{\pi},
\end{align}
where \(\mathbb{B}^\dagger\) denotes the Moore-Penrose of the block matrix \(\mathbb{B}\). The solution to this equation is always defined as, by construction, it holds that \(\sigma_{\min}(\mathbb{B}) \ge \alpha^2\).

From \(\widehat{\bm{d}}_{A^2S^2}^{\pi}\), we can directly derive an estimation of \(\mathbb{T}\) by following the steps highlighted in the previous subsection. The overall procedure is detailed in Algorithm~\ref{alg:estimationAlgorithm}.

Concerning the computational complexity required for computing the inverse of matrix \(\mathbb{B}\), we notice that, since \(\mathbb{B}\) is block diagonal, the estimation procedure for each pair of arms \(a,a' \in \mathcal{A}\) can be carried out independently. As a consequence, Equation~\eqref{eq:matRelationOSInv} can be decomposed in smaller problems, only comprising the inversion of matrices \(\mathbb{O}_{a,a'}\).\footnote{From the properties of the Kronecker product, it holds that \((A \otimes B)^{-1} = A^{-1} \otimes B^{-1}\) for any two matrices \(A,B\), thus making the computational complexity even lower.}
The following lemma shows the error guarantees of the presented estimation procedure:
\begin{restatable}[]{lemma}{originalEstimationErrorBound}\label{lemma:estimationErrorBound}
    Let us assume that a policy \(\pi \in \mathcal{P}\) induces a stationary action-state distribution \(\bm{d}^{\pi}_{AS}\), such that \(\bm{d}^{\pi}_{AS}(a,s)>0 \; \forall a \in \mathcal{A}, s \in \mathcal{S}\) when interacting with a POMDP instance \(\mathcal{Q}\). If Assumption~\ref{ass:weaklyRev} holds and the process starts from its stationary distribution \(\bm{d}_{AS}^{\pi}\), then, by using Algorithm~\ref{alg:estimationAlgorithm}, with probability at least \(1 - \delta\), it holds:
	\begin{equation*}
		\|\mathbb{T} - \widehat{\mathbb{T}}\|_{F} \le \frac{2}{\alpha^2 d_{\min}^{\pi} \iota } 	\sqrt{\Big( \frac{1+\lambda_{\max}}{1-\lambda_{\max}}\Big)\frac{SA(2+ 5\log(1/\delta))}{n}}.
	\end{equation*}
\end{restatable}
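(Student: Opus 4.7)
The plan is to decompose the error $\|\mathbb{T}-\widehat{\mathbb{T}}\|_F$ along the four stages of the pipeline in Section~\ref{subsec:estimationAlgorithm}: empirical estimation of $\bm{d}_{A^2O^2}^\pi$ from samples, application of $\mathbb{B}^\dagger$, marginalization to $\widehat{\bm{d}}_{AS^2}^\pi$, and row-wise normalization into $\widehat{\mathbb{T}}$. Each stage contributes a well-identifiable factor to the final bound: the concentration step contributes the $\sqrt{(1+\lambda_{\max})/(1-\lambda_{\max})}$ and $\sqrt{(2+5\log(1/\delta))/n}$ terms, the pseudoinverse contributes $1/\alpha^2$, the marginalization is essentially free, and the normalization contributes the $1/(d_{\min}^\pi\,\iota)$ factor together with the $\sqrt{SA}$ coming from the number of rows of $\mathbb{T}$.

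First, I would control the empirical estimation error of $\widehat{\bm{d}}_{A^2O^2}^\pi$ via a Bernstein-type concentration inequality for Markov chains (e.g.\ Paulin's inequality). Under Assumption~\ref{ass:minElem} and $\pi\in\mathcal{P}$, the joint chain over consecutive action--observation pairs is geometrically ergodic with pseudo-spectral gap governed by $1-\lambda_{\max}$, and since sampling starts from stationarity we avoid the usual burn-in. This yields, with probability at least $1-\delta$,
\begin{equation*}
\|\widehat{\bm{d}}_{A^2O^2}^\pi - \bm{d}_{A^2O^2}^\pi\|_2 \;\le\; c\,\sqrt{\frac{1+\lambda_{\max}}{1-\lambda_{\max}}\cdot\frac{2+5\log(1/\delta)}{n}}.
\end{equation*}
Next, the identity~\eqref{eq:matRelationOSInv} together with the block-diagonal structure of $\mathbb{B}$ gives, through the Kronecker singular-value identity and Assumption~\ref{ass:weaklyRev}, $\sigma_{\min}(\mathbb{B})=\min_{a,a'}\sigma_S(\mathbb{O}_a)\sigma_S(\mathbb{O}_{a'})\ge\alpha^2$, so that $\|\mathbb{B}^\dagger\|\le 1/\alpha^2$ and
\begin{equation*}
\|\widehat{\bm{d}}_{A^2S^2}^\pi - \bm{d}_{A^2S^2}^\pi\|_2 \;\le\; \frac{1}{\alpha^2}\,\|\widehat{\bm{d}}_{A^2O^2}^\pi - \bm{d}_{A^2O^2}^\pi\|_2.
\end{equation*}
Marginalization~\eqref{eq:fromAASStoASS} over $a'$ is non-expansive in the relevant norm, so the same rate transfers to $\|\widehat{\bm{d}}_{AS^2}^\pi - \bm{d}_{AS^2}^\pi\|$.

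For the final normalization~\eqref{eq:fromASStoT}, I would use the standard ratio-perturbation identity
\begin{equation*}
\widehat{\mathbb{T}}_a(s'|s) - \mathbb{T}_a(s'|s) \;=\; \frac{\widehat d_{AS^2}^\pi(a,s,s') - d_{AS^2}^\pi(a,s,s')}{\widehat d_{AS}^\pi(a,s)} \;-\; \mathbb{T}_a(s'|s)\,\frac{\widehat d_{AS}^\pi(a,s) - d_{AS}^\pi(a,s)}{\widehat d_{AS}^\pi(a,s)},
\end{equation*}
and lower-bound the denominator by $d_{AS}^\pi(a,s)\ge\iota\,d_{\min}^\pi$ (using $\pi(a|\cdot)\ge\iota$ and marginalizing the belief over histories). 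Aggregating squared row errors over the $SA$ rows of $\mathbb{T}$, and exploiting that each row of $\mathbb{T}_a$ is a probability vector so its $\ell_2$ norm is at most $1$, produces the Frobenius bound with the claimed $\sqrt{SA}$ and the overall prefactor $2/(\alpha^2\,d_{\min}^\pi\,\iota)$.

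The main technical obstacle I anticipate is the dimensional accounting: a naive chain of inequalities would give $\sqrt{AS^2}$ or worse rather than the claimed $\sqrt{SA}$. The three ingredients that keep the scaling tight are the block-diagonal structure of $\mathbb{B}$ (avoiding a $\sqrt{A}$ blow-up under $\mathbb{B}^\dagger$), the simplex constraint on each row of $\mathbb{T}_a$ (collapsing a factor of $\sqrt{S}$ in the row-wise analysis), and the use of a Markov-chain Bernstein inequality with the correct effective dimension so that only the spectral-gap factor $\sqrt{(1+\lambda_{\max})/(1-\lambda_{\max})}$ appears.
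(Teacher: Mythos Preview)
Your overall pipeline matches the paper's, but two steps have concrete gaps.

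\textbf{Marginalization is not non-expansive in $\ell_2$.} Summing over $a'\in\mathcal{A}$ in~\eqref{eq:fromAASStoASS} is a linear map with $\ell_2$ operator norm $\sqrt{A}$, not $1$ (Cauchy--Schwarz: $(\sum_{a'}x_{a'})^2\le A\sum_{a'}x_{a'}^2$, and this is tight). The paper isolates this as a separate lemma and obtains $\|\bm d_{AS^2}^\pi-\widehat{\bm d}_{AS^2}^\pi\|_2\le\sqrt{A}\,\|\bm d_{A^2S^2}^\pi-\widehat{\bm d}_{A^2S^2}^\pi\|_2$. The $\sqrt{A}$ in the final bound comes from here; the normalization step then contributes only $\sqrt{S}$ (via the inequality $\|d_{AS^2}^\pi(a,s,\cdot)\|_2\ge d_{AS}^\pi(a,s)/\sqrt{S}\ge \iota\,d_{\min}^\pi/\sqrt{S}$), and together they give $\sqrt{SA}$. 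Your claim that $\sqrt{SA}$ arises from ``aggregating squared row errors over the $SA$ rows of $\mathbb{T}$'' is not how the factor appears: summing the squared row errors is exactly the Frobenius norm and introduces no additional dimension factor.

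\textbf{The denominator in your ratio-perturbation identity is $\widehat d_{AS}^\pi(a,s)$, not $d_{AS}^\pi(a,s)$.} Your lower bound $d_{AS}^\pi(a,s)\ge\iota\,d_{\min}^\pi$ applies to the true quantity, but the identity you wrote has the \emph{estimate} in the denominator. Since $\widehat{\bm d}_{A^2S^2}^\pi=\mathbb{B}^\dagger\widehat{\bm d}_{A^2O^2}^\pi$ can have negative entries, $\widehat d_{AS}^\pi(a,s)$ need not even be positive. The paper handles this by first clipping negative entries of $\widehat{\bm d}_{AS^2}^\pi$ to zero (which can only decrease the $\ell_2$ error to the true, nonnegative vector) and then applying a normalized-difference inequality whose denominator is $\max\{\|d_{AS^2}^\pi(a,s,\cdot)\|_2,\|\widebar d_{AS^2}^\pi(a,s,\cdot)\|_2\}$, so that the \emph{true} row norm supplies the lower bound. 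Your ratio-perturbation route can be repaired along similar lines, but as written the step fails.

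A minor remark: the lemma as stated does not require Assumption~\ref{ass:minElem}; the paper's concentration step uses only ergodicity of the induced chain, with $\lambda_{\max}$ the uniform bound on second-eigenvalue moduli of the inhomogeneous state--action transition matrices.
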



\begin{algorithm}[tb]
	\caption{OAS Estimation Algorithm}
	\label{alg:estimationAlgorithm}
	\begin{algorithmic}
		\STATE {\textbf{Input:}} Observation matrix \(\{\mathbb{O}_a\}_{a \in \mathcal{A}}\), sample size \(n\), samples \(\{(a_1,o_1),(a_2,o_2), \dots, (a_n,o_n)\}\) collected using \(\pi\)
		\STATE {\textbf{Initialize:}} vector \(\bm{N}\) having dimension \(A^2O^2\), such that \(N(a,a',o,o') = 0\) for all \(a,a' \in \mathcal{A}\) and \(o,o' \in \mathcal{O}\)
		\STATE t = 1
		\WHILE{\(t \le n-1\)}
		\STATE Set \(N(a_t,a_{t+1},o_t,o_{t+1}) \: += \:1\)
		\STATE \(t=t+2\)
		\ENDWHILE
		\STATE Compute vector \(\widehat{\bm{d}}_{A^2O^2}^{\pi} = \frac{\bm{N}}{n/2}\) of consecutive action-observation distribution
		\STATE Create block matrix \(\mathbb{B}\) using Equation~\eqref{eq:fromSingleObsToDoubleObs} from \(\{\mathbb{O}_a\}_{a \in \mathcal{A}}\) 
		\STATE Compute \(\widehat{\bm{d}}_{A^2S^2}^{\pi}\) using Equation~\eqref{eq:matRelationOSInv}
		\STATE Compute \(\widehat{\mathbb{T}}\) using Equations~\eqref{eq:fromAASStoASS} and~\eqref{eq:fromASStoT}
	\end{algorithmic}
\end{algorithm}

Here, \(n\) denotes the number of time steps, while the number of consecutive non-overlapping tuples \((a,a',o,o')\) observed in the trajectory is \(n/2\). The other terms appearing in the bound are detailed in Remark~\ref{lemma:firstRemark}.
We highlight that this bound only requires ergodicity of the induced Markov chain and does not require Assumption~\ref{ass:minElem} to hold. For a formal derivation of Lemma~\ref{lemma:estimationErrorBound}, we refer the reader to Appendix~\ref{appendix:lemmaProof}.

The result shows the consistency of the presented estimation procedure, unlike general Expectation-Maximization techniques~\cite{moon1996EM}, which typically provide biased estimates and get stuck in local optima.

Lemma~\ref{lemma:estimationErrorBound} assumes that the process starts from its stationary distribution, which is quite an unrealistic case: in Appendix~\ref{appendix:lemmaProof}, we show a different version of this lemma that assumes that the chain starts from an arbitrary distribution. This result will be used in the analysis of the OAS-UCRL algorithm, which will be presented in the next section.\\

\begin{restatable}[\textbf{Dependency on POMDP Parameters}]{remark}{firstRemarkLemma}\label{lemma:firstRemark}
    The different terms appearing in the bound depend on the POMDP structure and on the policy set \(\mathcal{P}\).\\ 
	The term \(\lambda_{\max}\) represents an upper bound to the modulus of the maximum second largest eigenvalue of the sequence of inhomogeneous Markov chains induced by policy \(\pi\). We show that this term is always upper bounded by a contraction coefficient, namely the Dobrushin coefficient (see Appendix~\ref{appendix:dobrushin} for details), which is strictly smaller than \(1\) for ergodic chains. The presence of \(\lambda_{\max}\) derives from concentration results of quantities estimated from samples coming by Markov chains (see Lemma~\ref{lemma:distributionBoundStat} for details). The value \(\alpha^2\) derives from the minimum singular value of block matrix \(\mathbb{B}\); it implicitly quantifies the amount of information provided by the observations about the underlying latent state. Finally, the dependence on the terms \(d_{\min}^{\pi}:=\min_{s \in \mathcal{S}}d_S^{\pi}(s)\) derives from the transition model while \(\iota\) is the minimum action probability and characterizes the employed policy class \(\mathcal{P}\).\\ We would like to highlight that there is no dependency on the number of observations \(O\) in the bound. This result is indeed obtained using Lemma~\ref{lemma:distributionBoundStat} bounding the error in the estimate of the discrete distribution \(\bm{d}_{A^2O^2}^{\pi}\). 
    In Appendix~\ref{appendix:continuousObsSpace}, we consider how to tackle this problem in the case of a continuous observation space: a discretization procedure is proposed, which leads back to the finite observation case.\\
\end{restatable}


\begin{algorithm}[tb]
	\caption{The OAS-UCRL Algorithm}
	\label{alg:oasAlgorithm}
	\begin{algorithmic}
		\STATE {\textbf{Input:}} Observation matrix \(\{\mathbb{O}_a\}_{a \in \mathcal{A}}\), confidence level \(\delta\), base trajectory length \(T_0\)
		\STATE {\textbf{Initialize:}} \(t = 1\), \(k = 0\), policy \(\pi^{(0)}\) uniform on actions \(\mathcal{A}\), uniform belief \(b\) over states \(\mathcal{S}\), collected samples \(\mathcal{D} = \emptyset\)
		\WHILE{\(t \le T\)}
		\IF{\(k>0\)}
		\STATE Compute estimated POMDP \(\mathcal{Q}^{(k)}\) from \(\mathcal{D}\) using Algorithm~\ref{alg:estimationAlgorithm}
		\STATE Set \(\delta_k = \delta/k^3\)
		\STATE Define the set of admissible POMDPs \(\mathcal{C}_k(\delta_k)\) such that \(P(\mathcal{Q} \in \mathcal{C}_k(\delta_k)) \ge 1 - \delta_k\)
		\STATE Compute policy \(\pi^{(k)}\) and \(\mathcal{Q}^{(k)}\) maximizing Equation~\eqref{eq:ourBellmanEq} over set \(\mathcal{C}_k(\delta_k)\)
		\STATE Recompute \(b_t\) using \(\mathcal{Q}^{(k)}\) from the history of collected samples
		\ENDIF
		\STATE Set \(t_k = t\), \(\mathcal{D} = \emptyset\)
		\STATE Set \(N_k = T_02^k\)
		\FOR{\(t=t_k\) \textbf{to} \(t_k+N_k-1\)}
		\STATE Execute \(a_t \sim \pi^{(k)}(\cdot|b_t)\)
		\STATE Receive observation \(o_t\), get reward \(r_t = r(o_t)\)
		\STATE Update belief to \(b_{t+1}\) using Equation~\eqref{eq:beliefUpdate}
		\STATE Add tuple \((a_t, o_t)\) to \(\mathcal{D}\)
		\ENDFOR
		\STATE Set \(k = k+1\)
		\ENDWHILE
	\end{algorithmic}
\end{algorithm}

\begin{figure*}[t]
\begin{minipage}{.47\textwidth}
  \includegraphics[scale=1]{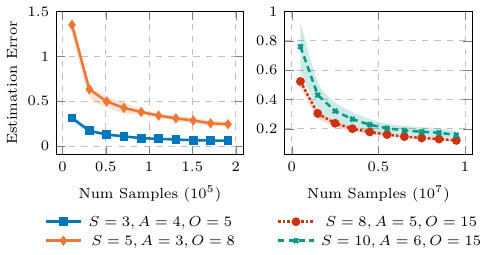}
  \caption{Frobenious norm of the estimation error of the transition model under different problem instances (10 runs, 95 \%c.i.).\\}
  \label{fig:estimationError}
\end{minipage}%
\hspace{0.9cm} 
\begin{minipage}{.47\textwidth}
  \includegraphics[width=\textwidth]{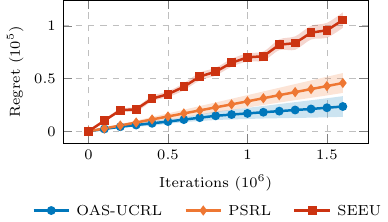}
  \caption{Regret of the different algorithms on a POMDP instance with $S=3$, $A=3$, $O=3$ (10 runs, 95 \%c.i.).}
  \label{fig:regret}
\end{minipage}
\end{figure*}

\section{OAS-UCRL Algorithm}\label{sec:OASUCRLAlgorithm}
In this section, we present the Observation-Aware Spectral UCRL (OAS-UCRL) algorithm. It is obtained by combining Algorithm~\ref{alg:estimationAlgorithm} with a UCRL-like algorithm~\cite{jaksch2010near}, designed to optimally balance the exploration-exploitation trade-off.

During the first episode \(k=0\), a uniform policy over the actions is arbitrarily chosen to collect samples for the first estimation. At the beginning of each episode \(k>0\), a new POMDP \(\mathcal{Q}^{(k)}\) is computed using Algorithm~\ref{alg:estimationAlgorithm} from samples collected in episode \(k-1\). From this estimated quantity, the algorithm builds a confidence region \(\mathcal{C}_k(\delta_k)\) with a confidence level \(1 - \delta_k\) where \(\delta_k \coloneqq \delta/k^3\). The confidence region contains the true POMDP instance \(\mathcal{Q}\) in high probability, namely \(P(\mathcal{Q} \in \mathcal{C}_k(\delta_k)) \ge 1 - \delta_k\). As previously specified, we assume the existence of an oracle computing the optimal policy corresponding to the most optimistic POMDP in \(\mathcal{C}_k(\delta_k)\), as prescribed by the optimistic principle also used in UCRL. More formally, the oracle computes:
\begin{align*}
	\pi^{(k)} = \arg \underset{\pi \in \mathcal{P}}{\max} \;\; \underset{\widetilde{\mathcal{Q}} \in \mathcal{C}_k(\delta_k)}{\max}\;\; \rho(\pi, \widetilde{\mathcal{Q}}),
\end{align*}
where we used \(\rho(\pi, \widetilde{\mathcal{Q}})\) to clarify the dependence of the average reward \(\rho\) from the employed policy \(\pi\) and the considered POMDP instance \(\widetilde{\mathcal{Q}}\), as evidenced in Equation~\eqref{eq:ourBellmanEq}.

Each time a new POMDP is estimated, the belief vector is updated starting from the first sample up to the current time step\footnote{Since the induced Markov chain is ergodic and the belief converges geometrically fast, it would also be sufficient to use a subset of the last samples for the recomputation of the belief.}. New samples are then collected using the new policy \(\pi^{(k)}\), while the belief is updated using the estimated \(\widehat{\mathbb{T}}_k\).

Since from an episode to the successive one, the employed policy is updated, the initial and stationary state distribution will not coincide, thus preventing us from directly using the results of Lemma~\ref{lemma:estimationErrorBound}. However, we report in Lemma~\ref{lemma:extendedEstimationErrorBound} (see the appendix) an extension of Lemma~\ref{lemma:estimationErrorBound}, which considers the case of a chain not starting from its stationary distribution and that can be applied in the context of the OAS-UCRL algorithm.

The OAS-UCRL algorithm proceeds in episodes of increasing length by doubling the number of samples collected at each episode. Dy denoting with \(N_0=T_0\) the length of the first episode, the subsequent episodes have length \(N_k = T_02^k\). Unfortunately, the OAS estimation procedure is not able to reuse past samples since the estimated values depend on the distribution \(\bm{d}_{AS}^{\pi^{(k)}}\) induced by the policy, which changes across episodes. The pseudocode of the approach is reported in Algorithm~\ref{alg:oasAlgorithm}.

Having described the OAS-UCRL algorithm, we are now ready to state its theoretical guarantees. We compare against an oracle knowing both the observation and transition model and using a belief-based policy \(\pi \in \mathcal{P}\). 
\begin{restatable}[]{theorem}{algorithmBound}\label{theorem:algorithmBound}
	Let us assume to have a POMDP instance \(\mathcal{Q}\) satisfying Assumption~\ref{ass:minElem}. Suppose that Assumptions~\ref{ass:weaklyRev} and~\ref{ass:policySet} hold. If the OAS-UCRL algorithm is run for \(T\) steps, with probability at least \(1 - \frac{5}{2}\delta\), it suffers from a total regret:
	\begin{equation*}
		\mathcal{R}_T \le C_1 \frac{S \left(2 + D\right)}{\alpha^2 (\epsilon \iota)^{3/2}} \sqrt{T \log(T/\delta)} + C_2,
	\end{equation*}
\end{restatable}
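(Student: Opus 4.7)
The plan is to follow the standard UCRL-style regret analysis adapted to the belief MDP setting, using the OAS estimator as the model-learning subroutine. I would decompose $\mathcal{R}_T = \sum_{k=0}^{K-1} \mathcal{R}_k$ over the $K \le \lceil \log_2(T/T_0) \rceil + 1$ episodes, and condition on the \emph{good event} $\mathcal{E} = \bigcap_{k \ge 1} \{\mathcal{Q} \in \mathcal{C}_k(\delta_k)\}$. Since $\delta_k = \delta/k^3$, a union bound costs at most $\delta \sum_{k \ge 1} k^{-3} < (3/2)\delta$; together with the $\delta$ budgeted inside Lemma~\ref{lemma:estimationErrorBound} and an additional $\delta$ for a martingale concentration argument below, this accounts for the $5\delta/2$ failure probability in the statement. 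On $\mathcal{E}$, optimism gives $\rho(\pi^{(k)}, \widetilde{\mathcal{Q}}^{(k)}) \ge \rho^*$, so $\mathcal{R}_k \le \sum_{t \in \text{ep}_k}\bigl(\rho(\pi^{(k)}, \widetilde{\mathcal{Q}}^{(k)}) - r_t\bigr)$.

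For each fixed episode $k$, I would apply the Bellman optimality equation~\eqref{eq:ourBellmanEq} on the optimistic belief MDP $\widetilde{\mathcal{Q}}^{(k)}$ to write
\begin{equation*}
\rho(\pi^{(k)},\widetilde{\mathcal{Q}}^{(k)}) - r_t \;=\; \bigl[\widetilde{g}_t - g_t\bigr] + \mathbb{E}_{\widetilde{\mathcal{Q}}^{(k)}}\bigl[\widetilde{v}(b')\mid b_t^{(k)},a_t\bigr] - \widetilde{v}(b_{t+1}^{(k)}) + \bigl[g_t - r_t\bigr],
\end{equation*}
and telescope the bias terms over the episode. Three contributions appear: (i) a telescoping residual bounded by $2\sup|\widetilde{v}| \le 2D$ per episode, yielding an additive $O(KD) = O(D \log T)$ term absorbed in $C_2$; (ii) a martingale-difference sum $\sum_t (g_t - r_t)$ of bounded increments, controlled by Azuma's inequality at the cost of $D\sqrt{N_k \log(1/\delta)}$; and (iii) a model-mismatch term of the form $\sum_t \bigl|\mathbb{E}_{\widetilde{\mathcal{Q}}^{(k)}}[\widetilde{v}(b')\mid b_t^{(k)},a_t] - \widetilde{v}(b_{t+1}^{(k)})\bigr|$. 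Term (iii) is the crux: it couples the estimation error $\|\mathbb{T}-\widetilde{\mathbb{T}}^{(k)}\|$ to the regret through the Lipschitz-in-belief behavior of $\widetilde{v}$ (bounded by $D$) and through the discrepancy between the \emph{agent's} belief trajectory $b_t^{(k)}$ (updated via $\widetilde{\mathbb{T}}^{(k)}$) and the belief one would obtain under the true $\mathbb{T}$; this discrepancy is bounded via Proposition~\ref{prop:beliefErrorPOMDP} by a factor proportional to $\|\mathbb{T}-\widetilde{\mathbb{T}}^{(k)}\|/\epsilon$, where Assumption~\ref{ass:minElem} guarantees geometric forgetting of past belief errors so that they do not accumulate linearly in $t$.

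Next I would invoke Lemma~\ref{lemma:extendedEstimationErrorBound} on the data of episode $k{-}1$ (the non-stationary-start extension is needed since each episode begins at an arbitrary belief). Using Assumption~\ref{ass:minElem} to lower-bound $d_{\min}^{\pi^{(k-1)}} \ge \epsilon$ and noting that the policy set $\mathcal{P}$ enforces $\pi \ge \iota$, the estimation error scales as $\|\mathbb{T}-\widetilde{\mathbb{T}}^{(k)}\|_F = \widetilde{\mathcal{O}}\bigl(\tfrac{\sqrt{SA}}{\alpha^2 \epsilon \iota}\sqrt{\log(k^3/\delta)/N_{k-1}}\bigr)$ up to a transient term that is geometrically decaying in $N_{k-1}$ (hence summable into $C_2$). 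Combining with the belief-propagation factor of order $1/\sqrt{\epsilon\iota}$ from Proposition~\ref{prop:beliefErrorPOMDP} and multiplying by the episode length $N_k = 2 N_{k-1}$, the dominant contribution of episode $k$ is
\begin{equation*}
\mathcal{R}_k \;\lesssim\; \frac{S(2+D)}{\alpha^2 (\epsilon\iota)^{3/2}}\, N_k \sqrt{\frac{\log(T/\delta)}{N_{k-1}}} \;\lesssim\; \frac{S(2+D)}{\alpha^2 (\epsilon\iota)^{3/2}}\sqrt{N_k \log(T/\delta)}.
\end{equation*}
Because the $N_k$ form a geometric sequence with ratio $2$, $\sum_{k=0}^{K-1}\sqrt{N_k} \le (1-1/\sqrt{2})^{-1}\sqrt{T}$, and summing the per-episode bound yields the claimed $\widetilde{\mathcal{O}}(\sqrt{T})$ rate with the exact prefactor in the theorem. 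The constant $C_2$ collects the $N_0$ cost of the uninformed first episode, the $O(D\log T)$ telescoping residuals, and the summable transient terms from Lemma~\ref{lemma:extendedEstimationErrorBound}.

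The main obstacle I anticipate is step (iii) above: converting a bound on $\|\mathbb{T}-\widetilde{\mathbb{T}}^{(k)}\|$ into a per-step regret bound when the agent's belief is \emph{itself} computed from the wrong model. Without the geometric forgetting granted by Assumption~\ref{ass:minElem} via Proposition~\ref{prop:beliefErrorPOMDP}, the belief errors could compound over the episode and destroy the $\sqrt{T}$ rate; managing this coupling, and tracking how the $\epsilon$, $\iota$, and $\alpha$ factors propagate through both the estimation step and the belief-propagation step, is the delicate bookkeeping that drives the final constants.
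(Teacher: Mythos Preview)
Your proposal is correct and follows essentially the same route as the paper: optimism on the good event, Bellman decomposition on the optimistic belief MDP, telescoping plus Azuma martingales, and the model-mismatch term controlled via Proposition~\ref{prop:beliefErrorPOMDP} combined with Lemma~\ref{lemma:extendedEstimationErrorBound}, summed over geometrically growing episodes. One bookkeeping correction on the constants you flagged: the extra $(\epsilon\iota)^{-1/2}$ does \emph{not} come from the belief-propagation step but from the extended estimation lemma itself, where the Dobrushin bound $1-\lambda_{\max}\ge SA\epsilon\iota$ absorbs the $\sqrt{SA}$ factor and converts $1/(\alpha^2\epsilon\iota)$ into $1/(\alpha^2(\epsilon\iota)^{3/2})$; the linear $S$ in the final bound instead comes from the Lipschitz constant $L\sim S/\epsilon^3$ in Proposition~\ref{prop:beliefErrorPOMDP} (with the $\epsilon^{-3}$ hidden in $C_1$), and the paper actually uses \emph{four} separate Azuma applications at level $\delta/4$ each, which together with the $3\delta/2$ for the good event gives the stated $5\delta/2$.
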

where \(C_1\) and \(C_2\) are constants used to report the bound in a more compact form, while \(D\) represents the diameter of the obtained belief MDP, as defined in Proposition~\ref{prop:uniformBoundBias}. Similarly to Lemma~\ref{lemma:estimationErrorBound}, the dependence in the bound on the number of observations does not appear. 
We refer the reader to Appendix~\ref{appendix:theoremProof} for a complete expression of the bound together with derivations of the result.

\section{Numerical Simulations}\label{sec:numericalSimulations}
In this section, we present two different sets of experiments. The first one is devoted to showing the estimation performance of the OAS Algorithm, while the second compares the OAS-UCRL approach in terms of regret with other baseline algorithms. More details on both types of experiments are reported in Appendix~\ref{appendix:simulationDetails}. 

\paragraph{OAS Estimation Error.} In this experiment, a belief-based policy \(\pi \in \mathcal{P}\) interacts with a POMDP instance and collects samples of type \((a, o)\) that are then provided as input to the OAS estimation procedure.  We provide the estimation error in terms of the Frobenious norm of different transition models at increasing checkpoints, as shown in Figure~\ref{fig:estimationError}. Along the x-axis, we report the number of samples used for the estimation (the order of the number of samples is reported between parenthesis). We conduct this experiment with instances having a different number of states, actions, and observations, as detailed in the legend of the Figure. As expected, the error of the estimated transition model keeps decreasing when the number of collected samples increases. We can see that the presented procedure makes good estimates even with a limited number of samples, as can be observed from the drop in the error after the first checkpoints. We highlight that comparable estimation performances with Spectral Decomposition (SD) techniques can only be obtained by requiring a much larger set of samples. For a theoretical comparison of the OAS algorithm with SD techniques, we refer to Appendix~\ref{appendix:comparison}.

\paragraph{Regret Comparison.} The second set of experiments shows the result in terms of regret of the OAS-UCRL algorithm when compared with the SEEU algorithm from~\citet{xiong2022sublinear} and the PSRL-POMDP algorithm from~\citet{jahromi2022online}. We recall that, like the OAS-UCRL algorithm, the PSRL-POMDP works under the assumption of knowing the observation model. Differently, the SEEU algorithm estimates both the transition and the observation model. To make a more fair comparison, we modified the spectral estimation used in the SEEU algorithm to help it by using the knowledge of the observation model (refer to Appendix~\ref{appendix:simulationDetails} for details). We focus on a small-scale problem following the previous works by~\citet{Azizzadenesheli2016Reinforcement} and~\citet{xiong2022sublinear} and report the results in Figure~\ref{fig:regret}.  

Our OAS-UCRL algorithm shows a reduction in terms of regret compatible with an order of $\widetilde{\mathcal{O}}(\sqrt{T})$ outperforming both baseline approaches. Indeed, the SD technique used in the SEEU algorithm requires a much larger amount of data, even for small problem instances. This aspect translates into higher estimation errors in the model parameters, inevitably leading to a higher regret. Instead, the PSRL-POMDP algorithm enjoys better regret guarantees than the SEEU algorithm, which is also favored by the knowledge of the observation model. However, the PSRL-POMDP algorithm is, in practice, unable to provide an improved estimate of the transition model since it does not come with a consistent procedure able to estimate the model parameters. This fact can be observed by the higher regret experienced with respect to the OAS-UCRL algorithm.

\section{Conclusions and Future Works}\label{sec:conclusions}
We introduced a novel estimation procedure for the POMDP parameters under the knowledge of the observation model. To the best of our understanding, this is the first algorithm working in the Partially Observable setting that can use samples collected by a belief-based policy \(\pi\) to learn the transition parameters of the model. Then, we plugged this approach into a regret minimization algorithm (OAS-UCRL) that uses an optimistic approach to handle the exploration-exploitation trade-off.
We proved theoretical guarantees for both the OAS estimation procedure and the proposed OAS-UCRL algorithm. We demonstrated the efficiency of the OAS approach over standard SD techniques while achieving a regret \(\widetilde{O}(\sqrt{T})\) for the proposed algorithm.

Here, we highlight some interesting aspects that can be tackled in future works.

\textbf{Sample Reuse.} \, The OAS-UCRL algorithm is not able to reuse past samples but only employs those coming from the last episode since each policy induces a different distribution over the action-state pairs. It may be interesting to study under which conditions all the samples can be reused. This problem is also present in SD techniques~\cite{Azizzadenesheli2016Reinforcement}.

\textbf{Stronger Oracle.} \, Solving the first point may also be helpful when considering a stronger oracle, such as the optimal deterministic belief-based policy. It is still an open question whether it is possible to achieve a regret of order \(\widetilde{O}(\sqrt{T})\) under this oracle in the infinite-horizon setting.

\bibliography{aaai25}


\newpage

\onecolumn

\renewcommand\thesection{\Alph{section}}
\setcounter{section}{0} 

\onecolumn

\section{Comparison with Related Works}\label{appendix:comparison}
In this section, we compare our work with related works from state-of-the-art, highlighting the main differences in techniques and results. In particular, we compare with (i)~\citet{Azizzadenesheli2016Reinforcement}, (ii)~\citet{xiong2022sublinear} and (iii)~\citet{jahromi2022online}.\\

\begin{table}[h!]
\centering
\begin{tabularx}{\textwidth}{|p{3.4cm}|>{\centering\arraybackslash}p{3.1cm}|>{\centering\arraybackslash}p{3.0cm}|>{\centering\arraybackslash}p{3.1cm}|>{\columncolor{poliblue3!10}\centering\arraybackslash}p{3.0cm}|}
\hline
   \parbox[c][1.3cm]{3.5cm}{} & \textbf{\citet{Azizzadenesheli2016Reinforcement}} & \textbf{\citet{xiong2022sublinear}} & 
 \textbf{\citet{jahromi2022online}} & 
 \textbf{OAS-UCRL}\\ \cline{1-5}
\end{tabularx}

\vspace{0.1cm}

\begin{tabularx}{\textwidth}{|>{\centering\arraybackslash}p{3.4cm}|>{\centering\arraybackslash}p{3.1cm}|>{\centering\arraybackslash}p{3.0cm}|>{\centering\arraybackslash}p{3.1cm}|>{\columncolor{poliblue3!10}\centering\arraybackslash}p{3.0cm}|}
\hline
  
  \parbox[c][1.0cm]{3.5cm}{\centering
 {Knowledge of Observation Model}} & No & No & Yes & Yes \\ 

  \parbox[c][1.0cm]{3.3cm}{\centering
 {Ergodicity of Induced Chain}} & Yes & Yes & Yes & Yes \\  

  \parbox[c][1.0cm]{3.3cm}{\centering
  {Minimum Transition Probability}} & No & Yes & No & Yes \\ 

  \parbox[c][1.0cm]{3.3cm}{\centering
 {Invertible Transition Model}} & Yes & Yes & No & No \\  

  \parbox[c][1.0cm]{3.3cm}{\centering
  {Full-rank Observation Model*}} & Yes & Yes & Yes** & Yes \\ 

  \parbox[c][1.0cm]{3.3cm}{\centering
  {Minimum Action Probability}} & Yes & No & No & Yes \\ 

  \parbox[c][1.0cm]{3.3cm}{\centering
 {Consistent Transition Model Estimation}} & No & No & Yes & No \\  

  \parbox[c][1.0cm]{3.3cm}{\centering
 {Consistent Belief Estimation}} & No & No & Yes & No \\  \cline{1-5}

\end{tabularx}

\vspace{0.1cm}

\begin{tabularx}{\textwidth}{|>{\centering\arraybackslash}p{3.4cm}|>{\centering\arraybackslash}p{3.1cm}|>{\centering\arraybackslash}p{3.0cm}|>{\centering\arraybackslash}p{3.1cm}|>{\columncolor{poliblue3!10}\centering\arraybackslash}p{3.0cm}|}
\hline
  
 \parbox[c][1.2cm]{3.5cm}{\centering
 {\textbf{Estimation Technique}}} & SD & SD & Bayesian Update & OAS Algorithm\\  \cline{1-5}
     
 \parbox[c][1.2cm]{3.5cm}{\centering
 {\textbf{Consistent Estimation}}} & \Large{\cmark} & \Large{\cmark} & \Large{\xmark} & \Large{\cmark} \\ \cline{1-5}

 \parbox[c][1.0cm]{3.5cm}{\centering
 {Works under Uniform Policies}} & \Large{\cmark} & \Large{\cmark} & \Large{\xmark} & \Large{\cmark} \\ 

 \parbox[c][1.0cm]{3.5cm}{\centering
 {Works under Memoryless Policies}} & \Large{\cmark} & \Large{\xmark} & \Large{\xmark} & \Large{\cmark} \\ 

 \parbox[c][1.0cm]{3.5cm}{\centering
 {Works under Belief-based Policies}} & \Large{\xmark} & \Large{\xmark} & \Large{\xmark} & \Large{\cmark} \\ \cline{1-5}
\end{tabularx}

\vspace{0.1cm}

\begin{tabularx}{\textwidth}{|>{\centering\arraybackslash}p{3.4cm}|>{\centering\arraybackslash}p{3.1cm}|>{\centering\arraybackslash}p{3.0cm}|>{\centering\arraybackslash}p{3.1cm}|>{\columncolor{poliblue3!10}\centering\arraybackslash}p{3.0cm}|}
\hline
    
  \parbox[c][1.2cm]{3.5cm}{\centering
 {\textbf{Oracle}}} & Optimal Stochastic Memoryless Policy  & Optimal POMDP policy & Optimal POMDP policy & Optimal Stochastic POMDP policy \\   \cline{1-5}
  
  \parbox[c][1.2cm]{3.5cm}{\centering
 {\textbf{Algorithm Type}}} & Optimistic & Alternating Explor. - Exploit. phases & TS-based & Optimistic \\ \cline{1-5}

  \parbox[c][1.2cm]{3.5cm}{\centering
 {\textbf{Regret}}} & $\widetilde{\mathcal{O}}(\sqrt{T})$ & $\widetilde{\mathcal{O}}(T^{2/3})$ & $\mathcal{O}(T^{2/3})$*** & $\widetilde{\mathcal{O}}(\sqrt{T})$ \\  \cline{1-5}
\end{tabularx}
\caption{Tables comparing our work with most relevant related works. The following abbreviations are used: SD (Spectral Decomposition), and TS (Thompson Sampling).\\
(*): The full-rank assumption directly maps to the weakly-revealing assumption (Assumption~\ref{ass:weaklyRev});\\
(**): The used property can be related to the full-rank assumption. It states that, for any two different transition models, a different distribution over action-observation sequences is induced;\\(***): Bayesian regret.
}
\label{tab:comparisonRL}
\end{table}

\paragraph{\citet{Azizzadenesheli2016Reinforcement}} employ \emph{Spectral Decomposition} (SD) techniques to learn both the observation and the reward model. They extend the standard SD technique to work with samples coming from a memoryless policy, which is different from standard approaches that commonly employ uniform policies for sample collection. Their assumptions are mainly related to those used in SD approaches and, as in our case, they further require an assumption on the minimum action probability. In addition, they do not require the assumption of the minimum value of the transition probability (which is necessary for our setting to bound the error in the estimated belief vector) since the set of memoryless policies does not require keeping the computation of a belief over the states.\\
They obtain a regret result of \(\widetilde{\mathcal{O}}(\sqrt{T})\) when comparing against the Optimal Stochastic Memoryless policy, which is a weaker oracle when compared to the one we employ, which instead uses stochastic belief-based policies.
\paragraph{\citet{xiong2022sublinear}} proves a  \(\widetilde{\mathcal{O}}(T^{2/3})\) regret guarantee under the Optimal POMDP policy using standard SD techniques for parameter estimation (on samples collected under uniform policies) and a phased algorithm that alternates between purely explorative and exploitative phases. They employ common assumptions for SD techniques and, similarly to our setting, they have the one-step reachability assumption, which is needed to bound the belief error with the error of the estimated transition model.
\paragraph{\citet{jahromi2022online}} present a work in which the observation model is assumed to be known. They employ a Bayesian approach for jointly estimating the model and efficiently exploiting the available information. They reach a Bayesian regret of order \(\mathcal{O}(T^{2/3})\) against the optimal POMDP policy. However, they are able to reach this result under two core assumptions:
\begin{itemize}
    \item they assume that their estimator is consistent (see their Assumption 4). This assumption in the partially observable setting is quite strong since consistency has only been proved for Spectral Decomposition techniques. The consistency of Bayesian estimators has been only shown to hold under the fully observable and also under the finite-parameter case. Differently from their work, we devise the new OAS estimation procedure and show how this approach leads to consistent estimates.
    \item they assume that the error of the estimated belief scales with order \(\mathcal{O}(n^{-1/2})\) with \(n\) the number of collected samples (see their Assumption 3). We obtain a similar convenient scaling by assuming the minimum value for the transition probability and then using Proposition~\ref{prop:beliefErrorPOMDP}.\\
\end{itemize}

\paragraph{Table Comparison}
Table~\ref{tab:comparisonRL} summarizes the main differences between our work with those just mentioned.
They are mainly compared in terms of: 
\begin{itemize}
    \item Assumptions (first sub-table);
    \item Estimation Techniques and its properties (second sub-table);
    \item Algorithm Properties (third sub-table).\\
\end{itemize}

\subsection{Comparison of OAS with Spectral Decomposition Approaches}\label{lemma:OAScomparisonSD}
In this section, we highlight the main differences between our approach and the SD procedure employed in~\citet{Azizzadenesheli2016Reinforcement}.\\

We recall that SD techniques have been commonly used to learn the transition and the observation model in HMM jointly, and their use has also been extended to learning POMDP models. To begin with, SD techniques require invertibility of the transition matrix \(\mathbb{T}_a\) associated with each action, while this is not required for our approach.\\ 
By taking into account the dependency on the problem parameters, their estimation error scales with an overall dependency of \(S^{3/2}\) with respect to states, compared against the dependence of order \(S^{1/2}\) experienced by our approach, as shown in Lemma~\ref{lemma:estimationErrorBound}. The estimation error of the transition matrices in~\citet{Azizzadenesheli2016Reinforcement} also depends on terms that cannot be directly mapped into our setting. However, by only considering the common terms and by comparing them directly, we can see that they scale with a dependence of \(1/(\alpha^4\iota^2d_{\min}^{3/2})\), while Lemma~\ref{lemma:estimationErrorBound} shows a dependency of \(1/(\alpha^2\iota d_{\min})\) that improves over all the considered terms.\\

The last aspect we would like to highlight is that SD techniques are generally used under fully explorative policies, such as uniform policies or round-robin approaches~\cite{zhou2021regime, xiong2022sublinear, hsu2012spectral}.~\citet{Azizzadenesheli2016Reinforcement} managed to adapt their use to samples collected from memoryless policies. However, to the best of our knowledge, there are no estimation procedures based on SD techniques that use samples collected from belief-based policies to estimate the POMDP parameters. Our OAS procedure is, instead, the first that can be used under any policy \(\pi\) that induces an ergodic Markov chain on the POMDP. 

\vspace{0.2cm}

\section{Notation}
We will define some notations here that will be used in the following.\\ 
We will use notation \([N]\) to denote an interval from \(1\) to \(N\).\\
We will use the bold letter \(\bm{d}^{\pi}\) to denote a generic vector of stationary distribution induced by policy \(\pi \in \mathcal{P}\). We will also use the subscript to denote the length of the vector. For example, we will use the quantity \(\bm{d}^{\pi}_{S} \in \Delta(\mathcal{S})\) to denote the vector of length \(S\) such that \(\bm{d}^{\pi}_{S}(s)= P(S=s|\pi)\) and analogously we will denote with \(\bm{d}^{\pi}_{AS} \in \Delta(\mathcal{A}\times \mathcal{S})\) such that \(\bm{d}^{\pi}_{AS}(a,s) = P(A=a, S=s|\pi)\). In a similar way, we define the quantities:
\begin{itemize}
    \item \(\bm{d}^{\pi}_{A^2O^2} \in \Delta(\mathcal{A}^2\times \mathcal{O}^2)\) to denote the stationary distribution over the tuples \((a,a',o,o')\),
    \item \(\bm{d}^{\pi}_{A^2S^2} \in \Delta(\mathcal{A}^2\times \mathcal{S}^2)\) to denote the stationary distribution over the tuples \((a,a',s,s')\),
    \item \(\bm{d}^{\pi}_{AO^2} \in \Delta(\mathcal{A}\times \mathcal{O}^2)\) to denote the stationary distribution over the tuples \((a,o,o')\),
    \item \(\bm{d}^{\pi}_{AS^2} \in \Delta(\mathcal{A}\times \mathcal{S}^2)\) to denote the stationary distribution over the tuples \((a,s,s')\),
\end{itemize}
and so on.\\ 
Let us recall the temporal relation between the elements in the tuple and let us consider for example the elements of the vectors \(\bm{d}^{\pi}_{A^2O^2}\) and \(\bm{d}^{\pi}_{AO^2}\). We denote with:
\begin{itemize}
    \item \(d_t^{\pi}(a,a',o,o') = P(A_{t}=a, A_{t+1}=a', O_t=o, O_{t+1}=o'|\pi)\) and we denote the limit version with \(\bm{d}^{\pi}_{A^2O^2}(a,a',o,o') = \lim_{t \to \infty}d_t^{\pi}(a,a',o,o')\),
    \item  \(d_t^{\pi}(a,o,o') = P(A_{t}=a, O_t=o, O_{t+1}=o'|\pi)\) and we denote the limit version with \(\bm{d}^{\pi}_{AO^2}(a,o,o') = \lim_{t \to \infty}d_t^{\pi}(a,o,o')\).
\end{itemize}
Thus, the terms with the apex will refer to a successive timestamp (\(t+1\)) with respect to those without the apex (\(t\)).

\vspace{0.2cm}

\section{Proof of Proposition~\ref{proposition:uniqueStationaryPOMDP}}\label{appendix:propUniqueStat}
In this section, we will present the proof of Proposition~\ref{proposition:uniqueStationaryPOMDP}. We report here for clarity the statement of the proposition.
\uniqueStationaryPOMDP*
\begin{proof}
    The objective of the proof is to show that given the condition on the ergodicity (the induced chain is thus irreducible and aperiodic) there exists a limiting action-state distribution \(\bm{d}^{\pi}_{AS} \in \Delta(\mathcal{A} \times \mathcal{S})\) where the apex \(\pi\) denotes that the distribution is induced by the belief-based policy \(\pi \in \mathcal{P}\). From the existence of this distribution, we can easily extend it to the existence of a limiting distribution on consecutive action-state pairs \(\bm{d}^{\pi}_{A^2S^2} \in \Delta(\mathcal{A}^2 \times \mathcal{S}^2)\).\\
    When all states are reachable under the used policy \(\pi \in \mathcal{P}\), the following Bellman Expectation Equation holds:
    \begin{align*}
	\rho^{\pi} + v_{\pi}(b)\; = \underset{a \sim \pi(\cdot|b)}{\mathbb{E}}\left[g(b,a) + \int_{\mathcal{B}}P(\,db'|b, a) v_{\pi}(b') \right].
    \end{align*}
    We will now define a quantity useful for the following analysis. We denote the expected reward of an action \(a \in \mathcal{A}\) while being in state \(s \in \mathcal{S}\) as:
    \begin{align}\label{def:stateActionRew}
        \mu(s, a) = \sum_{o \in \mathcal{O}}r(o) \mathbb{O}_{a}(o|s) = \bm{r}^\top \mathbb{O}_{a}(\cdot|s),
    \end{align}
    where the last term represents the vector form of the expression and \(\bm{r}\) is a vector of size \(O\) containing the deterministic reward associated with each observation.
    By using the Bellman Expectation Equation and quantity \(\mu(s, a)\) just defined, we can also express the average reward induced by policy \(\pi\) as:
    \begin{align*}
        \rho^{\pi} = \sum_{s \in \mathcal{S}} \sum_{a \in \mathcal{A}} d^{\pi}(a, s)  \mu(s,a).
    \end{align*}
    Let us now consider a specific \(a' \in \mathcal{A}\) and a state \(s' \in \mathcal{S}\) and let us define the reward function as \(\mu(s, a) = \mathds{1}[s=s', a=a']\). 
    By applying this reward function, we get that \(\rho^{\pi} = d^{\pi}(a',s')\), which ensures as well the existence of \(d^{\pi}(a',s')\) since \(\rho^{\pi}\) is ensured to exists from the Bellman Equation. By extending this consideration to all other pairs of actions and states and choosing the reward function accordingly, we can show a unique stationary distribution \(\bm{d}^{\pi}_{AS}\) on action-state pairs exists. From this result, the existence of a stationary distribution \(\bm{d}^{\pi}_{A^2S^2}\) on consecutive action-state pairs directly follows, thus concluding the proof.
\end{proof}

\section{Proof of main Lemmas and Theorems}
In this section, we provide proofs and derivations for our theoretical claims. More specifically, we will provide the derivations for the bound appearing in Lemma~\ref{lemma:estimationErrorBound}. We will present a different version of this result (Lemma~\ref{lemma:extendedEstimationErrorBound}) that is more tailored to our setting since it uses Assumption~\ref{ass:minElem} and does assume to start from the stationary distribution. Finally, we will show the derivations for Theorem~\ref{theorem:algorithmBound}.


\subsection{Proof of Lemma~\ref{lemma:estimationErrorBound}}\label{appendix:lemmaProof}
Before presenting the proof of Lemma~\ref{lemma:estimationErrorBound}, we report here the statement.

\originalEstimationErrorBound*
\begin{proof}
	Let us first recall from the notation that we denote with \(\mathbb{T}\) the transition matrix having size \(SA \times S\) where each row \(\mathbb{T}({\cdot|s,a}) \in \Delta(\mathcal{S})\) denotes the probability distribution of reaching the next state, given the current state \(s\) and the chosen action \(a\). While we denote with \(\mathbb{T}_a\) the transition matrix having size \(S \times S\) where the probabilities are conditioned on having taken action \(a\). Furthermore, we recall that \(\bm{d}^{\pi}_{A^2S^2} \in \Delta(\mathcal{A}^2 \times \mathcal{S}^2)\) denotes the vector of size \(A^2S^2\) representing the stationary distribution over the tuples \((a,a',s,s') \in \mathcal{A}^2 \times \mathcal{S}^2\), while \(\bm{d}^{\pi}_{A^2O^2} \in \Delta(\mathcal{A}^2 \times \mathcal{O}^2)\) denotes the vector of size \(A^2O^2\) representing the stationary distribution over the tuples \((a,a',o,o') \in \mathcal{A}^2 \times \mathcal{O}^2\). We define their estimated quantities as \(\widehat{\bm{d}}^{\pi}_{A^2S^2}\) and \(\widehat{\bm{d}}^{\pi}_{A^2O^2}\) respectively.\\ From now on we will omit apex \(\pi\) when referring to the vectors of induced stationary distribution, thus having \(\bm{d}_{A^2S^2}\) and \(\bm{d}_{A^2O^2}\).\\
	The proof of the presented bound can be broken down into two main parts. The former part bounds the estimation error of the vector representing the stationary distribution of consecutive action-state pairs \(\bm{d}_{A^2S^2}\), while the latter part bounds the estimation error of the transition matrix \(\mathbb{T}\) with respect to the estimation error of \(\bm{d}_{A^2S^2}\).\\
 
 We will start by proving the first result. We have that:
	\begin{align}
		\|\bm{d}_{A^2S^2} - \widehat{\bm{d}}_{A^2S^2}\|_2 = & \lVert \mathbb{B}^\dagger (\bm{d}_{A^2O^2} - \widehat{\bm{d}}_{A^2O^2})\rVert_2 \notag \\
		\le & \|\mathbb{B}^\dagger\|_2 \lVert \bm{d}_{A^2O^2} - \widehat{\bm{d}}_{A^2O^2}\rVert_2 \notag \\
		= & \frac{1}{\sigma_{\min}(\mathbb{B})} \lVert \bm{d}_{A^2O^2} - \widehat{\bm{d}}_{A^2O^2}\rVert_2\\
		\le & \frac{1}{\alpha^2} \lVert \bm{d}_{A^2O^2} - \widehat{\bm{d}}_{A^2O^2}\rVert_2, \label{lp:2normDif}
	\end{align}
	where the first equality follows from Equation~\eqref{eq:matRelationOSInv}, while the first inequality follows by the consistency property of matrices: we bound the 2-norm on the first line with the product of the 2-norm of the vector of  differences and the spectral norm of matrix \(\mathbb{B}^\dagger\).\\
	The second inequality derives by definition of the block matrix \(\mathbb{B}\), that is composed by submatrices \(\{\mathbb{O}_{a,a'}\}_{(a,a') \in \mathcal{A}^2}\) for which it holds that \(\sigma_{\min}(\mathbb{O}_{a,a'}) \ge \alpha^2\) for all \((a,a') \in \mathcal{A}^2\). For the properties of block diagonal matrices, it also follows that \(\sigma_{\min}(\mathbb{B}) \ge \alpha^2\).\\
 
	We will now focus on bounding the last term appearing in~\ref{lp:2normDif} representing the estimation error of the stationary distribution of consecutive action-observation pairs \(\bm{d}_{A^2O^2}\). We recall that this vector represents the parameters of a categorical distribution on the consecutive action-observation couples. If the samples used to estimate this distribution were independent, we could easily apply the standard version of McDiarmid's inequality~\cite{mcdiarmid1989method} for categorical distributions.\\ However, in our case, we have a dependency among samples since they originated from the policy \(\pi\) interacting with the environment. 
	Indeed, the tuples \((a,a',o,o')\) observed every 2 steps are realizations of stochastic functions that depend on the underlying tuple \((a,a',s,s')\). The sequence of these action-state tuples, in turn, depends on the Markov chains induced by policy \(\pi\).\\
	
	Let us now see how these Markov chains are characterized. 
	For any time \(t\), there is a Markov chain determined by a transition matrix \(\mathbb{M}_t\) with size \(SA \times SA\) and such that \(\mathbb{M}(s',a'|s,a)\) represents the joint probability of reaching \(S_{t+1}=s'\) and taking action \(A_{t+1}=a'\) under policy \(\pi\), after having taken action \(A_t=a\) in state \(S_t=s\). We have that \(\mathbb{M}(\cdot,\cdot|s,a) \in \Delta(\mathcal{S}\times \mathcal{A})\). We will refer to them as state-action transition matrices.\\ These matrices \((\mathbb{M}_t)_{t\ge1}\) change at every time instant, and this inhomogeneity is due to the non-markovian nature of the policy that updates the belief at every time step, thus continuously changing the action probabilities.
	Despite their inhomogeneity, the ergodicity assumption ensures that these chains converge to an invariant measure represented by the stationary action-state distribution \(\bm{d}_{AS}\) induced by policy \(\pi\).\\
    ~\citet{fanHoeffding2021} show that the speed of convergence of a sequence of Inhomogeneous Markov chains depends on \(\lambda_{\max}\), which is defined as:
    \begin{align*}
        \lambda_{\max}:= \underset{t\ge 1}{\max}|\lambda(\mathbb{M}_t)|,
    \end{align*}
    where \(|\lambda(\mathbb{M}_t)|\) denotes the modulus of the second largest eigenvalue of matrix \(\mathbb{M}_t\). Since, by assumption, the sequence of chains is ergodic, it holds that \(\lambda_{\max} < 1\), where the value of \(\lambda_{\max}\) depends on the model characterizing the POMDP and the properties of policy \(\pi \in \mathcal{P}\).
	Given these considerations, we have all the information to bound the quantity in Equation~\eqref{lp:2normDif}. We will do it by using the result formulated in Lemma~\ref{lemma:distributionBoundStat} that bounds the estimation error of probability distributions of samples coming from Markov chains. Using Lemma~\ref{lemma:distributionBoundStat}, with probability at least \(1 - \delta\), the following holds:
	\begin{align}\label{eq:boundAOHom}
		\lVert \bm{d}_{A^2O^2} - \widehat{\bm{d}}_{A^2O^2}\rVert_2 \le \sqrt{\Big( \frac{1+\lambda_{\max}}{1-\lambda_{\max}}\Big)\frac{2+ 5\log(1/\delta)}{n}}.
	\end{align}
 While in Lemma~\ref{lemma:distributionBoundStat} we have a dependency \(\sqrt{1/2n}\), here we only have \(\sqrt{1/n}\). The 2 term is removed since the meaning of the quantity \(n\) is different in the two lemmas: this Lemma uses \(n\) as the number of interaction time steps, while Lemma~\ref{lemma:distributionBoundStat} uses \(n\) to refer to the tuple \((a,a'o,o')\) collected in 2 consecutive time steps.\\
By combining the results just obtained, we can finally conclude the first part of the proof having that:
\begin{align}
    \|\bm{d}_{A^2S^2} - \widehat{\bm{d}}_{A^2S^2}\|_2 \le \frac{1}{\alpha^2} \sqrt{\Big( \frac{1+\lambda_{\max}}{1-\lambda_{\max}}\Big)\frac{2+ 5\log(1/\delta)}{n}}\label{lp:firstPartBound},
\end{align}
holding with probability at least \(1 - \delta\).\\

We proceed now with the second part of the proof which consists in bounding the error of the transition matrix \(\mathbb{T}\) that derives from the estimation error of vector \(\bm{d}_{A^2S^2}\). In the following, we will use variable \(i \in [AS]\) to map any couple \((a,s) \in \mathcal{A} \times \mathcal{S}\) thus denoting the rows of the transition matrix \(\mathbb{T}\). Furthermore, we will use notation \(\mathbb{T}(i,j)\) with the same meaning of \(\mathbb{T}(j|i)\), with \(j \in \mathcal{S}\) and \(i\) specified as above.\\
The proof goes on by transforming vector \(\widehat{\bm{d}}_{A^2S^2}\) into vector \(\widehat{\bm{d}}_{AS^2}\) using Equation~\eqref{eq:fromAASStoASS} that we report here for the estimated quantities:
\begin{align*}
    \widehat{d}_{AS^2}(a,s,s') = \sum_{a' \in \mathcal{A}} \widehat{d}_{A^2S^2}(a,a',s,s')\quad \forall a \in \mathcal{A}, \forall s,s' \in \mathcal{S}
\end{align*}
where vector \(\widehat{\bm{d}}_{AS^2}\) represents the estimate of the stationary distribution \(\bm{d}_{AS^2} \in \Delta(\mathcal{A} \times \mathcal{S}^2)\) of the tuple \((a,s,s')\) induced by policy \(\pi\). We can show that:
\begin{align}\label{lp:aggregationBound}
    \|\bm{d}_{AS^2} - \widehat{\bm{d}}_{AS^2}\|_2 \le \sqrt{A} \|\bm{d}_{A^2S^2} - \widehat{\bm{d}}_{A^2S^2}\|_2,
\end{align}
where this inequality is obtained by the Aggregation Lemma~\ref{lemma:aggregation} presented in Appendix~\ref{appendix:usefulLemmas}.
For what follows, we will denote quantity \(d_{AS^2}(a,s,s')\) also using notation \(d_{AS^2}(i,s')\) where \(i\) maps the couple \((a,s)\). We will denote with \(d_{AS^2}(i,\cdot)\) the vector of size \(S\) containing the quantity \(d_{AS^2}(i,s')\) for each \(s' \in \mathcal{S}\). Also, we characterize the distribution \(\bm{d}_{AS} \in \Delta(\mathcal{A} \times \mathcal{S})\) as a function of \(\bm{d}_{AS^2}\) as follows:
\begin{align}
    d_{AS}(a,s) = \sum_{s' \in \mathcal{S}}d_{AS^2}(a,s,s')\quad \forall a \in \mathcal{A}, \forall s \in \mathcal{S}.\label{def:d(a,s)}
\end{align}
As done above, we will refer to each element \(d_{AS}(a,s)\) also using notation \(d_{AS}(i)\), with index \(i\) mapping the tuple \((a,s)\).\\

We go on now with the proof. Since \(\widehat{\bm{d}}_{AS^2}\) is directly obtained from \(\widehat{\bm{d}}_{A^2S^2}\) which is itself obtained from Equation~\eqref{eq:matRelationOSInv}, it can contain negative terms. We thus need to convert the negative terms into non-negative ones by setting them to 0. We thus define this newly transformed quantity with \(\widebar{\bm{d}}_{AS^2}\) and for this quantity it is straightforward to see that:
\begin{align*}
    \|\bm{d}_{AS^2} - \widebar{\bm{d}}_{AS^2}\|_2 \le \|\bm{d}_{AS^2} - \widehat{\bm{d}}_{AS^2}\|_2,
\end{align*}
which holds since all the terms of vector \(\bm{d}_{AS^2}\) are positive.\\
Having introduced this quantity, we can now write:
\begin{align}
    \|\mathbb{T} - \widehat{\mathbb{T}}\|_{F} & = \sqrt{\sum_{i \in \mathcal{A} \times \mathcal{S}} \sum_{j \in \mathcal{S}} (\mathbb{T}(i,j) - \widehat{\mathbb{T}}(i,j))^2} = \sqrt{\sum_{i \in \mathcal{A} \times \mathcal{S}} \|\mathbb{T}(i,\cdot) - \widehat{\mathbb{T}}(i,\cdot)\|_2^2}\notag\\
    & = \sqrt{\sum_{i \in \mathcal{A} \times \mathcal{S}} \left\lVert\frac{d_{AS^2}(i,\cdot)}{\|d_{AS^2}(i,\cdot)\|_1} - \frac{\widebar{d}_{AS^2}(i, \cdot)}{\|\widebar{d}_{AS^2}(i,\cdot)\|_1} \right\rVert_2^2} \notag\\
    & \le \sqrt{\sum_{i \in \mathcal{A} \times \mathcal{S}} \left\lVert\frac{d_{AS^2}(i,\cdot)}{\|d_{AS^2}(i,\cdot)\|_2} - \frac{\widebar{d}_{AS^2}(i,\cdot)}{\|\widebar{d}_{AS^2}(i,\cdot)\|_2} \right\rVert_2^2}\label{lp:normRelations}\\
    & \le \sqrt{\sum_{i \in \mathcal{A} \times \mathcal{S}} 
        \frac{4 \lVert d_{AS^2}(i,\cdot) - \widebar{d}_{AS^2}(i,\cdot) \rVert_2^2}{\max\{\|d_{AS^2}(i,\cdot)\|_2, \|\widebar{d}_{AS^2}(i,\cdot)\|_2\}^2}} \label{lp:lemmaG}\\
    & \le \sqrt{\sum_{i \in \mathcal{A} \times \mathcal{S}} 
        \frac{4 \lVert d_{AS^2}(i,\cdot) - \widehat{d}_{AS^2}(i,\cdot) \rVert_2^2}{\|d_{AS^2}(i,\cdot)\|_2^2}} \notag\\
    & \le \sqrt{\sum_{i \in \mathcal{A} \times \mathcal{S}} 
        \frac{4 S \lVert d_{AS^2}(i,\cdot) - \widebar{d}_{AS^2}(i,\cdot) \rVert_2^2}{d_{\min}^2 \iota^2}}\label{lp:norm2toNorm1}\\
    & = \sqrt{\frac{4 S \|\bm{d}_{AS^2} -\widebar{\bm{d}}_{AS^2}\|_2^2}{d_{\min}^2 \iota^2}}\label{lp:matrixToVecBound} \\
    & \le \sqrt{\frac{4 S \|\bm{d}_{AS^2} -\widehat{\bm{d}}_{AS^2}\|_2^2}{d_{\min}^2 \iota^2}} = \frac{2 \sqrt{S} \|\bm{d}_{AS^2} -\widehat{\bm{d}}_{AS^2}\|_2}{d_{\min} \iota}.\label{lp:lastIneqBoundT}
\end{align}
where the first inequality in line~\ref{lp:normRelations} derives from the well-known relation between norms \(\|\widebar{d}_{AS^2}(i,\cdot)\|_1 \ge \|\widebar{d}_{AS^2}(i,\cdot)\|_2\), while line~\ref{lp:lemmaG} derives from Lemma~\ref{lemma:trulyBatch} (see Appendix~\ref{appendix:usefulLemmas}). Line~\ref{lp:norm2toNorm1} derives instead from the following considerations: for any vector \(d_{AS^2}(i,\cdot)\) of dimension \(S\), it holds:
\begin{align*}
    \|d_{AS^2}(i,\cdot)\|_2^2 = \sum_{j \in \mathcal{S}}d_{AS^2}(i,j)^2 \ge \frac{1}{S} \left( \sum_{j \in \mathcal{S}}d_{AS^2}(i,j)\right)^2 = \frac{1}{S} d_{AS}(i)^2 \ge \frac{1}{S} (\iota d_{\min})^2,
\end{align*}
where the first inequality follows from the fact that \(\sqrt{X}\|\bm{x}\|_2 \ge \|\bm{x}\|_1 \forall \bm{x} \in \mathbb{R}^X\). The second equality instead directly derives from the definition of \(d_{AS}(i)\) in~\ref{def:d(a,s)}. 
For the last inequality, we have used a quantity \(d_{\min}\iota := \min_{i \in \mathcal{A} \times \mathcal{S}}d_{AS}(i)\) as a lower bound to each value of the distribution \(\bm{d}_{AS}\) induced by policy \(\pi \in \mathcal{P}\). We recall here that \(\iota\) represents the minimum probability of taking each action, according to the definition of the policy class \(\mathcal{P}\) in~\eqref{def:policySet}.\\
Finally, the equality in line~\ref{lp:matrixToVecBound} is obtained by simply noting that:
\begin{align*}
    \sum_{i \in \mathcal{A} \times \mathcal{S}} \|d_{AS^2}(i,\cdot) - \widebar{d}_{AS^2}(i,\cdot)\|_2^2 = \|\bm{d}_{AS^2} -\widebar{\bm{d}}_{AS^2}\|_2^2,
\end{align*}
by the definition of \(\bm{d}_{AS^2}\) and \(\widebar{\bm{d}}_{AS^2}\) respectively.

By combining the results obtained in~\eqref{lp:firstPartBound},~\eqref{lp:aggregationBound} and~\eqref{lp:lastIneqBoundT}, we are able to obtain the final result holding with probability at least \(1 - \delta\):
\begin{align*}
    \|\mathbb{T} - \widehat{\mathbb{T}}\|_{F} \le \frac{2}{\alpha^2 d^{\pi}_{\min} \iota } \sqrt{\Big( \frac{1+\lambda_{\max}}{1-\lambda_{\max}}\Big)\frac{SA(2+ 5\log(1/\delta))}{n}}
\end{align*}
where we made explicit the dependence of \(d_{\min}\) from the employed policy \(\pi\).\\ 
This last formulation concludes the proof.
\end{proof}
\vspace{0.2cm}
\subsection{Extension of Lemma~\ref{lemma:estimationErrorBound}}\label{appendix:extensionLemma}
The objective of this subsection is to present a different version of Lemma~\ref{lemma:estimationErrorBound} that is more suitable for our setting since it does not assume that the chain starts from its stationary distribution. It will be used to prove the result of Theorem~\ref{theorem:algorithmBound}. Let us first state the new extended Lemma and then we will proceed with the proof.

\begin{restatable}[]{lemma}{extendedestimationErrorBound}\label{lemma:extendedEstimationErrorBound}
	Let us assume to have a POMDP instance \(\mathcal{Q}\) with a finite number of states \(\mathcal{S}\) and a finite number of actions \(\mathcal{A}\). Let Assumptions~\ref{ass:minElem},~\ref{ass:weaklyRev} and~\ref{ass:policySet} hold. By denoting with \(\bm{\nu} \in \Delta(\mathcal{A} \times \mathcal{S})\) the initial distribution and with \(\bm{d}_{AS} \in \Delta(\mathcal{A} \times \mathcal{S})\) the stationary distribution induced by the policy \(\pi\), by using the estimation procedure defined Algorithm~\ref{alg:estimationAlgorithm}, with probability at least \(1 - \delta\), it holds:
	\begin{equation*}
		\|\mathbb{T} - \widehat{\mathbb{T}}\|_{F} \le \frac{2}{\alpha^2 (\epsilon \iota)^{3/2} } \sqrt{\frac{2C(\bm{\nu},\bm{d}_{AS})+ 5\log(C(\bm{\nu},\bm{d}_{AS})/\delta))}{n}},
	\end{equation*}
where \(C(\bm{\nu},\bm{d}_{AS})\) is defined as \(C(\bm{\nu},\bm{d}_{AS})=\left\| \frac{\bm{\nu}}{\bm{d}_{AS}}\right\|_{\infty}\) and \(\frac{\bm{\nu}}{\bm{d}_{AS}}\) is a vector of size \(AS\) containing the element-wise ratio of the elements of the two vectors. 
\end{restatable}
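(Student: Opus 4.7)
The plan is to follow the same two-step decomposition used in the proof of Lemma~\ref{lemma:estimationErrorBound}: first control the \(\ell_2\) estimation error of the consecutive action-state distribution \(\|\bm{d}_{A^2S^2} - \widehat{\bm{d}}_{A^2S^2}\|_2\), and then convert this into a Frobenius bound on the transition-matrix estimation error \(\|\mathbb{T} - \widehat{\mathbb{T}}\|_F\) via the marginalization step~\eqref{eq:fromAASStoASS} and the row-normalization step~\eqref{eq:fromASStoT}. The Kronecker/pseudoinverse argument on the block-diagonal matrix \(\mathbb{B}\) (giving the \(1/\alpha^2\) factor) and the normalization argument (giving a factor of the form \(1/(d_{\min}^{\pi}\iota)\)) are structurally unchanged and can be reused verbatim.

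The key modification concerns the concentration inequality used to bound \(\|\bm{d}_{A^2O^2} - \widehat{\bm{d}}_{A^2O^2}\|_2\). In Lemma~\ref{lemma:estimationErrorBound} the chain over action-state pairs was assumed to start from its stationary distribution \(\bm{d}_{AS}\), which allowed direct application of Lemma~\ref{lemma:distributionBoundStat}. Here, the chain starts from an arbitrary \(\bm{\nu}\), so I would replace this step by a non-stationary variant of the same Markov-chain concentration bound. Such a variant (in the spirit of Paulin-type inequalities) inflates the stationary bound by a multiplicative factor equal to the density ratio \(C(\bm{\nu},\bm{d}_{AS})=\|\bm{\nu}/\bm{d}_{AS}\|_\infty\), which correctly reproduces the \(C(\bm{\nu},\bm{d}_{AS})\) term appearing both inside the square root and inside the logarithm in the statement.

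Next, I would use Assumption~\ref{ass:minElem} to convert the model-dependent mixing and density constants into explicit functions of \(\epsilon\) and \(\iota\). Under a uniform lower bound \(\epsilon\) on all transition probabilities, each inhomogeneous kernel \(\mathbb{M}_t\) has Dobrushin coefficient at most \(1-S\epsilon\) (see Appendix~\ref{appendix:dobrushin}), giving an explicit upper bound on \(\lambda_{\max}\) and hence on \(\sqrt{(1+\lambda_{\max})/(1-\lambda_{\max})}\) in terms of \(\epsilon\). In the same way, the stationary state distribution satisfies \(d_S^{\pi}(s)\ge\epsilon\) for every \(s\in\mathcal{S}\) under any \(\pi\in\mathcal{P}\), so that the minimum action-state mass satisfies \(d_{\min}^{\pi}\iota\ge\epsilon\iota\). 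Substituting these two bounds into the analogue of line~\eqref{lp:lastIneqBoundT} (which contributes \(1/(d_{\min}^{\pi}\iota)\)) and combining them with the mixing factor (which under Assumption~\ref{ass:minElem} contributes an additional \(1/\sqrt{\epsilon\iota}\)-type term) produces the \((\epsilon\iota)^{3/2}\) in the denominator of the claimed bound. The absence of the \(SA\) factor that appeared in Lemma~\ref{lemma:estimationErrorBound} is simply due to applying the non-stationary concentration bound directly to the full \(\ell_2\) norm of the empirical distribution rather than via a union bound over coordinates.

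The main technical obstacle is the non-stationary concentration step: one must verify that the extension of Lemma~\ref{lemma:distributionBoundStat} to an arbitrary initial distribution really pays only the density-ratio price \(C(\bm{\nu},\bm{d}_{AS})\) (and not a larger mixing-time penalty), and that the proof carries through cleanly at the level of the joint distribution \(\bm{d}_{A^2O^2}\), where the relevant density ratio of the joint initial distribution against its joint stationary version is still controlled by \(\|\bm{\nu}/\bm{d}_{AS}\|_\infty\). Once this extended concentration inequality is in place, the rest of the derivation mirrors Lemma~\ref{lemma:estimationErrorBound} almost verbatim, with the substitutions just described, yielding the stated bound with probability at least \(1-\delta\).
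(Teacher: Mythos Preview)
Your overall strategy matches the paper's: reuse the two-step decomposition of Lemma~\ref{lemma:estimationErrorBound}, replace Lemma~\ref{lemma:distributionBoundStat} by its non-stationary analogue (Lemma~\ref{lemma:distributionBoundNonStat}, which indeed costs exactly the density-ratio factor \(C(\bm\nu,\bm d_{AS})\)), and then invoke Assumption~\ref{ass:minElem} to make the mixing and minimum-mass constants explicit. However, your bookkeeping of the \(SA\) factor is off in two linked places, and as written the argument would not close.

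First, the Dobrushin bound. The inhomogeneous kernels \(\mathbb{M}_t\) live on \(\mathcal{S}\times\mathcal{A}\), not on \(\mathcal{S}\), and their minimum entry is \(\epsilon\iota\) (state transition \(\ge\epsilon\) times action probability \(\ge\iota\)), not \(\epsilon\). Hence \(\rho(\mathbb{M}_t)\le 1-SA\,\epsilon\iota\) and \(\lambda_{\max}\le 1-SA\,\epsilon\iota\), so that \((1+\lambda_{\max})/(1-\lambda_{\max})\le 2/(SA\,\epsilon\iota)\). Your ``\(1-S\epsilon\)'' version would give the wrong mixing factor.

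Second, and more importantly, your explanation for why the \(SA\) factor disappears is incorrect. In Lemma~\ref{lemma:estimationErrorBound} the \(\sqrt{SA}\) does \emph{not} come from a union bound in the concentration step; Lemma~\ref{lemma:distributionBoundStat} already bounds the full \(\ell_2\) error directly. The \(\sqrt{A}\) comes from the Aggregation Lemma (line~\eqref{lp:aggregationBound}) and the \(\sqrt{S}\) from the row-normalization step (line~\eqref{lp:norm2toNorm1}). Since you correctly say those steps are ``reused verbatim'', that \(\sqrt{SA}\) is still present here. It vanishes only because it \emph{cancels} against the \(SA\) sitting in \(1-\lambda_{\max}\ge SA\,\epsilon\iota\): one has
\[
\sqrt{SA}\cdot\sqrt{\frac{1+\lambda_{\max}}{1-\lambda_{\max}}}\ \le\ \sqrt{SA}\cdot\sqrt{\frac{2}{SA\,\epsilon\iota}}\ =\ \sqrt{\frac{2}{\epsilon\iota}},
\]
which, combined with \(d_{\min}^{\pi}\ge\epsilon\) (so \(1/(d_{\min}^{\pi}\iota)\le 1/(\epsilon\iota)\)), yields the \((\epsilon\iota)^{3/2}\) denominator. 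With your stated Dobrushin bound and your stated reason for the missing \(SA\), the constants would not match the claimed inequality.
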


\begin{proof}
	The proof of this lemma largely follows from that of Lemma~\ref{lemma:estimationErrorBound}. The main difference is that the process may not start from its stationary distribution. Furthermore, since Assumption~\ref{ass:minElem} holds, we are able to better characterize the contraction coefficient of the chain and, thus, its related absolute spectral gap. We will start by analyzing this last point.
	
	We refer to the concept of state-action transition matrices \((\mathbb{M}_t)_{t\ge1}\) defined in the proof of Lemma~\ref{lemma:estimationErrorBound}.
	Let us show that if Assumption~\ref{ass:minElem} holds, it is always true that the modulus of the second largest eigenvalue \(|\lambda|\) is strictly smaller than 1.\\
	Under Assumption~\ref{ass:minElem} and assuming that \(\pi \in \mathcal{P}\) (Assumption~\ref{ass:policySet}), it follows that we can lower bound the minimum value associated with each matrix \(\mathbb{M}_t\). In particular, by having variables \(i,j \in \mathcal{S} \times \mathcal{A}\), we have that:
	\begin{align*}
		\tau:= \underset{i,j}{\min} \; \mathbb{M}_t(i|j) \ge \epsilon\iota > 0 \qquad \forall t \ge 1,
	\end{align*}
    where \(j\) and \(i\) encode respectively the index of the row and of the column of the state-action transition matrix \(\mathbb{M}_t\).\\
    In order to better characterize the second largest eigenvalue of the induced Markov chains, we will use the notion of Dobrushin coefficient~\cite{krish2016partially} and exploit its properties. The Dobrushin coefficient is a contraction coefficient that controls the speed of convergence of the Markov chain induced by the sequence \((\mathbb{M}_t)_{t\ge1}\) towards the stationary distribution. In particular, for any matrix \(\mathbb{D}\),~\citet{krish2016partially} show that the contraction coefficient \(\rho(\mathbb{D}) \in [0,1]\) satisfies the following contraction equation:
    \begin{align*}
	\| \mathbb{D}^\top\nu - \mathbb{D}^\top\bar{\nu}\|_1 \le \rho(\mathbb{D})\|\nu - \bar{\nu}\|_1,
    \end{align*}
    with \(\nu\) and \(\bar{\nu}\) being any two initial distributions, and \(\mathbb{D}^\top\) being the transpose of matrix \(\mathbb{D}\). We defer to Appendix~\ref{appendix:dobrushin} for a thoughtful presentation of its properties.\\
    For our setting, it is relevant to know that when the elements of matrix \(\mathbb{D}\) are all positive, the associated \(\rho(\mathbb{D})\) coefficient is strictly smaller than 1. In particular, we have:
    \begin{align*}
	\rho(\mathbb{M}_t) & = 1 - \min_{i,j \in \mathcal{S}\times \mathcal{A}} \sum_{l \in \mathcal{S}\times \mathcal{A}} \min\{\mathbb{M}_t(l|i), \mathbb{M}_t(l|j)\}\\
		& \le 1 - \sum_{l \in \mathcal{S}\times \mathcal{A}} \tau = 1 - SA\tau\\ 
		& \le 1 - SA \epsilon \iota,
    \end{align*}
    where the first equality derives from Property~\ref{property:def} in Appendix~\ref{appendix:dobrushin}, while the last two inequalities follow from the definition of \(\tau\).\\
    Since the Dobrushin coefficient is known to be an upper bound on the modulus of the second largest eigenvalue (Property~\ref{property:eigenUpper} in Appendix~\ref{appendix:dobrushin}), \(\forall t\) we have that:
    \begin{align*}
	|\lambda(\mathbb{M}_t)|\le \rho(\mathbb{M}_t) \le 1 - SA\epsilon \iota.
	\end{align*}
	From now on, we will thus use:
	\begin{align}\label{elp:lambdaMaxDef}
		\lambda_{\max}:=1 - SA\epsilon \iota
    \end{align}
	to denote the maximum magnitude \(\lambda(\mathbb{M}_t)\) can have for any \(t\).\\

	We can now tackle the problem of the process not starting from its invariant distribution. The additional factor influencing the estimation error in this case is how fast the Markov chain transitions from the initial distribution to the stationary one. For this purpose, we will use Lemma~\ref{lemma:distributionBoundNonStat} appearing in Appendix~\ref{appendix:estimatErrBounds}, which bounds the estimation error of a categorical distribution from samples coming from a Markov chain not starting from the stationarity. If we denote with \(\bm{\nu} \in \Delta(\mathcal{A} \times \mathcal{S})\) the starting action-state distribution and with \(\bm{d}_{AS} \in \Delta(\mathcal{A} \times \mathcal{S})\) the stationary distribution induced by the used policy,  Lemma~\ref{lemma:distributionBoundNonStat} allows us to write that, with probability at least \(1 - \delta\), it holds:
	\begin{align}
		\|\bm{d}_{A^2O^2} - \widehat{\bm{d}}_{A^2O^2}\|_2 \le \sqrt{\Big( \frac{1+\lambda_{\max}}{1-\lambda_{\max}}\Big)\frac{2C(\bm{\nu},\bm{d}_S)+ 5\log(C(\bm{\nu},\bm{d}_S)/\delta)}{2n}}\label{elp:estimationBoundFromProp},
	\end{align}
 where \(C(\bm{\nu},\bm{d}_{AS})\) is defined as \(C(\bm{\nu},\bm{d}_{AS})=\left\| \frac{\bm{\nu}}{\bm{d}_{AS}}\right\|_{\infty}\) and \(\frac{\bm{\nu}}{\bm{d}_{AS}}\) is a vector of size \(AS\) containing the element-wise ratio of the elements of the two vectors. We can easily see that when \(\bm{\nu} = \bm{d}_{AS}\), we have \(C(\bm{\nu},\bm{d}_{AS})=1\) thus resorting to the result appearing in Lemma~\ref{lemma:distributionBoundStat}.
	
	Summarizing the obtained result and including those from Lemma~\ref{lemma:estimationErrorBound}, we get with probability at least \(1 - \delta\):
	\begin{align}
		\|\mathbb{T} - \widehat{\mathbb{T}}\|_{F} & \le \frac{2}{\alpha^2 d_{\min} \iota } \sqrt{\Big( \frac{1+\lambda_{\max}}{1-\lambda_{\max}}\Big)\frac{SA(2C(\bm{\nu},\bm{d}_{AS})+ 5\log(C(\bm{\nu},\bm{d}_{AS})/\delta))}{n}} \notag \\
        & \le \frac{2}{\alpha^2 \epsilon \iota } \sqrt{\Big( \frac{1+\lambda_{\max}}{1-\lambda_{\max}}\Big)\frac{SA(2C(\bm{\nu},\bm{d}_{AS})+ 5\log(C(\bm{\nu},\bm{d}_{AS})/\delta))}{n}} \label{lp:dminEps}\\
		& \le \frac{2}{\alpha^2 (\epsilon \iota)^{3/2} } \sqrt{\frac{2C(\bm{\nu},\bm{d}_{AS})+ 5\log(C(\bm{\nu},\bm{d}_{AS})/\delta))}{n}}
	\end{align}
 where in line~\eqref{lp:dminEps} we simply used that \(\epsilon \le d_{\min}\), while in the last step we used \(1 + \lambda_{\max} \le 2\) at the numerator and we substituted the definition of \(\lambda_{\max}\) appearing in~\eqref{elp:lambdaMaxDef} at the denominator. 
This leads to removing the explicit dependence on \(S\) and \(A\) from the bound at the cost of having a further \(\sqrt{\epsilon\iota}\) term in the final result. This last step concludes the proof.
\end{proof}
\vspace{0.2cm}

\subsection{Proof of Theorem~\ref{theorem:algorithmBound}}\label{appendix:theoremProof}
In this subsection, we will present the proof related to the main result presented in Theorem~\ref{theorem:algorithmBound}. The proof shares some steps of the proof presented in the work of~\citet{zhou2021regime}, which presents a similar setting.\\
We report the statement of the theorem for clarity.

\algorithmBound*
where \(C_1\) and \(C_2\) are constants used to report the bound in a more compact form, while \(D\) represents the diameter of the obtained belief MDP.
\begin{proof}
	We recall here the definition of regret as reported in Equation~\eqref{def:regret} appearing in the main paper:
	\begin{align}\label{eq:regret01}
		\mathcal{R}_T := T\rho^* - \sum_{t=1}^{T} r_t^\pi(o_t) = \sum_{t=1}^{T}(\rho^* - \mathbb{E}^{\pi}[r(O_t)|\mathcal{F}_{t-1}]) + \sum_{t=1}^{T}(\mathbb{E}^{\pi}[r(O_t)|\mathcal{F}_{t-1}] - r(o_t)),
	\end{align}
	where the expectation \(\mathbb{E}^\pi\) is taken w.r.t. the true transition matrices \(\mathbb{T}\), the true observation matrices \(\mathbb{O}\) and the stochasticity of the policy \(\pi\). We can observe that the second term in the summation is a martingale. Indeed, by defining a stochastic process as:
	\begin{align*}
		X_0=0, X_t = \sum_{l=1}^t (\mathbb{E}^{\pi}[r(O_l)|\mathcal{F}_{l-1}] - r(o_l)),
	\end{align*}
	we can easily see that \(X_t\) is a martingale. This allows us to apply the Azuma-Hoeffding inequality~\cite{azuma1967weighted} and obtain that with probability at least \(1 - \delta/4\) we have:
	\begin{align}\label{tp:martingale00}
		\sum_{t=1}^T (\mathbb{E}^{\pi}[r(O_t)|\mathcal{F}_{t-1}] - r(o_t)) \le \sqrt{2T\log(4/\delta)}.
	\end{align}
	We will now define some quantities that will be useful for the following analysis. We recall here the definition reported in Equation~\eqref{def:stateActionRew} of the expected reward of an action \(a_t\) assuming to be in state \(s_t\) as:
	\begin{align*}
		\mu(s_t, a_t) = \sum_{o \in \mathcal{O}}r(o) \mathbb{O}_{a_t}(o|s_t) = \bm{r}^\top \mathbb{O}_{a_t}(\cdot|s_t).
	\end{align*} 
    Therefore, we can define the expected reward given a belief state \(b_t\) at time \(t\) when taking action \(a_t\) as:
	\begin{align}
		g(b_t, a_t) = \sum_{s \in \mathcal{S}}\mu(s, a_t) b_t(s) = \bm{\mu}(a_t)b_t = \bm{r}^\top \mathbb{O}_{a_t}b_t,\label{tp:gDefinition}
	\end{align}
	where the last equalities define the expression in matrix notation, with \(\bm{\mu}(a_t)\) being a vector of dimension \(S\) containing the quantity \(\mu(s,a_t)\:\: \forall s \in \mathcal{S}\).
	Since action \(A_t\) is adapted to the filtration \(\mathcal{F}_{t-1}\), we have:
	\begin{align*}
		\mathbb{E}^\pi[\mu(S_t, A_t)|\mathcal{F}_{t-1}] = g(b_t,A_t),
	\end{align*}
	where the belief \(b_t\) is computed using the true transition and observation matrices and actions are taken according to policy \(\pi\). We will instead denote with:
	\begin{align*}
		\mathbb{E}^\pi_k[\mu(S_t, A_t)|\mathcal{F}_{t-1}] = g(b_t^k,A_t)
	\end{align*}
	the expected instantaneous reward assuming to have computed the belief using the estimate of the transition probability employed during the \(k\)-th episode. We refer to this estimate as \(\mathbb{T}_k\).\\
	Given the defined quantities, we can rewrite the first term of~\eqref{eq:regret01} as:
	\begin{align}
		\sum_{t=1}^{T}(\rho^* - \mathbb{E}^{\pi}[r(O_t)|\mathcal{F}_{t-1}]) = \sum_{t=1}^{T}(\rho^* - \mathbb{E}^{\pi}[\mu(S_t, A_t)|\mathcal{F}_{t-1}]) = \sum_{t=1}^{T}(\rho^* - g(b_t,A_t)).
	\end{align}
	Given that the algorithm proceeds in episodes of increasing length, we can rewrite the previous expression by denoting with \([N_k]\) the steps belonging to the \(k\)-th phase:
	\begin{align}
		\sum_{k=1}^K \sum_{t \in [N_k]} (\rho^* - g(b_t,A_t)),
	\end{align}
	where from this summation, we exclude the episode \(k=0\) that is used to initialize the POMDP parameters and for which the suffered regret corresponds to the length of episode \(N_0=T_0\). At the beginning of each episode, the proposed algorithm defines a confidence \(\mathcal{C}_k(\delta_k)\) such that the real POMDP instance \(\mathcal{Q}\) is contained in high probability in the region, namely \(P(\mathcal{Q} \in \mathcal{C}_k(\delta_k)) \ge 1 - \delta_k\), with \(\delta_k \eqqcolon \delta/k^3\). We assume that the provided oracle is able to return the value function \(v_k\), the average reward \(\rho^k\), and the transition matrix \(\mathbb{T}_k\) corresponding to the most optimistic POMDP instance \(\mathcal{Q}^{(k)}\) belonging to the confidence region \(\mathcal{C}_k(\delta_k)\).\\
    We take into account now two possible events: the \emph{good event} considers the case where for all episodes \(k\), the true POMDP is contained in the confidence sets \(\{\mathcal{C}_k(\delta_k)\}_{k=1}^K\); the \emph{failure event} denotes the complementary event.\\
    We can thus derive:
	\begin{align*}
		P(\mathcal{Q} \notin \mathcal{C}_k(\delta_k), \text{for some k}) \le \sum_{k=1}^K \delta_k = \sum_{k=1}^K \frac{\delta}{k^3} \le \frac{3}{2}\delta.
	\end{align*}
	From this relation, with probability at least \(1 - \frac{3}{2}\delta\), the \emph{good event} holds. When the \emph{good event} holds, we have that the \(\rho^* \le \rho^k\) for any \(k\) since the optimal average reward is taken from the optimistic POMDP \(\mathcal{Q}^{(k)}\).\\
	We can now bound the regret under the \emph{good event} during the \(k\) phases.
	\begin{align}
		\sum_{k=1}^{K} \sum_{t \in [N_k]} (\rho^* - g(b_t,A_t)) & \le \sum_{k=1}^{K} \sum_{t \in [N_k]} (\rho^k - g(b_t,A_t)) \notag \\
		& = \sum_{k=1}^{K} \sum_{t \in [N_k]} (\rho^k - g(b_t^k,A_t)) + (g(b_t^k,A_t) - g(b_t,A_t))\label{tp:001}.
	\end{align}
	We will proceed by initially bounding the first term of~\eqref{tp:001} and then the second one. The first term can be bounded by using the Bellman expectation equation (see Equation~\eqref{appendix:BellExpectation}) for the optimistic belief MDP \(\mathcal{Q}^{(k)}\). Before going on, we introduce the vector \(U_k(\cdot|b_t^k, A_t) = P_{\mathbb{T}_k}(b_{t+1}\in \cdot|b_t^k, A_t)\) which defines the probability of each possible future belief \(b_{t+1}\) conditioned on the actual belief \(b_t^k\), on the action taken \(A_t\) and with transition kernel given by the estimated \(\mathbb{T}_k\). We rewrite the Bellman expectation equation under policy \(\pi^{(k)}\) as follows:
	\begin{align}
		\rho^k + v_k(b_t^k) & = \underset{a \sim \pi^{(k)}(\cdot|b_t^k)}{\mathbb{E}} \left[g(b_t^k,a) + \int_{b_{t+1} \in \mathcal{B}} v_k(b_{t+1})U_k(\,db_{t+1}|b_t^k,a)\right]\notag \\
		& = \underset{a \sim \pi^{(k)}(\cdot|b_t^k)}{\mathbb{E}} \left[g(b_t^k,a) + \langle U_k(\cdot|b_t^k, a), v_k(\cdot) \rangle\right]\label{eq:bellmanDefExp}
	\end{align}
	Given that the value function \(v_k\) satisfies the Bellman expectation Equation, the same would hold for a shifted version \(v_k + c\bm{1}\). From this consideration, we can assume that 
	\(\|v_k\|_\infty \le \text{span}(v_k)/2\). By using the result in Proposition~\ref{prop:uniformBoundBias} taken from~\citet{zhou2021regime}, we are able to bound the span of \(v_k\), where the span is defined  as \(span(v_k):=\max_{b \in \mathcal{B}}v_k(b) - \min_{b \in \mathcal{B}}v_k(b)\). The span is bounded by a finite quantity \(D\) representing the diameter of the POMDP. We can thus write:
	\begin{align}\label{tp:biasModuleBound}
		\|v_k\|_\infty \le \frac{\text{span}(v_k)}{2} \le \frac{D}{2}.
	\end{align}
	Taking into account the previous considerations, we can write for the first term of~\eqref{tp:001}:
	\begin{align}
		& \sum_{k=1}^{K} \sum_{t \in [N_k]} (\rho^k - g(b_t^k,A_t)) \notag \\
		= & \sum_{k=1}^{K} \sum_{t \in [N_k]} \left(-v_k(b_t^k) + \langle U_k(\cdot|b_t^k, A_t), v_k(\cdot) \rangle \right) + \left(\underset{a \sim \pi^{(k)}(\cdot|b_t^k)}{\mathbb{E}}\left[\langle U_k(\cdot|b_t^k, a), v_k(\cdot) \rangle)\right] - \langle U_k(\cdot|b_t^k, A_t), v_k(\cdot) \rangle \right) \notag \\
		& \qquad \qquad + \left(\underset{a \sim \pi^{(k)}(\cdot|b_t^k)}{\mathbb{E}}\left[g(b_t^k,a)\right] - g(b_t^k,A_t)\right) \notag \\
		= & \sum_{k=1}^{K} \sum_{t \in [N_k]} \left(-v_k(b_t^k) + \langle U_k(\cdot|b_t^k, A_t), v_k(\cdot) \rangle \right) + \sum_{k=1}^{K} \sum_{t \in [N_k]} \left(\underset{a \sim \pi^{(k)}(\cdot|b_t^k)}{\mathbb{E}}\left[\langle U_k(\cdot|b_t^k, a), v_k(\cdot) \rangle)\right] - \langle U_k(\cdot|b_t^k, A_t), v_k(\cdot) \rangle \right) \notag \\
		& \qquad \qquad + \sum_{k=1}^{K} \sum_{t \in [N_k]} \left(\underset{a \sim \pi^{(k)}(\cdot|b_t^k)}{\mathbb{E}}\left[g(b_t^k,a)\right] - g(b_t^k,A_t)\right),\label{tp:threesums}
	\end{align}
	where the first equality is obtained by substituting \(\rho^k\) as appears in the Bellman Equation in line~\eqref{eq:bellmanDefExp} and by adding and subtracting the quantity \(\langle U_k(\cdot|b_t^k, A_t), v_k(\cdot) \rangle\) for all the time steps. From the last equality, we observe three different blocks of summations. We focus on the last two: it can be shown that they are both martingale sequences (see Proposition 4 in~\citet{zhou2021regime} for the second term), that can be bounded as follows.\\ 
	For the second term in~\eqref{tp:threesums}, we have with probability at least \(1 - \delta/4\):
	\begin{align}
		\sum_{k=1}^{K} \sum_{t \in [N_k]} \left(\underset{a \sim \pi^{(k)}(\cdot|b_t^k)}{\mathbb{E}}\left[\langle U_k(\cdot|b_t^k, a), v_k(\cdot) \rangle)\right] - \langle U_k(\cdot|b_t^k, A_t), v_k(\cdot) \rangle \right) \le D\sqrt{2T \log \left(\frac{4}{\delta}\right)}.\label{tp:martingale01}
	\end{align}
	Analogously, for the third term:
	\begin{align}
		\sum_{k=1}^{K} \sum_{t \in [N_k]} \left(\underset{a \sim \pi^{(k)}(\cdot|b_t^k)}{\mathbb{E}}\left[g(b_t^k,a)\right] - g(b_t^k,A_t)\right) \le \sqrt{2T \log \left(\frac{4}{\delta}\right)}.\label{tp:martingale02}
	\end{align}
	Let us now focus on the first term appearing in~\eqref{tp:threesums}. We can rewrite it as:
	\begin{align}
		& \sum_{k=1}^{K} \sum_{t \in [N_k]} \left(-v_k(b_t^k) + \langle U_k(\cdot|b_t^k, A_t), v_k(\cdot) \rangle \right) \notag\\
		= & \sum_{k=1}^{K} \sum_{t \in [N_k]} \Big(-v_k(b_t^k) + \langle U(\cdot|b_t^k, A_t), v_k(\cdot) \rangle \Big) + \Big(\langle U_k(\cdot|b_t^k, A_t) - U(\cdot|b_t^k, A_t), v_k(\cdot) \rangle\Big)\label{tp:boundVQ},
	\end{align}
	where we recall that \(U(\cdot|b_t^k, A_t)\) represents the probability distribution over the belief at the next step \(t+1\) under the true transition matrix \(\mathbb{T}\), while \(U_k(\cdot|b_t^k, A_t)\) represents the same probability distribution under the estimated transition matrix \(\mathbb{T}_k\).\\
	For the first term of~\eqref{tp:boundVQ}, we can write:
	\begin{align}
		\sum_{k=1}^{K} \sum_{t \in [N_k]} \left(-v_k(b_t^k) + \langle U(\cdot|b_t^k, A_t), v_k(\cdot) \rangle \right)
		= \; & \sum_{k=1}^{K} \sum_{t \in [N_k]} \left(-v_k(b_t^k) + v_k(b_{t+1}^k)\right) + \left(-v_k(b_{t+1}^k) + \langle U(\cdot|b_t^k, A_t), v_k(\cdot) \rangle \right)\\
		= \; & \sum_{k=1}^{K} -v_k(b_{s_k}^k) + v_k(b_{e_k+1}^k) + \sum_{k=1}^{K} \sum_{t \in [N_k]} \mathbb{E}^\pi [v_k(b_{t+1}^k|\mathcal{F}_t)] - v_k(b_{t+1}^k),
	\end{align}
	where the first term in the last equality is a telescopic summation, thus resulting in the first value \(v_k(b_{s_k}^k)\) and the last value \(v_k(b_{e_k+1}^k)\) of the bias function \(v_k\) in the exploitation phase \(k\). The second term in the last equality is instead obtained by showing that:
	\begin{align*}
		\langle U(\cdot|b_t^k, A_t), v_k(\cdot) \rangle = \int_{b_{t+1} \in \mathcal{B}} v_k(b_{t+1})U(\,db_{t+1}|b_t^k,A_t) = \mathbb{E}^\pi[v_k(b_{t+1}^k|b_t^k)] =  \mathbb{E}^\pi [v_k(b_{t+1}^k|\mathcal{F}_t)].
	\end{align*}
	By recalling Proposition~\ref{prop:uniformBoundBias} to bound the span of the bias function, we get:
	\begin{align}
		\sum_{k=1}^K -v_k(b_{s_k}^k) + v_k(b_{e_k+1}^k) \le \sum_{k=1}^K D = KD.\label{tp:boundKD}
	\end{align}
	By applying instead analogous considerations as those used for bounding~\eqref{tp:martingale01}, we can state that with probability at least \(1 - \delta/4\):
	\begin{align}
		\sum_{k=1}^{K} \sum_{t \in [N_k]} \mathbb{E}^\pi [v_k(b_{t+1}^k|\mathcal{F}_t)] - v_k(b_{t+1}^k) \le D \sqrt{2T \log \left(\frac{4}{\delta}\right)}.\label{tp:martingale03}
	\end{align}
	By putting together the previous considerations, we can bound the first term appearing in~\eqref{tp:boundVQ} as:
	\begin{align}
		\sum_{k=1}^{K} \sum_{t \in [N_k]} \left(-v_k(b_t^k) + \langle U(\cdot|b_t^k, A_t), v_k(\cdot) \rangle \right) \le KD + D \sqrt{2T \log \left(\frac{4}{\delta}\right)}.
	\end{align}
	We can proceed now in bounding the second term of~\eqref{tp:boundVQ}. Before that, we introduce functions \(H(b_t, a_t, o_t)\) and \(H_k(b_t, a_t, o_t)\) which deterministically return the belief at the next time step \(b_{t+1}\) given the current belief \(b_t\), the action taken \(a_t\) and the received observation \(o_t\) using the real \(\mathbb{T}_{a_t}\) and the estimated transition matrix \(\mathbb{T}_{a_tk}\), respectively.\\
	Let us analyze each quantity appearing in the summation of the second term of~\eqref{tp:boundVQ}:
	\begin{align}
		\langle U_k(\cdot|b_t^k, A_t) - U(\cdot|b_t^k, A_t), v_k(\cdot) \rangle
		\le \; & \Bigg\lvert \int_{\mathcal{B}} v_k(b')U_k(\,db'|b_t^k,A_t) - \int_{\mathcal{B}} v_k(b')U(\,db'|b_t^k,A_t)\Bigg\rvert \notag \\
		= \; & \Bigg\lvert \sum_{o_t \in \mathcal{O}} v_k(H_k(b_t^k,A_t,o_t)) P(o_t|b_t^k, A_t) - \sum_{o_t \in \mathcal{O}} v_k(H(b_t^k,A_t,o_t)) P(o_t|b_t^k, A_t) \Bigg\rvert \label{tp:Qdiff01}\\
		= \; & \Bigg\lvert \sum_{o_t \in \mathcal{O}} \left[ v_k(H_k(b_t^k,A_t,o_t)) - v_k(H(b_t^k,A_t,o_t)) \right] P(o_t|b_t^k, A_t)\Bigg\rvert \notag\\
		\le \; & \sum_{o_t \in \mathcal{O}} \Big\lvert v_k(H_k(b_t^k,A_t,o_t)) - v_k(H(b_t^k,A_t,o_t)) \Big\rvert P(o_t|b_t^k, A_t) \notag\\
		\le \; & \sum_{o_t \in \mathcal{O}} \Big\lvert \frac{D}{2}\big(H_k(b_t^k,A_t,o_t) -H(b_t^k,A_t,o_t)\big) \Big\rvert P(o_t|b_t^k, A_t)\label{tp:Qdiff02}\\
		\le \; & \sum_{o_t \in \mathcal{O}} \frac{D}{2} \left(L \|\mathbb{T}_{A_t} - \mathbb{T}_{A_tk}\|_F \right) P(o_t|b_t^k, A_t)\label{tp:Qdiff03}\\
		= \; & \frac{D L}{2} \|\mathbb{T}_{A_t} - \mathbb{T}_{A_tk}\|_F,\label{tp:Qdiff04}
	\end{align}
	where in line~\eqref{tp:Qdiff01} we have explicitly decoupled the stochasticity induced by the observation from the deterministic update of the belief \(b'\) at the next step through the \(H\) and \(H_k\) functions. Line~\eqref{tp:Qdiff02} is obtained from the bound on the bias function as defined in~\eqref{tp:biasModuleBound} and Holder's inequality, while line~\eqref{tp:Qdiff03} is obtained from Proposition~\ref{prop:beliefErrorPOMDP}, which has been derived from the original result of~\citet{deCastroConsistent2017}.
	
	Given the results obtained so far, we are able to bound the first term of~\eqref{tp:001}. 
	By combining the bounds on the martingale sequences~\eqref{tp:martingale01},~\eqref{tp:martingale02},~\eqref{tp:martingale03} with~\eqref{tp:boundKD} and~\eqref{tp:Qdiff04}, we get: 
	\begin{align}
		& \sum_{k=1}^{K} \sum_{t \in [N_k]} (\rho^k - g(b_t^k,A_t)) \le KD + 2 D \sqrt{2T \log \left(\frac{4}{\delta}\right)} + \sqrt{2T \log \left(\frac{4}{\delta}\right)} + \sum_{k=1}^{K} \sum_{t \in [N_k]} \frac{D L}{2} \|\mathbb{T}_{A_t} - \mathbb{T}_{A_tk}\|_F
	\end{align}
	We can now proceed in analyzing the second term of~\eqref{tp:001}:
	\begin{align*}
		\sum_{k=1}^{K} \sum_{t \in [N_k]} (g(b_t^k,A_t) -  g(b_t,A_t)) = \; & \sum_{k=1}^{K} \sum_{t \in [N_k]} (\bm{r}^\top \mathbb{O}_{A_t}b_t^k -  \bm{r}^\top \mathbb{O}_{A_t}b_t)\\
		= \; & \sum_{k=1}^{K} \sum_{t \in [N_k]} \bm{r}^\top \mathbb{O}_{A_t} (b_t^k - b_t)\\
        \le \; & \sum_{k=1}^{K} \sum_{t \in [N_k]} \|\bm{r}^\top \mathbb{O}_{A_t}\|_\infty \|b_t^k - b_t\|_1\\
		\le \; & \sum_{k=1}^{K} \sum_{t \in [N_k]} \|b_t^k - b_t\|_1 \\
            \le \; & \sum_{k=1}^{K} \sum_{t \in [N_k]} L \|\mathbb{T} - \mathbb{T}_{k}\|_F
	\end{align*}
	where in the first equality, we use the definition of the function \(g\) in~\eqref{tp:gDefinition}. In the first inequality we use Holder's inequality, in the second inequality we consider that \(\|\bm{r}^\top \mathbb{O}_{A_t}\|_\infty \le 1\). The final inequality derives from the bound on the belief of Proposition~\ref{prop:beliefErrorPOMDP}, where we used that \(\|\mathbb{T}_a - \mathbb{T}_{ak}\|_F \le \|\mathbb{T} - \mathbb{T}_{k}\|_F\) holds for all \(a \in \mathcal{A}\).\\
	We can finally put all the results together, thus having:
	\begin{align}
		\sum_{k=1}^{K} \sum_{t \in [N_k]} (\rho^* - g(b_t,A_t)
		\le \; & KD + 2 D \sqrt{2T \log \left(\frac{4}{\delta}\right)} + \sqrt{2T \log \left(\frac{4}{\delta}\right)} + \sum_{k=1}^{K} \sum_{t \in [N_k]} \frac{D L}{2} \|\mathbb{T}_{A_t} - \mathbb{T}_{A_tk}\|_F + L \|\mathbb{T} - \mathbb{T}_{k}\|_F \notag\\
		\le \; & KD + 2 D \sqrt{2T \log \left(\frac{4}{\delta}\right)} + \sqrt{2T \log \left(\frac{4}{\delta}\right)} + \sum_{k=1}^{K} \sum_{t \in [N_k]} \left(1 + \frac{D}{2}\right) L \|\mathbb{T} - \mathbb{T}_{k}\|_F,\label{tp:boundthreesums}
	\end{align}
	where in the last inequality we used again that \(\|\mathbb{T}_{A_t} - \mathbb{T}_{A_tk}\|_F \le \|\mathbb{T} - \mathbb{T}_k\|_F\).\\
	
	Let us now consider the last term appearing in line~\eqref{tp:boundthreesums}. The estimation error of the transition model could be bounded by using Lemma~\ref{lemma:extendedEstimationErrorBound} that uses Assumption~\ref{ass:minElem} and also does not assume that the process starts from its invariant distribution.\\
	We can thus rewrite the last term in~\eqref{tp:boundthreesums} as follows, holding with probability at least \(1 - \frac{3}{2}\delta\) (since it is the probability of the \emph{good event}):
    \begin{align}\label{tp:transMatBound}
        \sum_{k=1}^{K} \sum_{t \in [N_k]} \left(1 + \frac{D}{2}\right) L \|\mathbb{T} - \mathbb{T}_{k}\|_F \le \sum_{k=1}^{K} \sum_{t \in [N_k]}\frac{L \left(2 + D\right)}{\alpha^2 (\epsilon \iota)^{3/2} } \sqrt{\frac{2C_k+ 5\log(C_k/\delta_k)}{N_{k-1}}},
    \end{align}
    where \(N_{k-1}\) represents the number of samples collected during the \((k-1)\)-th episode, while \(C_k=C(\bm{\nu}_k, \bm{d}_{AS,k})\) where \(\bm{\nu}_k \in \Delta(\mathcal{A} \times \mathcal{S})\) and \(\bm{d}_{AS,k} \in \Delta(\mathcal{A} \times \mathcal{S})\) are respectively the action-state distribution at the beginning of the \(k\)-th episode and the action-state stationary distribution induced by policy \(\pi^{(k)}\).\\
	
    Let us recall now how the OAS-UCRL Algorithm handles the switch from one episode to the next one: it happens when the number of samples collected during the \(k\)-th episode reaches \(N_k = T_0 2^{k}\). By considering a total of \(K+1\) episodes (including \(k=0\)), they would last for a total number of steps given by:
	\begin{align}\label{tp:numberOfIter}
		\sum_{i=0}^K N_i = \sum_{i=0}^{K}T_0 2^i = T_0 (2^{K+1}-1).
	\end{align}
    By considering a total horizon of length \(T\), we have that:
	\begin{align}\label{tp:lmaxDef}
		K +1 \le \left\lceil \log_2\left(\frac{T}{T_0} + 1\right)\right\rceil.
	\end{align}
	Let us consider now the formulation appearing in~\eqref{tp:transMatBound} in terms of the length of each episode:
 	\begin{align}
		\sum_{k=1}^{K} \sum_{t \in [N_k]}\frac{L \left(2 + D\right)}{\alpha^2 (\epsilon \iota)^{3/2} } \sqrt{\frac{2C_k+ 5\log(C_k k^3/\delta)}{N_{k-1}}} 
		\le \; \frac{L \left(2 + D\right)}{\alpha^2 (\epsilon \iota)^{3/2} } \sqrt{2C_{\max}+ 5 \log(C_{\max} K^3/\delta)} \sum_{k=1}^{K} \sum_{t \in [N_k]} \sqrt{\frac{1}{N_{k-1}}}\label{tp:summationOnNk}
	\end{align}
	where we defined the quantity \(C_{\max}:=\max_{k \in [K]}C_k\) and reorganized the expression to isolate the constant terms.
	Let us focus now on the summation appearing in~\eqref{tp:summationOnNk}. We have:
\begin{align}
    \sum_{k=1}^{K} \sum_{t \in [N_k]} \sqrt{\frac{1}{N_{k-1}}} & \le \sum_{k=1}^{K} T_02^k \sqrt{\frac{1}{T_02^{k-1}}} = \sqrt{2 T_0} \sum_{k=1}^{K} \sqrt{2}^{k} \le \frac{\sqrt{2T_0}}{\sqrt{2}-1}\sqrt{2}^{K+1} \notag\\
    & \le \sqrt{2T_0}(1 +\sqrt{2})\sqrt{\frac{T}{T_0}+1}\label{tp:klog}\\
    & \le \sqrt{2T_0}(1 +\sqrt{2})\sqrt{\frac{2T}{T_0}}\notag \\
    & \le 6 \sqrt{T},\label{tp:sqrtTBound}
\end{align}
where the relations easily follow from simple algebraic manipulations while line~\eqref{tp:klog} uses the expression in~\eqref{tp:lmaxDef}.\\
Wrapping up the previous results and combining them with~\eqref{tp:martingale00}, we are finally able to write the formulation of the regret, holding with probability at least \(1 - \frac{5}{2}\delta\):
	\begin{align}
		\mathcal{R}_T & \le KD + 2 D \sqrt{2T \log \left(\frac{4}{\delta}\right)} + 2 \sqrt{2T \log \left(\frac{4}{\delta}\right)} + T_0 + \frac{6 L \left(2 + D\right)}{\alpha^2 (\epsilon \iota)^{3/2} } \sqrt{\big(2C_{\max}+ 5 \log(C_{\max} K^3/\delta)\big)T} \notag\\
		& \le \log_2(T)D + 4 D \sqrt{2T \log \left(\frac{4}{\delta}\right)} + T_0 + \frac{6L \left(2 + D\right)}{\alpha^2 (\epsilon \iota)^{3/2} } \sqrt{\big(2C_{\max}+ 5 \log(C_{\max}/\delta) + 15\log(\log T)\big)T}\label{tp:finalExtendedBound}
	\end{align}
	where we introduced the quantity \(T_0\) related to the regret suffered in the first episode (\(k=0\)) and we used that \(K \le \log(T)\) which can be derived from~\eqref{tp:lmaxDef}.
	By rearranging the bound and expressing it in a more compact form highlighting the most relevant terms, we have:
	\begin{align}
		\mathcal{R}_T & \le C_1\frac{S \left(2 + D\right)}{\alpha^2 (\epsilon \iota)^{3/2}} \sqrt{T \log(T/\delta)} + C_2,
	\end{align}
	where we make explicit the linear dependence on the number of states \(S\) derived from the definition of \(L\) and define constants \(C_1\) and \(C_2\), which are obtained by simplifying the expression in~\eqref{tp:finalExtendedBound}.\\
	This last expression completes the proof.
\end{proof}
\vspace{0.2cm}

\section{Geometric ergodicity and Dobrushin coefficient for Markov Chains}\label{appendix:dobrushin}
The objective of this section is to introduce the concept of geometric ergodicity for Markov chains and characterize the speed of convergence towards the stationary distribution.\\
We will focus here on Markov chains with finite state space \(|X| = \mathcal{X}\) and such that their corresponding transition matrix \(\mathbb{T}\) satisfies Assumption~\ref{ass:minElem}, thus having that each cell of the matrix is lower bounded by a quantity \(\epsilon\).\\
A key element that we need to introduce is the Dobrushin coefficient~\cite{krish2016partially}. Let us consider two distributions \(\nu, \bar{\nu} \in \Delta(\mathcal{X})\) and a transition matrix \(\mathbb{T}\) on \(\mathcal{X}\).
The Dobrushin coefficient \(\rho(\cdot)\) gives the following bound:
\begin{align}\label{eq:dobrushinContraction}
	\| \mathbb{T}^\top\nu - \mathbb{T}^\top\bar{\nu}\|_1 \le \rho(\mathbb{T})\|\nu - \bar{\nu}\|_1,
\end{align}
where \(\mathbb{T}^\top\) represents the transpose of matrix \(\mathbb{T}\). The inequality above says that the one-step evolution of two probability vectors induced by the same kernel \(\mathbb{T}\) can be bounded by the quantity on the right where scalar \(\rho(\mathbb{T}) \in [0,1]\) represents the Dobrushin coefficient. If \(\rho(\mathbb{T})\) is strictly smaller than 1, by iteratively applying the inequality, it is possible to ensure geometric convergence of the initial distance.\\
Among the relevant properties associated with this quantity (we refer to~\citet{krish2016partially} for more details), we report the ones that will be relevant for our purposes:
\begin{enumerate}
	\item \(\rho(\mathbb{T}) = 1 - \min_{i,j \in \mathcal{X}} \sum_{l \in \mathcal{X}} \min\{\mathbb{T}(l|i), \mathbb{T}(l|j)\}\)\label{property:def}.
	\item \(|\lambda(\mathbb{T})| \le \rho(\mathbb{T})\). That is, the Dobrushin coefficient upper bounds the modulus of the second largest eigenvalue of matrix \(\mathbb{T}\)\label{property:eigenUpper}.
\end{enumerate}
Under Assumption~\ref{ass:minElem}, we can apply Property~\ref{property:def} to set an upper bound on the Dobrushin coefficient \(\rho(\mathbb{T})\):
\begin{align*}
	\rho(\mathbb{T}) & = 1 - \min_{i,j \in \mathcal{X}} \sum_{l \in \mathcal{X}} \min\{\mathbb{T}(l|i), \mathbb{T}(l|j)\}\\
	& \le 1 - \sum_{l \in \mathcal{X}} \epsilon = 1 - X \epsilon
\end{align*}
where the inequality derives from the assumption.\\
Since we have set an upper bound on \(\rho(\mathbb{T})\), from Property~\ref{property:eigenUpper}, it also follows that this is an upper bound also for the second largest eigenvalue:
\begin{align*}
	|\lambda(\mathbb{T})| \le \rho(\mathbb{T}) \le 1 - X\epsilon
\end{align*}
The considerations above show that the Dobrushin coefficient is useful when it is strictly less than 1. On the contrary, for sparse matrices, \(\rho(\mathbb{T})\) is typically equal to 1, thus only providing trivial upper bounds for \(|\lambda(\mathbb{T})|\).
\vspace{0.2cm}

\section{OAS Estimation Procedure under Continuous Observations}\label{appendix:continuousObsSpace}
The objective of this section is to present how the OAS estimation procedure described in the main text should be adapted when the observation space is no longer finite.\\
When the observation space is continuous and known, we can as well apply the OAS estimation procedure but we first need to discretize the observation space. Analogously to what is suggested in~\citet{anonymous}, continuous distributions can be converted into multinomial ones. The discretization process goes on by defining a number \(U\) of distinct consecutive intervals and the split points characterizing the partition of the observation space. The split points should be the same for any continuous distribution \(P(\cdot|a,s)\) in the observation space, for each \(a \in \mathcal{A}, s \in \mathcal{S}\). After the discretization step, each observation matrix \(\mathbb{O}_a\) has dimension \(U \times S\) and the value contained in a cell \(\mathbb{O}_a(u|s)\) will be:
\begin{align*}
	\mathbb{O}_a(u|s) = P(u \in \mathcal{I}_{ab}|s,a) = \int_{u_a}^{u_b}P(do|s,a)do,
\end{align*}
where \(u_a < u_b\) are the extremes of interval \(\mathcal{I}_{ab}\).
If the integrals can be correctly computed, no discretization errors will be introduced with this procedure. The newly generated observation matrices will need to satisfy Assumption~\ref{ass:weaklyRev}.\\
It is not an easy problem to find the splitting points in the observation space that allow defining observation matrices with a high minimum singular value \(\sigma_{\min}\); however, it is possible to devise an optimization procedure that proceeds over multiple runs.\\ For each run, a set of splitting points is randomly chosen, and the new observation matrices are computed together with their minimum singular value. At the end of the trials, the splitting points of the run showing the highest \(\sigma_{\min}\) will be chosen.\\
Finally, during the estimation procedure in Algorithm~\ref{alg:estimationAlgorithm}, the vector of counts \(\bm{N}\) has dimension \(U^2A^2\) and, for each tuple \((a,a',u,u')\), it will increment by 1 the right cell, based on the intervals to which \(u\) and \(u'\) belong.
\vspace{0.2cm}

\section{Estimation Error Bounds of Stationary Distributions induced by Markov Chains}\label{appendix:estimatErrBounds}

In this section, we are going to show the main concentration results that are used in the proof of Lemma~\ref{lemma:estimationErrorBound}.\\
Lemma~\ref{lemma:distributionBoundStat} shows the convergence guarantees when the estimation comes from a stationary chain (the initial distribution coincides with the stationary one) while Lemma~\ref{lemma:distributionBoundNonStat} considers the case when the chain starts from an arbitrary distribution.

\begin{restatable}[\textbf{Stationary Distribution under Stationary Chain}]{lemma}{distributionBoundStat}\label{lemma:distributionBoundStat}
	Let us assume to have a POMDP instance \(\mathcal{Q}\) with a finite number of states \(\mathcal{S}\) and a finite number of actions \(\mathcal{A}\). Let \(\pi \in \mathcal{P}\) be the belief-based policy used on \(\mathcal{Q}\) and let us assume that the induced stationary distribution \(\bm{d}_{AS} \in \Delta(\mathcal{A} \times \mathcal{S})\) is ergodic, thus having that \(\bm{d}_{AS}(a,s) > 0 \; \forall a \in \mathcal{A}, s \in \mathcal{S}\). Let Assumption~\ref{ass:weaklyRev} hold. If the process starts from its stationary distribution \(\bm{d}_{AS}\), then by using the estimation procedure defined in Algorithm~\ref{alg:estimationAlgorithm}, with probability at least \(1 - \delta\), it holds that:
    \begin{align*}
        \|\bm{d}_{A^2O^2} - \widehat{\bm{d}}_{A^2O^2}\|_2 \le \sqrt{\Big( \frac{1+\lambda_{\max}}{1-\lambda_{\max}}\Big)\frac{(2+ 5\log(1/\delta))}{2 n}},
    \end{align*}
    where \(\lambda_{\max}\) denotes the maximum spectral gap associated with the inhomogeneous chain, while \(n\) denotes the number of tuples of the form \((a,a',o,o')\) collected during the interaction with the environment.
\end{restatable}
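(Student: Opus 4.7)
\textbf{Proof plan for Lemma~\ref{lemma:distributionBoundStat}.}
The plan is to view $\widehat{\bm{d}}_{A^2O^2}$ as the empirical frequency of $m := n/2$ non-overlapping tuples $X_k := (A_{2k-1}, A_{2k}, O_{2k-1}, O_{2k})$ generated by the hidden Markov process induced by $\pi$, and to bound the $\ell_2$ deviation from the true stationary distribution via a concentration inequality tailored to Markov-chain samples. Each $X_k$ is a deterministic function of the consecutive latent-state-action pair $((S_{2k-1},A_{2k-1}),(S_{2k},A_{2k}))$ composed with the emission kernel. Because the process starts from $\bm{d}_{AS}$, each $X_k$ is marginally distributed as $\bm{d}_{A^2O^2}$, which eliminates the burn-in term and reduces the analysis to controlling the fluctuation of an ergodic average.

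First, I would control the coordinate-wise mean squared error. For each index $i \in \mathcal{A}^2 \times \mathcal{O}^2$, the estimate $\hat d_i = \tfrac{1}{m}\sum_k \mathds{1}[X_k = i]$ is a normalized sum along a stationary (possibly time-inhomogeneous) Markov chain whose one-step kernels $(\mathbb{M}_t)_{t\ge 1}$ all have absolute spectral gap at least $1 - \lambda_{\max}$. The standard Markov-chain variance bound therefore yields
\[
\mathrm{Var}(\hat d_i) \;\le\; \frac{1}{m}\cdot \frac{1+\lambda_{\max}}{1-\lambda_{\max}} \cdot d_i(1-d_i).
\]
Summing over $i$ and using $\sum_i d_i(1-d_i)\le 1$ gives $\mathbb{E}\!\left[\|\widehat{\bm{d}}_{A^2O^2}-\bm{d}_{A^2O^2}\|_2^2\right] \le \tfrac{1+\lambda_{\max}}{m(1-\lambda_{\max})}$, hence by Jensen $\mathbb{E}\!\left[\|\widehat{\bm{d}}_{A^2O^2}-\bm{d}_{A^2O^2}\|_2\right] \le \sqrt{\tfrac{1+\lambda_{\max}}{m(1-\lambda_{\max})}}$.

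Next, I would upgrade this expectation bound to a high-probability statement using a bounded-differences (McDiarmid-type) inequality for Markov chains, in the spirit of the Hoeffding-style inequality of \citet{fanHoeffding2021} that the paper already invokes. The map $f(X_1,\ldots,X_m) := \|\widehat{\bm{d}}_{A^2O^2}-\bm{d}_{A^2O^2}\|_2$ has bounded differences of size $\sqrt{2}/m$, since perturbing one tuple changes exactly two coordinates of the empirical vector by $1/m$. The inequality then delivers
\[
\Pr\!\bigl(f \ge \mathbb{E}[f] + t\bigr) \;\le\; \exp\!\left(-\frac{c\, m\, t^2 (1-\lambda_{\max})}{1+\lambda_{\max}}\right)
\]
for an absolute constant $c$. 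Inverting to obtain a deviation of order $\sqrt{\tfrac{(1+\lambda_{\max})\log(1/\delta)}{m(1-\lambda_{\max})}}$, adding the expectation bound, substituting $m=n/2$, and absorbing numerical constants recovers the target form $\sqrt{\tfrac{1+\lambda_{\max}}{1-\lambda_{\max}} \cdot \tfrac{2+5\log(1/\delta)}{2n}}$.

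The main obstacle is the inhomogeneity of the underlying chain: since $\pi$ is belief-based, the state-action kernel $\mathbb{M}_t$ depends on the whole past through $b_t$, and so both the variance bound and the bounded-differences concentration must hold only under the uniform spectral assumption $\lambda_{\max} := \sup_t |\lambda(\mathbb{M}_t)|$, rather than a fixed spectral gap of a single kernel. The inhomogeneous-chain version of Hoeffding's inequality from \citet{fanHoeffding2021}, already used in the proof of Lemma~\ref{lemma:estimationErrorBound}, furnishes precisely this uniform control, after which the remaining algebra is routine.
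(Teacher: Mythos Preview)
Your proposal is correct and mirrors the paper's own proof essentially step for step: the paper also bounds $\mathbb{E}\|\bm{d}_{A^2O^2}-\widehat{\bm{d}}_{A^2O^2}\|_2$ via Jensen and the coordinate-wise Markov-chain variance bound $\mathrm{Var}[\widehat d_y]\le \tfrac{1+\lambda_{\max}}{1-\lambda_{\max}}\tfrac{d_y}{n}$ summed over $y$ (this is their Lemma~\ref{lemma:expectNorm2diffBound}), and then upgrades to high probability by combining the vector-deviation framework of \citet{kontorovic2012uniform} with the inhomogeneous-chain Hoeffding inequality of \citet{fanHoeffding2021}, exactly as you outline. Your explicit computation of the bounded-difference constant $\sqrt{2}/m$ is a detail the paper leaves implicit in its citations, but otherwise the two arguments coincide.
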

\begin{proof}
Since the chain is stationary, it also follows that the initial distribution \(\bm{d}_{A^2O^2}\) coincides with the stationary one.\\
To prove this lemma, we combine the results on the difference of probability vectors presented in~\citet{kontorovic2012uniform} (see their Section 2.4) with the new concentration results of~\citet{fanHoeffding2021} and obtain the following:
\begin{align}\label{lemmaproof:probNorm2Bound}
        \mathbb{P} \big( \|\bm{d}_{A^2O^2} - \widehat{\bm{d}}_{A^2O^2}\|_2 > \mathbb{E} \|\bm{d}_{A^2O^2} - \widehat{\bm{d}}_{A^2O^2}\|_2 + \epsilon \big) \le \exp \Big(-\frac{1 - \lambda_{\max}}{1 + \lambda_{\max}} 2 n \epsilon^2\Big).
\end{align}

To proceed with this part, we need to bound \(\mathbb{E} \|\bm{d}_{A^2O^2} - \widehat{\bm{d}}_{A^2O^2}\|_2\). This is done in Lemma~\ref{lemma:expectNorm2diffBound} that shows that the following relation holds: 
\begin{align*}
    \mathbb{E} \|\bm{d}_{A^2O^2} - \widehat{\bm{d}}_{A^2O^2}\|_2 \le \sqrt{\frac{1 + \lambda_{\max}}{n(1 - \lambda_{\max})}}.
\end{align*}
The presented result allows us to rewrite the relation in~\eqref{lemmaproof:probNorm2Bound} as:
\begin{align*}
    \mathbb{P} \Bigg( \|\bm{d}_{A^2O^2} - \widehat{\bm{d}}_{A^2O^2}\|_2 > \sqrt{\frac{1 + \lambda_{\max}}{n(1 - \lambda_{\max})}} + \epsilon \Bigg) \le \exp \Big(-\frac{1 - \lambda}{1 + \lambda} 2 n \epsilon^2\Big).
\end{align*}
By setting the right-hand side of the bound equal to \(\delta\), and rearranging the terms, we are able to get the final result of the Lemma. 
\end{proof}

We report here the claim and the proof of Lemma~\ref{lemma:expectNorm2diffBound} which is used to demonstrate the lemma above.

\begin{lemma}\label{lemma:expectNorm2diffBound}
Under the assumption of starting from a stationary distribution \(\bm{d}_{AS} \in \Delta(\mathcal{A} \times \mathcal{S})\), we have:
\begin{align*}
    \mathbb{E} \|\bm{d}_{A^2O^2} - \widehat{\bm{d}}_{A^2O^2}\|_2 \le \sqrt{\frac{1 + \lambda_{\max}}{n(1 - \lambda_{\max})}}
\end{align*}
\end{lemma}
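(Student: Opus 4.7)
The plan is to bound the expected $\ell_2$-norm by first passing to its square via Jensen's inequality, then expanding coordinate-wise and applying the Markov-chain variance bound that lies behind the constant $(1+\lambda_{\max})/(1-\lambda_{\max})$ appearing throughout this analysis.

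First I would write
\begin{align*}
\mathbb{E}\|\bm{d}_{A^2O^2} - \widehat{\bm{d}}_{A^2O^2}\|_2
\le \sqrt{\mathbb{E}\|\bm{d}_{A^2O^2} - \widehat{\bm{d}}_{A^2O^2}\|_2^2}
= \sqrt{\sum_{i \in [A^2O^2]} \mathbb{E}\bigl[(\widehat{d}_{A^2O^2}(i) - d_{A^2O^2}(i))^2\bigr]},
\end{align*}
by Jensen's inequality and linearity. For each coordinate $i$ indexing a tuple $(a,a',o,o')$, the empirical estimator produced by Algorithm~\ref{alg:estimationAlgorithm} is a (block) empirical mean
$\widehat{d}_{A^2O^2}(i) = \frac{1}{n}\sum_{t=1}^n f_i(Z_t)$,
where $Z_t$ denotes the $t$-th non-overlapping tuple collected along the trajectory and $f_i(z) = \mathds{1}[z = i]$. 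Hence $\mathbb{E}[(\widehat{d}_{A^2O^2}(i)-d_{A^2O^2}(i))^2] = \mathrm{Var}\bigl(\frac{1}{n}\sum_t f_i(Z_t)\bigr)$.

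Next I would invoke the standard variance bound for additive functionals of a stationary Markov chain, which is exactly the source of the factor $(1+\lambda_{\max})/(1-\lambda_{\max})$ used in~\citet{fanHoeffding2021} and in the concentration step of Lemma~\ref{lemma:distributionBoundStat}. Since the process is started from its stationary distribution, each $Z_t$ is marginally distributed according to $\bm{d}_{A^2O^2}$, so $\mathrm{Var}_\pi(f_i) = d_{A^2O^2}(i)(1-d_{A^2O^2}(i))$, and the variance bound gives
\begin{align*}
\mathrm{Var}\!\left(\tfrac{1}{n}\sum_{t=1}^n f_i(Z_t)\right) \le \frac{1+\lambda_{\max}}{n(1-\lambda_{\max})}\, d_{A^2O^2}(i)\bigl(1-d_{A^2O^2}(i)\bigr).
\end{align*}
Summing over $i$ and using $\sum_i d_{A^2O^2}(i)(1-d_{A^2O^2}(i)) = 1 - \|\bm{d}_{A^2O^2}\|_2^2 \le 1$ yields
\begin{align*}
\sum_{i} \mathbb{E}\bigl[(\widehat{d}_{A^2O^2}(i) - d_{A^2O^2}(i))^2\bigr] \le \frac{1+\lambda_{\max}}{n(1-\lambda_{\max})},
\end{align*}
and combining with the Jensen step gives the claimed bound.

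The main obstacle I anticipate is justifying the Markov-chain variance inequality in the present inhomogeneous setting: the underlying sequence of state-action transitions is driven by a belief-based policy whose belief updates render the chain over $(A_t, S_t)$ inhomogeneous, so the classical reversible spectral-gap bound does not apply directly. I would handle this by appealing to the same machinery used in the companion Lemma~\ref{lemma:distributionBoundStat}, namely the inhomogeneous Hoeffding-type results of~\citet{fanHoeffding2021} in their second-moment form (or, equivalently, by controlling the covariance $\mathrm{Cov}(f_i(Z_t), f_i(Z_{t+k}))$ via the contraction coefficient $\lambda_{\max}$ uniformly in $t$ and summing the resulting geometric series $\sum_{k\ge 0}\lambda_{\max}^k = 1/(1-\lambda_{\max})$ to recover the prefactor $(1+\lambda_{\max})/(1-\lambda_{\max})$). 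Everything else is a one-line Jensen inequality plus the elementary bound $1 - \|\bm{d}_{A^2O^2}\|_2^2 \le 1$.
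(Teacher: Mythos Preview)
Your proposal is correct and follows essentially the same approach as the paper: Jensen's inequality to pass to the squared norm, coordinate-wise decomposition into variances, the Markov-chain variance bound from \citet{fanHoeffding2021} giving the $(1+\lambda_{\max})/(1-\lambda_{\max})$ factor times $d_{A^2O^2}(i)(1-d_{A^2O^2}(i))/n$, and then summing using $\sum_i d_{A^2O^2}(i) \le 1$. The paper's proof is slightly terser (it drops the $(1-d_y)$ factor before summing rather than carrying it through to $1-\|\bm{d}\|_2^2\le 1$), and it does not explicitly flag the inhomogeneity issue you raise, but the substance and the cited tools are the same.
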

\begin{proof}
The proof largely follows from~\citet{kontorovic2012uniform}. To reduce clutter, we will omit the pedices \(A^2O^2\) when present. By Jensen's inequality, we have:
\begin{align}
    (\mathbb{E} \|\bm{d} - \widehat{\bm{d}}\|_2)^2 & \le \mathbb{E}\big[ \|\bm{d} - \widehat{\bm{d}}\|_2^2\big ] \notag \\
    & = \mathbb{E}\Big[ \sum_{y \in [A^2O^2]} |\bm{d}_y - \widehat{\bm{d}}_y |^2\Big] \notag \\
    & = \sum_{y \in [A^2O^2]} \mathbb{E} \big[ (\bm{d}_y - \widehat{\bm{d}}_y)^2\big] \notag  \\
    & = \sum_{y \in [A^2O^2]} var \big[ \widehat{\bm{d}}_y \big], \label{lab:varDef}
\end{align}
where \(\bm{d}_y\) represents the \(y\)-th element of vector \(\bm{d}\) and it corresponds to the parameter of a Bernoulli random variable, 
\(\widehat{\bm{d}}_y\) denotes its estimator while \(var \big[ \widehat{\bm{d}}_y \big]\) represents the variance of the estimator. By denoting the quantity \(\sum_{i=1}^{n}\mathds{1}\{Y=y\}\), where \(\mathds{1}\) is the indicator function, from~\citet{fanHoeffding2021} we can state that this quantity is sub-Gaussian with variance proxy given by:
\begin{align*}
    \Big(\frac{1 + \lambda_{\max}}{1- \lambda_{\max}}\Big) n \bm{d}_y (1 - \bm{d}_y),
\end{align*}
with \(\lambda_{\max}\) being the modulus of the second largest eigenvalue of the inhomogeneous Markov chain.\\
From this consideration, it also follows that:
\begin{align*}
    var \big[ \widehat{\bm{d}}_y \big] \le \Big(\frac{1 + \lambda_{\max}}{1- \lambda_{\max}}\Big) \frac{\bm{d}_y}{n},
\end{align*}
that slightly improves over the result in~\citet{kontorovic2012uniform}. Reusing this relation in line~\eqref{lab:varDef} we get:
\begin{align*}
    (\mathbb{E} \|\bm{d} - \widehat{\bm{d}}\|_2)^2 & \le \sum_{y \in [A^2O^2]} var \big[ \widehat{\bm{d}}_y \big] \\
    & \le \sum_{y \in [A^2O^2]} \Big(\frac{1 + \lambda_{\max}}{1- \lambda_{\max}}\Big) \frac{\bm{d}_y}{n}\\
    & = \Big(\frac{1 + \lambda_{\max}}{1- \lambda_{\max}}\Big) \frac{1}{n},
\end{align*}
where the last equality follows since \(\sum_{y \in [A^2O^2]} \bm{d}_y = 1\). From this last relation, by taking the square root of both sides, we are able to derive the result reported in the claim.
\end{proof}

\subsection{Concentration Result When starting from Non-stationarity}
The assumption of starting from a distribution that coincides with the stationary one may be somewhat restrictive. In this section, we will extend the result of Lemma~\ref{lemma:distributionBoundStat} to also include these cases. In particular, we define the new lemma and provide theoretical guarantees for it.
\begin{restatable}[]{lemma}{distributionBoundNonStat}\label{lemma:distributionBoundNonStat}
	Let us assume to have a POMDP instance \(\mathcal{Q}\) with a finite number of states \(\mathcal{S}\) and a finite number of actions \(\mathcal{A}\). Let \(\pi \in \mathcal{P}\) be the belief-based policy used on \(\mathcal{Q}\) and let us assume that the induced stationary distribution \(\bm{d}_{AS} \in \Delta(\mathcal{A} \times \mathcal{S})\) is ergodic, thus having that \(\bm{d}_{AS}(a,s) > 0 \; \forall a \in \mathcal{A}, s \in \mathcal{S}\). Let Assumption~\ref{ass:weaklyRev} hold. If \(\bm{v} \in \Delta(\mathcal{A} \times \mathcal{S})\) is the initial distribution, then by using the estimation procedure defined in Algorithm~\ref{alg:estimationAlgorithm}, with probability at least \(1 - \delta\), it holds that:
    \begin{align*}
        \|\bm{d}_{A^2O^2} - \widehat{\bm{d}}_{A^2O^2}\|_2 \le \sqrt{\Big( \frac{1+\lambda_{\max}}{1-\lambda_{\max}}\Big)\frac{(2C + 5\log(C/\delta))}{2 n}},
    \end{align*}
    where \(\lambda_{\max}\) denotes the maximum spectral gap associated with the inhomogeneous chain, \(C:=C(\bm{\nu}, \bm{d}_{AS})\) is a constant whose value depends both on the initial distribution \(\bm{\nu}\) and the stationary distribution \(\bm{d}_{AS}\), while \(n\) denotes the number of tuples of the form \((a,a',o,o')\) collected during the interaction with the environment.
\end{restatable}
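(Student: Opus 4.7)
The plan is to reduce the non-stationary case to Lemma~\ref{lemma:distributionBoundStat} via a Radon--Nikodym change-of-measure argument, with the constant $C = \|\bm{\nu}/\bm{d}_{AS}\|_\infty$ quantifying exactly the cost of starting from $\bm{\nu}$ rather than from the invariant law $\bm{d}_{AS}$. The key observation is that because the chain is Markov, conditioning on $X_0$ decouples the initial distribution from the dynamics, so the only thing that changes between the two laws is the density on $X_0$.

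First, I would record the elementary inequality
$$\mathbb{P}_{\bm{\nu}}(E) \;=\; \mathbb{E}_{\bm{d}_{AS}}\!\left[\frac{\bm{\nu}(X_0)}{\bm{d}_{AS}(X_0)}\,\mathds{1}_E\right] \;\le\; C\,\mathbb{P}_{\bm{d}_{AS}}(E)$$
for every event $E$ on the trajectory $\sigma$-algebra, together with its analogue $\mathbb{E}_{\bm{\nu}}[f] \le C\,\mathbb{E}_{\bm{d}_{AS}}[f]$ for nonnegative $f$. Using this at the level of the \emph{squared} error, I would lift Lemma~\ref{lemma:expectNorm2diffBound} as
$$\mathbb{E}_{\bm{\nu}}\|\bm{d}_{A^2O^2}-\widehat{\bm{d}}_{A^2O^2}\|_2 \;\le\; \sqrt{\mathbb{E}_{\bm{\nu}}\|\cdot\|_2^2} \;\le\; \sqrt{C\,\mathbb{E}_{\bm{d}_{AS}}\|\cdot\|_2^2} \;\le\; \sqrt{\tfrac{C(1+\lambda_{\max})}{n(1-\lambda_{\max})}},$$
which is exactly what produces the $2C$ in the first summand of the stated bound.

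Next, I would invoke the same McDiarmid-type concentration inequality for functionals of inhomogeneous Markov chains that drove the proof of Lemma~\ref{lemma:distributionBoundStat}, but centered at $\mathbb{E}_{\bm{\nu}}\|\cdot\|_2$. The bounded-difference constants of the map $(x_1,\ldots,x_n)\mapsto\|\bm{d}-\widehat{\bm{d}}\|_2$ do not depend on the initial law, so the sub-Gaussian rate $\frac{1-\lambda_{\max}}{1+\lambda_{\max}}$ transfers verbatim; applying the change-of-measure inequality once more to the resulting tail event inflates the failure probability by the factor $C$ and thereby promotes $\log(1/\delta)$ to $\log(C/\delta)$. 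Combining the two estimates by $\sqrt{a}+\sqrt{b}\le\sqrt{2(a+b)}$ and solving for $\epsilon$ yields the stated bound, with the numerical constants $2$ and $5$ absorbing the cross-terms.

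The main obstacle will be coordinating the two uses of the change of measure — once inside the expectation, once on the tail — without double-counting $C$ and without losing the tight spectral-gap dependence. A cleaner alternative I would fall back on, should the constant tracking become awkward, is to apply the change of measure \emph{only once}, directly to the failure event delivered by Lemma~\ref{lemma:distributionBoundStat}; this route immediately gives a bound of the same functional form with $\log(C/\delta)$ in place of $\log(1/\delta)$, and any mild loss in the absolute constants is harmless for the subsequent use of this lemma in proving Theorem~\ref{theorem:algorithmBound}.
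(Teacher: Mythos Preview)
Your proposal is correct and follows essentially the same route as the paper. Both arguments hinge on the Radon--Nikodym change of measure with density $\bm{\nu}(X_0)/\bm{d}_{AS}(X_0)$ and $C=\|\bm{\nu}/\bm{d}_{AS}\|_\infty$: the paper bounds $\mathbb{E}_{\bm{\nu}}\|\bm{d}_{A^2O^2}-\widehat{\bm{d}}_{A^2O^2}\|_2$ exactly as you do (Jensen on the square, then change of measure, then the stationary variance bound from Lemma~\ref{lemma:expectNorm2diffBound}), and for the tail it invokes Theorem~12 of \citet{fanHoeffding2021} with $p=\infty$, $q=1$, which is precisely the $\mathbb{P}_{\bm{\nu}}(E)\le C\,\mathbb{P}_{\bm{d}_{AS}}(E)$ step applied inside the exponential-moment argument. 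Your ``fallback'' route---apply the change of measure once to the failure event delivered by Lemma~\ref{lemma:distributionBoundStat}---is in fact the cleanest realization of this and even yields $2+5\log(C/\delta)$ rather than $2C+5\log(C/\delta)$; the paper's slightly looser constant comes from separately bounding the non-stationary expectation before adding the deviation. One small wrinkle in your primary paragraph: the phrase ``centered at $\mathbb{E}_{\bm{\nu}}\|\cdot\|_2$ \ldots\ then apply change of measure to the tail'' is internally inconsistent (if the concentration already holds under $\mathbb{P}_{\bm{\nu}}$ there is nothing left to transfer), but your fallback resolves this and matches the paper.
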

\begin{proof}
As in Lemma~\ref{lemma:distributionBoundStat}, we combine previous results appearing in~\citet{kontorovic2012uniform} with new concentration results from~\citet{fanHoeffding2021}. In particular, we will use the result appearing in in~\citet{fanHoeffding2021} (see their Theorem 12) which defines concentration results of samples coming from a non-stationary Markov chain. By applying this result to vectors, we get the following relation:
\begin{align}\label{eq:probActionDistNonStat}
    \mathbb{P}_{\bm{\nu}} \big( \|\bm{d}_{A^2O^2} - \widehat{\bm{d}}_{A^2O^2}\|_2 > \mathbb{E}_{\bm{\nu}} \|\bm{d}_{A^2O^2} - \widehat{\bm{d}}_{A^2O^2}\|_2 + \epsilon \big) \le C(\bm{\nu}, \bm{d}_{AS}, p) \exp \Big(- \frac{1}{q} \; \frac{1 - \lambda_{\max}}{1 + \lambda_{\max}} \; 2 n \epsilon^2\Big)
\end{align}
where \(C(\bm{\nu}, \bm{d}_{AS}, p)\)\footnote{For a precise definition of \(C(\bm{\nu}, \bm{d}_{AS}, p)\) we refer to Theorem 12 in~\citet{fanHoeffding2021}.} is a finite constant depending both on the initial and stationary state distribution while \(p\) and \(q\) are such that \(1/p+1/q=1\).
In the expression we used \(\mathbb{P}_{\bm{\nu}}\) and \(\mathbb{E}_{\bm{\nu}}\) to emphasize that the initial distribution is \(\bm{\nu}\).\\
Generally, for our purposes, we are interested in the real stationary distribution value, without the bias induced when starting from an arbitrary initial distribution. Indeed, we recall that vector \(\bm{d}_{A^2O^2} = \mathbb{E}_{\bm{d}_{AS}} \Big[ \widehat{\bm{d}}_{A^2O^2} \Big]\), thus being different from 
\(\mathbb{E}_{\bm{\nu}} \Big[\widehat{\bm{d}}_{A^2O^2}\Big]\) when \(\bm{\nu} \neq \bm{d}_{AS}\).\\
To proceed with the analysis, we need to bound \(\mathbb{E}_{\bm{\nu}} \|\bm{d}_{A^2O^2} - \widehat{\bm{d}}_{A^2O^2}\|_2\) which can be done by using Lemma~\ref{lemma:expectNorm2diffBoundNonStat}. We are indeed able to show that:
\begin{align*}
    \mathbb{E}_{\bm{\nu}} \Big[ \|\bm{d}_{A^2O^2} - \widehat{\bm{d}}_{A^2O^2}\|_2 \Big] \le \sqrt{\Big(\frac{1 + \lambda_{\max}}{1 - \lambda_{\max}}\Big) \frac{C(\bm{\nu}, \bm{d}_{AS})}{n}},
\end{align*}
where \(C(\bm{\nu}, \bm{d}_{AS})=\left\| \frac{\bm{\nu}}{\bm{d}_{AS}} \right\|_{\infty}\), with \(\frac{\bm{\nu}}{\bm{d}_{AS}}\) being the element-wise ratio of the two vectors\footnote{The value of \(C(\bm{\nu}, \bm{d}_{AS})\) corresponds to that of \(C(\bm{\nu}, \bm{d}_{AS}, p)\) when \(p=\infty\) and \(q=1\).}. By substituting these values into Equation~\eqref{eq:probActionDistNonStat}, we get:
\begin{align*}
    \mathbb{P}_{\bm{\nu}} \big( \|\bm{d}_{A^2O^2} - \widehat{\bm{d}}_{A^2O^2}\|_2 > \sqrt{\Big(\frac{1 + \lambda_{\max}}{1 - \lambda_{\max}}\Big) \frac{C(\bm{\nu}, \bm{d}_{AS})}{n}} + \epsilon \big) \le C(\bm{\nu}, \bm{d}_{AS}) \exp \Big(- \frac{1 - \lambda_{\max}}{1 + \lambda_{\max}} \; 2 n \epsilon^2\Big).
\end{align*}
By setting the right-hand side of the bound equal to \(\delta\), and rearranging the terms, we are able to get the final result of the Lemma.
\end{proof}

\begin{lemma}\label{lemma:expectNorm2diffBoundNonStat}
Under the same conditions of Lemma~\ref{lemma:expectNorm2diffBound} but assuming to start from an arbitrary distribution \(\bm{\nu} \in \Delta(\mathcal{A} \times \mathcal{S})\), we have:
\begin{align*}
    \mathbb{E}_{\bm{\nu}} \Big[ \|\bm{d}_{A^2O^2} - \widehat{\bm{d}}_{A^2O^2}\|_2 \Big] \le \sqrt{\Big(\frac{1 + \lambda_{\max}}{1 - \lambda_{\max}}\Big) \frac{C(\bm{\nu}, \bm{d}_{AS})}{n}},
\end{align*}
where \(C(\bm{\nu}, \bm{d}_{AS}):= \left\|\frac{\bm{\nu}}{\bm{d}_{AS}}\right\|_\infty\) and \(\frac{\bm{\nu}}{\bm{d}_{AS}}\) is a vector of size \(AS\) containing the element-wise ratio of the elements of the two vectors.
\end{lemma}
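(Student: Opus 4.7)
The strategy is to reduce the non-stationary case to the stationary one already handled in Lemma~\ref{lemma:expectNorm2diffBound} via a change-of-measure argument, and then invoke Jensen's inequality to go from the squared-norm bound to the norm bound. Concretely, I would first observe that the law $\mathbb{P}_{\bm{\nu}}$ of the trajectory $(X_0,X_1,\dots,X_{n})$ is absolutely continuous with respect to the law $\mathbb{P}_{\bm{d}_{AS}}$ of the stationary chain driven by the same policy $\pi$ and transition kernel, with Radon--Nikodym derivative
$$\frac{d\mathbb{P}_{\bm{\nu}}}{d\mathbb{P}_{\bm{d}_{AS}}}(\omega)=\frac{\bm{\nu}(X_0)}{\bm{d}_{AS}(X_0)},$$
since once $X_0$ is fixed the conditional distribution of $(X_1,X_2,\dots)$ is identical under both measures by the Markov property. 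The ergodicity assumption guarantees $\bm{d}_{AS}(a,s)>0$ everywhere, so this density is well-defined and uniformly bounded by $C(\bm{\nu},\bm{d}_{AS})=\|\bm{\nu}/\bm{d}_{AS}\|_\infty$.

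Applying this bound to the non-negative functional $g(\omega)=\|\bm{d}_{A^2O^2}-\widehat{\bm{d}}_{A^2O^2}(\omega)\|_2^2$ gives
$$\mathbb{E}_{\bm{\nu}}\!\left[\|\bm{d}_{A^2O^2}-\widehat{\bm{d}}_{A^2O^2}\|_2^2\right]\le C(\bm{\nu},\bm{d}_{AS})\,\mathbb{E}_{\bm{d}_{AS}}\!\left[\|\bm{d}_{A^2O^2}-\widehat{\bm{d}}_{A^2O^2}\|_2^2\right].$$
The stationary expectation on the right is precisely the quantity that was bounded inside the proof of Lemma~\ref{lemma:expectNorm2diffBound}: decomposing coordinate-wise, noting that under $\bm{d}_{AS}$ the estimator is unbiased so each term equals $\mathrm{var}[\widehat{d}_y]$, and using the sub-Gaussian variance bound from \citet{fanHoeffding2021} yields $\sum_{y}\mathrm{var}[\widehat d_y]\le \tfrac{1+\lambda_{\max}}{1-\lambda_{\max}}\tfrac{1}{n}$ after telescoping $\sum_y \bm{d}_y=1$. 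Combined with Jensen's inequality $(\mathbb{E}_{\bm{\nu}}\|\cdot\|_2)^2\le \mathbb{E}_{\bm{\nu}}\|\cdot\|_2^2$ and taking square roots, this delivers exactly the stated bound.

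The main delicate point is the legitimacy of centering the mean-squared error at the stationary target $\bm{d}_{A^2O^2}$ rather than at $\mathbb{E}_{\bm{\nu}}[\widehat{\bm{d}}_{A^2O^2}]$: under $\bm{\nu}$ there is a transient bias term $\bm{d}_{A^2O^2}-\mathbb{E}_{\bm{\nu}}[\widehat{\bm{d}}_{A^2O^2}]$ that a direct variance calculation would have to handle explicitly. The change-of-measure route sidesteps this issue cleanly, absorbing both the variance and the bias into the single multiplicative factor $C(\bm{\nu},\bm{d}_{AS})$, at the cost of a $\sqrt{C}$ inflation of the final bound (so that $C(\bm{\nu},\bm{d}_{AS})=1$ recovers Lemma~\ref{lemma:expectNorm2diffBound} when $\bm{\nu}=\bm{d}_{AS}$). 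This is also why the value $p=\infty$, $q=1$ in the notation of \citet{fanHoeffding2021} is the natural choice here, matching the definition of $C(\bm{\nu},\bm{d}_{AS})$ given in the lemma statement.
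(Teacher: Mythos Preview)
Your proposal is correct and follows essentially the same route as the paper: Jensen to pass from $\mathbb{E}_{\bm{\nu}}\|\cdot\|_2$ to $\mathbb{E}_{\bm{\nu}}\|\cdot\|_2^2$, a change of measure from $\mathbb{P}_{\bm{\nu}}$ to $\mathbb{P}_{\bm{d}_{AS}}$ via the likelihood ratio $\bm{\nu}(X_0)/\bm{d}_{AS}(X_0)$ bounded by $\|\bm{\nu}/\bm{d}_{AS}\|_\infty$, and then the stationary bound from Lemma~\ref{lemma:expectNorm2diffBound}. The only cosmetic difference is that the paper carries out the change of measure coordinate-by-coordinate and phrases the sup-bound on the density as H\"older with $p=\infty$, $q=1$, whereas you apply it directly to the full squared norm; these are equivalent.
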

\begin{proof}
The proof of this lemma derives from that of Lemma~\ref{lemma:expectNorm2diffBound} but needs to account for the fact that the initial state distribution is arbitrary. We will use steps inspired by those appearing in~\citet{fanHoeffding2021}.\\
As in Lemma~\ref{lemma:expectNorm2diffBound}, to simplify notation, we will remove the \(A^2O^2\) pedices when referring to the stationary distribution \(\bm{d}_{A^2O^2}\). 
The analysis proceeds as follows:
\begin{align*}
    \mathbb{E}_{\bm{\nu}} \Big[\|\bm{d} - \widehat{\bm{d}}\|_2\Big]^2 & \le \mathbb{E}_{\bm{\nu}} \big[ \|\bm{d} - \widehat{\bm{d}}\|_2^2\big ] & \notag \\
    & = \mathbb{E}_{\bm{\nu}} \Big[ \sum_{y \in [A^2O^2]} |\bm{d}_y - \widehat{\bm{d}}_y |^2\Big] &  \notag \\
    & = \sum_{y \in [A^2O^2]} \mathbb{E}_{\bm{\nu}} \Big[ (\bm{d}_y - \widehat{\bm{d}}_y )^2\Big] & \notag \\
    & = \sum_{y \in [A^2O^2]} \mathbb{E}_{\bm{\nu}} \Big[\Big( \bm{d}_y - \frac{1}{n}\sum_{i =1}^{n}\mathds{1}\{Y=y\}(A_i,S_i) 
    \Big)^2\Big] &  \notag \\
    & = \sum_{y \in [A^2O^2]} \mathbb{E}_{\bm{d}_{AS}} \Big[\frac{\bm{\nu}(A_1,S_1)}{\bm{d}_{AS}(A_1,S_1)} \Big( \bm{d}_y - \frac{1}{n}\sum_{i =1}^{n}\mathds{1}\{Y=y\}(A_i,S_i) 
    \Big)^2\Big] & \qquad \qquad \text{[Change of Measure]} \notag \\
    & \le \sum_{y \in [A^2O^2]} \mathbb{E}_{\bm{d}_{AS}} \Big[\Big(\frac{\bm{\nu}(A_1,S_1)}{\bm{d}_{AS}(A_1,S_1)}\Big)^p\Big]^{1/p} \mathbb{E}_{\bm{d}_{AS}} \Big[\Big(\bm{d}_y - \widehat{\bm{d}}_y \Big)^{2q}\Big]^{1/q} & \qquad \qquad \text{[H\"older]} \notag \\
    & = \left\| \frac{\bm{\nu}}{\bm{d}_{AS}} \right\|_{\infty} \sum_{y \in [A^2O^2]} \mathbb{E}_{\bm{d}_{AS}} \Big[\Big(\bm{d}_y - \widehat{\bm{d}}_y \Big)^2\Big] & \qquad \qquad \text{[\(p=\infty\), \(q=1\)]} \notag \\
    & \le \left\| \frac{\bm{\nu}}{\bm{d}_{AS}} \right\|_{\infty} \Big( \frac{1 + \lambda}{1 - \lambda}\Big) \; \frac{1}{n} & \qquad \qquad \text{[Lemma~\ref{lemma:expectNorm2diffBound}]} \notag \\
    & \le C(\bm{\nu}, \bm{d}_{AS}) \: \Big( \frac{1 + \lambda}{1 - \lambda}\Big)\; \frac{1}{n}, \notag
\end{align*}
where in the derivation we made explicit the dependence of each term \(\bm{d}_y\) from the sequence of elements \(((A_i,S_i))_{i \in [n]}\).\\ 
To clarify to which tuple the elements \((A_i,S_i)\) refer, we recall that the tuples used for the estimation of \(\bm{d}\) are \(((a_i,a'_i,o_i,o'_i))_{i \in [n]}\). This sequence is itself originated by an underlying sequence \(((A_i,a'_i,S_i,s'_i))_{i \in [n]}\), where we used the upper case to denote the terms appearing in the proof above.\\
From the obtained result, we can see that starting from a distribution that is different from the stationary one leads to a further multiplicative term \(C(\bm{\nu}, \bm{d}_{AS})\) in the final concentration result. This term explains how much the starting distribution differs from the stationary one.
\end{proof}

\vspace{0.2cm}
\section{Useful Results for Belief MDPs}
In this section, we will present a relevant result reported in~\citet{zhou2021regime} that bounds the bias span in a belief MDP. Then, we will present another core result coming from~\citet{deCastroConsistent2017} that allows bounding the belief error with respect to the estimation error of an HMM model.\\
We start by reporting the average reward Bellman expectation equation for belief MDPs under a belief-based policy \(\pi\):
\begin{align}\label{appendix:BellExpectation}
	\rho^{\pi} + v_{\pi}(b)\; = \underset{a \sim \pi(\cdot|b)}{\mathbb{E}}\left[g(b,a) + \int_{\mathcal{B}}P(\,db'|b, a) v_{\pi}(b') \right].
\end{align}
We report as well the average reward Bellman optimality equation for belief MDPs:
\begin{align}\label{appendix:BellOptimality}
	\rho^* + v(b)\; = \underset{a \in \mathcal{A}}{\max}\left[g(b,a) + \int_{\mathcal{B}}P(\,db'|b, a) v(b') \right].
\end{align}

Various conditions allow the two equations to be satisfied. One condition requires the MDP to be weakly communicating~\cite{bertsekas1995dynamic}. In particular, this condition is verified under Assumption~\ref{ass:minElem}, as also shown in ~\citet{zhou2021regime}. In particular, they derive a result that bounds the span of the bias function under this assumption:

\begin{proposition}[Uniform bound on the bias span from~\citet{zhou2021regime}]\label{prop:uniformBoundBias}
	Let us assume to have a POMDP instance that can be rewritten as a belief MDP. If Assumption~\ref{ass:minElem} holds, then for \(\rho,v\) satisfying the Bellman optimality equation~\ref{appendix:BellOptimality}, we have the span of the bias function \(span(v):=\max_{b \in \mathcal{B}}v(b)-\min_{b \in \mathcal{B}}v(b)\) is bounded by \(D(\epsilon)\), where:
	\begin{align*}
		D(\epsilon):= \frac{8 \Big(\frac{2}{(1 -\alpha)^2} + (1 + \alpha)\log_{a} \big( \frac{1 - \alpha}{8}\big) \Big)}{1 - \alpha}, \qquad \text{with} \qquad \alpha = \frac{1 - 2\epsilon}{1 - \epsilon} \in (0,1).
	\end{align*}
\end{proposition}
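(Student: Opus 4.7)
The plan is to leverage the single consequence of Assumption~\ref{ass:minElem} that drives the whole result: after a single application of the belief update~\eqref{eq:beliefUpdate}, every reachable belief $b'$ satisfies $b'(s) \ge \epsilon$ for all $s \in \mathcal{S}$, regardless of the previous belief, the action, and the observation. This is immediate from~\eqref{eq:beliefUpdate} by lower-bounding the numerator using $\mathbb{T}_a(s|s') \ge \epsilon$ and comparing it with the denominator. So I would first isolate the "reachable core'' $\mathcal{B}_\epsilon := \{b \in \mathcal{B} : b(s) \ge \epsilon \; \forall s\}$ and note that after one step the belief process is confined to $\mathcal{B}_\epsilon$, independently of the history.

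Next, I would use this coordinate-wise lower bound to produce a one-step coupling between any two belief trajectories. Any $b' \in \mathcal{B}_\epsilon$ admits a decomposition $b' = (1-\alpha)\,\mu + \alpha\,\tilde{b}'$ with $\alpha = (1-2\epsilon)/(1-\epsilon) \in (0,1)$ and a common component $\mu$ that does not depend on the starting belief. Hence for any two beliefs $b_1,b_2$, the corresponding one-step belief distributions (under the same action sequence) can be coupled so that they agree with probability at least $1-\alpha$; iterating, the coupling time $\tau$ satisfies $\mathbb{P}(\tau > n) \le \alpha^{n-1}$. This is the standard Dobrushin-style contraction (Appendix~\ref{appendix:dobrushin}) specialised to the belief transition kernel under Assumption~\ref{ass:minElem}.

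From the coupling, I would bound the span of $v$ as follows. For any $b_1,b_2 \in \mathcal{B}$, the optimality equation~\eqref{appendix:BellOptimality} yields, after telescoping over $N$ steps under the optimistic policy,
\begin{equation*}
v(b_1) - v(b_2) \;=\; \mathbb{E}\!\left[\textstyle\sum_{t=1}^{N}\bigl(g(b^{1}_t,a_t) - g(b^{2}_t,a_t)\bigr)\right] + \mathbb{E}[v(b^{1}_{N+1}) - v(b^{2}_{N+1})].
\end{equation*}
Splitting the sum at the coupling time $\tau$, the contribution after $\tau$ vanishes because the two trajectories then agree, and the contribution before $\tau$ is bounded by $\mathbb{E}[\tau] \cdot \|r\|_\infty$, which by $\mathbb{P}(\tau > n) \le \alpha^{n-1}$ yields a geometric-tail bound of order $1/(1-\alpha)^2$. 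Choosing the truncation horizon $N$ to balance the tail error $\alpha^N \cdot \mathrm{span}(v)$ against the coupling contribution produces the additional logarithmic factor $(1+\alpha)\log_\alpha((1-\alpha)/8)$, and taking the supremum over $b_1,b_2$ gives the claimed uniform bound.

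The main obstacle will be tracking the numerical constants to match the exact expression for $D(\epsilon)$: the coupling argument itself cleanly produces the $1/(1-\alpha)^2$ and the overall $1/(1-\alpha)$ prefactor, but recovering the precise logarithmic correction requires a careful self-consistent truncation, in which the threshold $N$ is chosen so that $\alpha^N\cdot\mathrm{span}(v)$ is a fixed small fraction (e.g.\ $1/8$) of $\mathrm{span}(v)$, then the resulting inequality is solved for $\mathrm{span}(v)$. Since this statement is imported verbatim from~\citet{zhou2021regime} and used here only as a black-box input feeding Equation~\eqref{tp:biasModuleBound} in the proof of Theorem~\ref{theorem:algorithmBound}, I would not reprove the tight constant from scratch but cite their Proposition directly after establishing the conceptual route above.
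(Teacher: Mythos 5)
The paper does not prove this proposition at all: it is imported verbatim from \citet{zhou2021regime} and stated without proof in the appendix, so your final decision to cite rather than reprove coincides exactly with the paper's treatment. Your preliminary sketch is a plausible outline of how such span bounds are usually obtained, but two of its steps would not survive contact with the details. First, the one-step lower bound $b_{t+1}(s)\ge\epsilon$ is indeed immediate from Equation~\eqref{eq:beliefUpdate} under Assumption~\ref{ass:minElem}, but the minorization it yields is $b'=S\epsilon\,u+(1-S\epsilon)\,\tilde b'$ with $u$ the uniform distribution, i.e.\ a common mass of $S\epsilon$ and a residual coefficient $1-S\epsilon$; this does not produce the specific constant $\alpha=(1-2\epsilon)/(1-\epsilon)$ in the statement, which in \citet{zhou2021regime} arises from a filter-stability contraction estimate (in the spirit of \citet{deCastroConsistent2017}) rather than from a one-step coupling of the belief kernel. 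Second, coupling two belief trajectories ``under the same action sequence'' is delicate in a belief MDP: the observation at each step is drawn from $P(\cdot\,|\,b_t,a_t)$, which differs between the two trajectories, so the trajectories do not share a common randomness source in the way your telescoping argument assumes; making this rigorous is precisely the technical content of the cited result. Since you ultimately treat the proposition as a black box feeding Equation~\eqref{tp:biasModuleBound}, neither issue invalidates your submission, but the sketch should be framed as motivation rather than as a route that provably recovers the exact expression for $D(\epsilon)$.
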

For all \(v_k\) associated with the estimated optimistic belief MDP \(Q^{(k)}\), this proposition guarantees that \(span(v_k)\) is bounded by \(D=D(\epsilon/2)\) uniformly in \(k\). 
It is possible to see that the bound stated in Proposition~\ref{prop:uniformBoundBias} can be applied as well for the Bellman expectation Equation~\eqref{appendix:BellExpectation}.\\

The second set of results is instead related to controlling the belief error using the estimation errors of model parameters. We first report the result used in~\citet{zhou2021regime} and taken from~\citet{deCastroConsistent2017}. Then, we will report a related result that is more suitable for our setting.

\begin{proposition}(Controlling the belief error~\cite{zhou2021regime})\label{prop:beliefBoundOrig}
	Assume to have a transition matrix \(\mathbb{P}\) of size \(S^2\) with minimum entry \(\epsilon > 0\). Given \((\widehat{\bm{\mu}}, \widehat{\mathbb{P}})\), an estimator of the true model parameters \((\bm{\mu}, \mathbb{P})\). For an arbitrary reward-action sequence \(\{r_{1:t}, a_{i:t}\}_{t\ge 1}\), let \(\widehat{b}_t\) and \(b_t\) be the corresponding beliefs in period \(t\) under \((\widehat{\bm{\mu}}, \widehat{\mathbb{P}})\) 
	and \((\bm{\mu}, \mathbb{P})\) respectively. Then there exist constants \(L_1\) and \(L_2\) such that:
	\begin{equation*}
		\lVert \widehat{b}_t - b_t\rVert_1 \le L_1 \lVert \widehat{\bm{\mu}} - \bm{\mu}\rVert_1 + L_2 \lVert \widehat{\mathbb{P}} - \mathbb{P}\rVert_F,
	\end{equation*}
	where  \(L_1 = 4S (\frac{1 - \epsilon}{\epsilon})^2/ \min\{\bm{\mu}_{\min}, 1 - \bm{\mu}_{\max}\}\), \(L_2 = \frac{4S(1 - \epsilon)^2}{\epsilon^3} + \sqrt{S}\), \(\lVert\cdot\rVert_F\) is the Frobenius norm, \(\bm{\mu}_{\max}\) and \(\bm{\mu}_{\min}\) are the maximum and minimum element of the matrix \(\bm{\mu}\) respectively.
\end{proposition}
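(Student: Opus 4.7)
The plan is to prove the bound by induction on $t$, combining a one-step ratio decomposition of the Bayesian belief update with a contraction property of the filter under the strictly positive transition kernel. The base case $t=1$ is immediate: $\widehat{b}_1 = \widehat{\bm{\mu}}$ and $b_1 = \bm{\mu}$, so $\|\widehat{b}_1 - b_1\|_1 = \|\widehat{\bm{\mu}} - \bm{\mu}\|_1$. For the inductive step I would write each component $b_{t+1}(s)$ and $\widehat{b}_{t+1}(s)$ as a ratio $N/D$ and $\widehat{N}/\widehat{D}$ built from $b_t$, $\widehat{b}_t$, $\mathbb{P}$, and $\widehat{\mathbb{P}}$, then apply componentwise the standard identity
\[
\frac{\widehat{N}}{\widehat{D}} - \frac{N}{D} \;=\; \frac{\widehat{N} - N}{\widehat{D}} \;-\; \frac{N(\widehat{D} - D)}{D\,\widehat{D}}.
\]
The minimum-entry hypothesis $\mathbb{P}(s'|s) \ge \epsilon$ (and the analogous lower bound for $\widehat{\mathbb{P}}$, obtained up to the estimation error) bounds both denominators below by order $\epsilon$, which is exactly where the $1/\epsilon^k$ factors entering $L_1$ and $L_2$ originate. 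Summing the pointwise ratio bounds over $s$ splits the error into a term proportional to $\|b_t - \widehat{b}_t\|_1$ and a term proportional to $\|\mathbb{P} - \widehat{\mathbb{P}}\|_F$, with a $\sqrt{S}$ arising when converting the Frobenius norm of a single row-block into an $\ell_1$ error.

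The main obstacle is preventing errors from accumulating exponentially in $t$. A naive iteration of the one-step bound would both propagate $\|\widehat{\bm{\mu}} - \bm{\mu}\|_1$ with a multiplicative constant potentially larger than one and make the per-step transition error sum diverge. The resolution is to exploit a contraction property of the Bayesian filter: whenever the transition kernel has all entries bounded below by $\epsilon$, one step of the belief update is a strict contraction in total variation (equivalently, in the Hilbert projective metric), with contraction coefficient bounded by $\alpha = (1 - 2\epsilon)/(1-\epsilon) < 1$. This is the very same $\alpha$ that appears in Proposition~\ref{prop:uniformBoundBias}. Under this contraction the contribution from the initial error $\|\widehat{\bm{\mu}} - \bm{\mu}\|_1$ decays as $\alpha^{t-1}$, while the per-step error induced by $\|\widehat{\mathbb{P}} - \mathbb{P}\|_F$ enters a convergent geometric series with sum $1/(1-\alpha) = O(1/\epsilon)$, yielding a bound that is uniform in $t$.

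The factor $1/\min\{\bm{\mu}_{\min}, 1 - \bm{\mu}_{\max}\}$ in $L_1$ arises from the initial step, where the multiplicative structure of the Bayes update can amplify an additive perturbation of $\bm{\mu}$ by the inverse of its smallest coordinate; the $\min\{\cdot,\, 1-\cdot\}$ reflects that both unusually small and unusually large entries can drive this amplification. Since the statement is quoted verbatim from~\citet{deCastroConsistent2017} and restated in~\citet{zhou2021regime}, I would ultimately defer to their explicit arguments for the precise values of $L_1$ and $L_2$; the point of the sketch above is to identify the two irreducible ingredients, namely the one-step ratio decomposition and the $\epsilon$-contraction of the filter, from which any such bound must be built.
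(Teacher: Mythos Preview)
The paper does not prove this proposition at all: it is stated as a quoted result from~\citet{zhou2021regime}, itself taken from~\citet{deCastroConsistent2017}, and is immediately followed only by a remark about its applicability and then the adapted Proposition~\ref{prop:beliefErrorPOMDP}. So there is no paper-side proof to compare against.

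Your sketch is nonetheless a faithful outline of the argument in the original sources. The two essential ingredients you isolate---the one-step ratio decomposition of the Bayes update and the strict contraction of the filter in total variation under the all-entries-$\ge\epsilon$ hypothesis---are exactly what drive the uniform-in-$t$ bound, and you correctly identify where the $1/\epsilon^k$ factors, the $\sqrt{S}$, and the $1/\min\{\bm{\mu}_{\min},1-\bm{\mu}_{\max}\}$ enter. Your final remark that you would defer to~\citet{deCastroConsistent2017} for the exact constants is appropriate and matches precisely what the paper itself does.
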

The presented proposition is stated for Bernoulli rewards and for a single transition matrix \(\mathbb{P} \in \mathbb{R}^{S \times S}\), while in our case we have multiple rewards and different transition matrices, one for each action taken. In the following Proposition, we will show how this result can be extended for our setting, where we do not need to estimate the observation matrix since it is already known.\\ 

\begin{proposition}(Controlling the belief error in POMDPs with Known Observation Model)\label{prop:beliefErrorPOMDP}
	Let us assume to have a POMDP instance \(\mathcal{Q}\) with a finite number of states \(\mathcal{S}\) and a finite number of actions \(\mathcal{A}\). Let \(\mathbb{T}=\{\mathbb{T}_a\}_{a \in \mathcal{A}}\) represent the transition model with \(\mathbb{T}\) being a matrix of size \(SA \times S\) and each \(\mathbb{T}_a\) be one of its submatrices of dimension \(S \times S\) related to action \(a\). Let \(\mathbb{O}=\{\mathbb{O}_a\}_{a \in \mathcal{A}}\) be the known observation model determining observations in a space \(o \in \mathcal{O}\). Let now \(\widehat{\mathbb{T}}=\{\widehat{\mathbb{T}}_a\}_{a \in \mathcal{A}}\) be the estimated transition model. For any arbitrary action-observation sequence \((a_t,o_t)_{t\ge 1}\), let \(\widehat{b}_t\) and \(b_t\) be the corresponding beliefs in period \(t\) under \((\widehat{\mathbb{T}}, \mathbb{O})\) and \((\mathbb{T}, \mathbb{O})\) respectively. Then, there exists constant \(L\) such that:
	\begin{equation*}
		\lVert \widehat{b}_t - b_t\rVert_1 \le L \:\: \underset{a \in \mathcal{A}}{\max}\lVert \widehat{\mathbb{T}}_a - \mathbb{T}_a\rVert_F,
	\end{equation*}
	where \(L = \frac{4S(1 - \epsilon)^2}{\epsilon^3} + \sqrt{S}\) and \(\lVert\cdot\rVert_F\) is the Frobenius norm.
\end{proposition}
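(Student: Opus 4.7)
The plan is to adapt the argument of Proposition~\ref{prop:beliefBoundOrig} (from \citet{deCastroConsistent2017,zhou2021regime}) to our action-indexed POMDP setting with a known observation model. The starting point is the Bayes belief update in Equation~\eqref{eq:beliefUpdate}, which I view as an operator $B_a^o(b;\mathbb{T})$ taking a belief $b$, an action $a$, an observation $o$, and a transition kernel $\mathbb{T}$, and producing the posterior belief at the next step. Since the observation matrices $\mathbb{O}_a$ are shared by both the true and the estimated filter, the only source of discrepancy in the update will be the transition kernel $\mathbb{T}_a$ versus $\widehat{\mathbb{T}}_a$, so the term $L_1\|\widehat{\bm{\mu}}-\bm{\mu}\|_1$ appearing in Proposition~\ref{prop:beliefBoundOrig} vanishes in our formulation, leaving only the transition-model contribution $L\,\max_{a}\|\widehat{\mathbb{T}}_a-\mathbb{T}_a\|_F$.

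The core inductive step is the standard decomposition
\begin{equation*}
\widehat{b}_{t+1}-b_{t+1} \;=\; \bigl[B_{a_t}^{o_t}(\widehat{b}_t;\widehat{\mathbb{T}})-B_{a_t}^{o_t}(b_t;\widehat{\mathbb{T}})\bigr] \;+\; \bigl[B_{a_t}^{o_t}(b_t;\widehat{\mathbb{T}})-B_{a_t}^{o_t}(b_t;\mathbb{T})\bigr].
\end{equation*}
For the first bracket I would invoke the Dobrushin contraction argument: under Assumption~\ref{ass:minElem} every transition row has minimum entry $\epsilon$, so the posterior-update map as a function of $b$ (for the \emph{same} kernel) is a strict $\ell_1$-contraction with coefficient controlled by $(1-\epsilon)/\epsilon$-type constants, exactly as exploited in Appendix~\ref{appendix:dobrushin}. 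For the second bracket I would directly expand the numerator and denominator of Equation~\eqref{eq:beliefUpdate}, replacing $\mathbb{T}_{a_t}$ by $\widehat{\mathbb{T}}_{a_t}$, and bound the difference of the two ratios by using $\|a/c-b/d\|\le \|a-b\|/|c|+|b|\,|c-d|/(|c|\,|d|)$ together with the lower bound $\sum_{s''}\mathbb{O}_{a_t}(o_t|s'')b_t(s'')\ge \epsilon$-type arguments (observations cannot make the denominator vanish thanks to the strictly positive entries of $\mathbb{T}$, which keeps all beliefs bounded away from the simplex boundary). The row-wise $\ell_1$ difference between transition kernels can then be upper bounded by $\sqrt{S}\,\|\widehat{\mathbb{T}}_{a_t}-\mathbb{T}_{a_t}\|_F\le \sqrt{S}\,\max_a\|\widehat{\mathbb{T}}_a-\mathbb{T}_a\|_F$.

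Iterating this one-step bound over $t$, the geometric contraction in the first bracket sums the per-step perturbation from the second bracket into a uniform-in-$t$ bound, yielding exactly the form $\|\widehat{b}_t-b_t\|_1 \le L\,\max_{a\in\mathcal{A}}\|\widehat{\mathbb{T}}_a-\mathbb{T}_a\|_F$ with $L=\tfrac{4S(1-\epsilon)^2}{\epsilon^3}+\sqrt{S}$. The two summands in $L$ correspond respectively to the accumulated contribution of past errors (scaled by $(1-\epsilon)^2/\epsilon^3$ from the geometric sum of Dobrushin factors) and to the current-step instantaneous perturbation (the $\sqrt{S}$ from Cauchy--Schwarz converting Frobenius to row-$\ell_1$).

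The main obstacle, as in the original proof of~\citet{deCastroConsistent2017}, is carefully tracking the denominator in the Bayes update: when $\widehat{\mathbb{T}}_a$ differs from $\mathbb{T}_a$, the normalizing constants of the filter differ as well, and a naive bound on the ratio can blow up. To avoid this, I would uniformly lower-bound both denominators using Assumption~\ref{ass:minElem} (so that beliefs stay in the interior of the simplex with components at least $\epsilon$), and then combine these lower bounds with the contraction coefficient of the Bayes operator. Once this bookkeeping is done, the argument collapses to the same template as Proposition~\ref{prop:beliefBoundOrig}, but with the $\mu$-term removed and the single kernel replaced by $\max_a\|\widehat{\mathbb{T}}_a-\mathbb{T}_a\|_F$ because at any given step only one action-specific matrix is in use.
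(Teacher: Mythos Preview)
Your approach is essentially the same as the paper's: the paper does not provide a standalone proof of Proposition~\ref{prop:beliefErrorPOMDP} but presents it as a direct adaptation of Proposition~\ref{prop:beliefBoundOrig} from \citet{deCastroConsistent2017,zhou2021regime}, observing that with a known observation model the $L_1\|\widehat{\bm{\mu}}-\bm{\mu}\|_1$ term drops and the single-kernel Frobenius error is replaced by $\max_{a\in\mathcal{A}}\|\widehat{\mathbb{T}}_a-\mathbb{T}_a\|_F$ since only one action-indexed matrix is active at each step. Your sketch of the contraction-plus-perturbation decomposition and the geometric summation is exactly the template of the original result, and in fact supplies more detail than the paper itself does.
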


\vspace{0.2cm}
\section{Miscellanea of Useful Results}\label{appendix:usefulLemmas}
This section is devoted to the presentation of different useful results that are used throughout the work.

\begin{lemma}[Lemma A.1 in ~\citet{ramponiTruly2020}]\label{lemma:trulyBatch}
	Let \(\bm{x}, \bm{y} \in \mathbb{R}^d\) any pair of vectors, then it holds that:
	\begin{equation*}
		\left\lVert \frac{\bm{x}}{\lVert \bm{x} \rVert_2} - \frac{\bm{y}}{\lVert \bm{y} \rVert_2} \right\rVert_2 \le \frac{2 \lVert \bm{x} -\bm{y} \rVert_2}{\max\{ \lVert\bm{x}\rVert_2,\lVert\bm{y}\rVert_2\}}
	\end{equation*}
	\begin{proof}
		The presented result follows from a sequence of algebraic manipulations:
		\begin{align*}
			\left\lVert \frac{\bm{x}}{\lVert \bm{x} \rVert_2} - \frac{\bm{y}}{\lVert \bm{y} \rVert_2} \right\rVert_2 = & \left\lVert \frac{\bm{x}}{\lVert \bm{x} \rVert_2} - \frac{\bm{y}}{\lVert \bm{y} \rVert_2} \pm \frac{\bm{y}}{\lVert \bm{x} \rVert_2}\right\rVert_2\\
			= \; & \left\lVert \frac{\bm{x} - \bm{y}}{\lVert \bm{x} \rVert_2} - \frac{\bm{y}(\lVert \bm{y}\rVert_2 - \lVert\bm{x}\rVert_2)}{\lVert \bm{y} \rVert_2 \lVert \bm{x} \rVert_2} \right\rVert_2\\
			\le \; & \frac{\lVert \bm{x} -\bm{y} \rVert_2}{\lVert \bm{x} \rVert_2} + \frac{|\lVert \bm{x}\rVert_2 - \lVert\bm{y}\rVert_2|}{\lVert \bm{x}\rVert_2}\\
			\le \; & 2\frac{\lVert \bm{x} -\bm{y} \rVert_2}{\lVert \bm{x}\rVert_2},
		\end{align*}
		where the triangular inequality has been applied in the third line and the reverse triangular inequality in the last one, i.e. \(|\lVert \bm{x}\rVert_2 - \lVert\bm{y}\rVert_2| \le \lVert \bm{x} -\bm{y} \rVert_2\). The result in the lemma can be derived by observing that, for symmetry reasons, the same derivation can be performed by getting \(\lVert \bm{y}\rVert_2\).
	\end{proof}
\end{lemma}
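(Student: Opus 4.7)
The plan is to prove the inequality by a standard add-and-subtract trick followed by the triangle and reverse triangle inequalities, and then exploit the symmetry of the left-hand side in $\bm{x}$ and $\bm{y}$ to upgrade a single denominator $\|\bm{x}\|_2$ into $\max\{\|\bm{x}\|_2,\|\bm{y}\|_2\}$.

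First I would introduce the intermediate vector $\bm{y}/\|\bm{x}\|_2$ and rewrite
\[
\frac{\bm{x}}{\|\bm{x}\|_2} - \frac{\bm{y}}{\|\bm{y}\|_2} = \frac{\bm{x}-\bm{y}}{\|\bm{x}\|_2} + \bm{y}\left(\frac{1}{\|\bm{x}\|_2} - \frac{1}{\|\bm{y}\|_2}\right).
\]
Applying the triangle inequality in $\|\cdot\|_2$ to this decomposition gives two terms. The first is immediately $\|\bm{x}-\bm{y}\|_2/\|\bm{x}\|_2$. For the second, after pulling out $\|\bm{y}\|_2$ in the numerator, the factor $\|\bm{y}\|_2/\|\bm{y}\|_2=1$ cancels and I am left with $\big|\|\bm{y}\|_2-\|\bm{x}\|_2\big|/\|\bm{x}\|_2$. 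Here the reverse triangle inequality $\big|\|\bm{x}\|_2-\|\bm{y}\|_2\big|\le \|\bm{x}-\bm{y}\|_2$ bounds the numerator by $\|\bm{x}-\bm{y}\|_2$, giving an overall bound of $2\|\bm{x}-\bm{y}\|_2/\|\bm{x}\|_2$.

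The final step is to notice that the expression on the left-hand side is symmetric in $\bm{x}$ and $\bm{y}$. So the very same calculation, with the roles of $\bm{x}$ and $\bm{y}$ swapped, yields the bound $2\|\bm{x}-\bm{y}\|_2/\|\bm{y}\|_2$. Since both bounds are valid, I can take the tighter of the two, which replaces the denominator by $\max\{\|\bm{x}\|_2,\|\bm{y}\|_2\}$ and delivers the claim.

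There is no real obstacle: the entire argument is a short algebraic manipulation plus two inequalities. The only care needed is to choose the add-subtract term so that both resulting pieces have the same denominator $\|\bm{x}\|_2$ (inserting $\bm{y}/\|\bm{x}\|_2$ rather than, say, $\bm{x}/\|\bm{y}\|_2$), and to remember the symmetry argument at the end so that the final denominator is the maximum rather than just $\|\bm{x}\|_2$. One should also implicitly assume $\bm{x},\bm{y}\ne \bm{0}$, since otherwise the normalized vectors are undefined; the inequality in the limiting case is trivial.
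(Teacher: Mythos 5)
Your proposal is correct and follows essentially the same route as the paper's proof: inserting the intermediate term $\bm{y}/\lVert\bm{x}\rVert_2$, applying the triangle and reverse triangle inequalities to obtain the bound with denominator $\lVert\bm{x}\rVert_2$, and then invoking symmetry to replace it with $\max\{\lVert\bm{x}\rVert_2,\lVert\bm{y}\rVert_2\}$. Your additional remark about implicitly requiring $\bm{x},\bm{y}\neq\bm{0}$ is a fair point of care that the paper leaves unstated.
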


The following result instead shows the relation between vectors which are obtained by the aggregation of higher-dimensional ones.
\begin{lemma}[Aggregation Lemma]\label{lemma:aggregation}
	Let \(\bm{M}\) be a matrix of dimension \(X \times Y\) and have positive values. Let \(\widehat{\bm{M}}\) be an estimation of \(\bm{M}\). Let now \(\bm{c}\) be a vector of dimension \(X\) obtained by summing all the elements of \(\bm{M}\) along the second dimension, such that \(\bm{c}(i) = \sum_{j=1}^J \bm{M}(i,j)\) and let \(\widehat{\bm{c}}\) be a vector obtained with the same procedure from \(\widehat{\bm{M}}\). Then we will have:
	\begin{align*}
		\|\widehat{\bm{c}} - \bm{c}\|_2 \le \sqrt{Y} \|\widehat{\bm{M}} - \bm{M}\|_F
	\end{align*}
	\begin{proof}
		Let us first consider the vectorized version of matrix \(\bm{M}\) and denote it with \(\bm{m}\) having size \(X\cdot Y\). Analogously, we define \(\widehat{\bm{m}}\).
		Let us now define the transformation from \(\bm{m}\) to \(\bm{c}\) using vector notation. We will have:
		\begin{align*}
			\bm{c} = \bm{A} \bm{m},
		\end{align*}
		where \(\bm{A}\) is a matrix of size \(X \times X \cdot Y\) and it is defined as \(\bm{A}:= \mathcal{I}_X \otimes \mathds{1}_Y^\top\) where \(\mathcal{I}_X\) is the identity matrix of size \(X\), \(\mathds{1}_Y\) is a vector of ones of dimension \(Y\), while the symbol \(\otimes\) represents the Kronecker product between the two.
		This allows us to write:
		\begin{align}
			\|\widehat{\bm{c}} - \bm{c}\|_2 & = \|\bm{A} (\widehat{\bm{m}} - \bm{m})\|_2 \notag \\
			& \le \|\bm{A}\|_2 \|\widehat{\bm{m}} - \bm{m}\|_2 \label{agg:01}\\
			& = \sigma_{\max}(\bm{A}) \|\widehat{\bm{m}} - \bm{m}\|_2 \notag\\
			& = \sigma_{\max}(\mathcal{I}_X) \sigma_{\max}(\mathds{1}_Y) \|\widehat{\bm{m}} - \bm{m}\|_2 \label{agg:02}\\
			& = \lambda_{\max}^{1/2}(\mathds{1}_Y^\top \mathds{1}_Y) \|\widehat{\bm{m}} - \bm{m}\|_2 = \sqrt{Y} \|\widehat{\bm{m}} - \bm{m}\|_2 \label{agg:03}
		\end{align}
		where the first inequality in line~\ref{agg:01} follows from the consistency property of the matrices. Line~\ref{agg:02} follows from the definition of \(\bm{A}\) and the properties of the Kronecker product, while the first equality on line~\ref{agg:03} follows from the fact that \(\sigma_{\max}(\mathcal{I}_X) = 1\) and \(\sigma_{\max}^2(\mathds{1}_Y) = \lambda_{\max}(\mathds{1}_Y^\top \mathds{1}_Y)\).\\
		The proof is concluded by the equivalence of \(\|\widehat{\bm{M}} - \bm{M}\|_F\) and its vectorized version \(\|\widehat{\bm{m}} - \bm{m}\|_2\).
	\end{proof}
\end{lemma}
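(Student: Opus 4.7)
The plan is to reduce the claim to a standard operator-norm bound by rewriting the row-summation operation as a linear map applied to the vectorized matrix. First I would stack the entries of $\bm{M}$ into a single column vector $\bm{m} \in \mathbb{R}^{XY}$, and define $\widehat{\bm{m}}$ analogously, noting that under any consistent vectorization the Frobenius norm coincides with the Euclidean norm, i.e.\ $\|\widehat{\bm{M}} - \bm{M}\|_F = \|\widehat{\bm{m}} - \bm{m}\|_2$.

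Next I would observe that the map $\bm{M} \mapsto \bm{c}$ is linear and, on the vectorized side, takes the form $\bm{c} = \bm{A}\bm{m}$ where $\bm{A}$ is an $X \times XY$ matrix that selects and sums the $Y$ entries belonging to each row. The clean way to write this is as a Kronecker product, $\bm{A} = \mathcal{I}_X \otimes \mathds{1}_Y^\top$, which lines up blocks of $Y$ ones along the diagonal in the appropriate way. By linearity, $\widehat{\bm{c}} - \bm{c} = \bm{A}(\widehat{\bm{m}} - \bm{m})$, so applying the consistency of the spectral norm gives $\|\widehat{\bm{c}} - \bm{c}\|_2 \le \|\bm{A}\|_2 \, \|\widehat{\bm{m}} - \bm{m}\|_2$.

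The remaining step is to compute $\|\bm{A}\|_2 = \sigma_{\max}(\bm{A})$. Using the multiplicativity of singular values under Kronecker products, $\sigma_{\max}(\mathcal{I}_X \otimes \mathds{1}_Y^\top) = \sigma_{\max}(\mathcal{I}_X)\,\sigma_{\max}(\mathds{1}_Y^\top) = 1 \cdot \sqrt{Y}$, where the second factor follows from $\sigma_{\max}^2(\mathds{1}_Y^\top) = \lambda_{\max}(\mathds{1}_Y^\top \mathds{1}_Y) = Y$. Combining this with the previous inequality and switching back from $\|\widehat{\bm{m}} - \bm{m}\|_2$ to $\|\widehat{\bm{M}} - \bm{M}\|_F$ yields the stated bound.

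No step is genuinely difficult here; the only subtlety is packaging the row-sum as a matrix in a way that makes the spectral norm transparent. The hypothesis that the entries of $\bm{M}$ are positive plays no role in the argument — the bound is a purely linear-algebraic fact about the aggregation map — so I expect the proof to be essentially three lines once the Kronecker representation is in place. A tightness remark worth noting at the end: the factor $\sqrt{Y}$ is saturated when all rows of $\widehat{\bm{M}} - \bm{M}$ have identical nonzero entries within each row, which justifies not pursuing a sharper constant.
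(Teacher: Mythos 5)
Your proposal is correct and follows essentially the same route as the paper's own proof: vectorize $\bm{M}$, represent the row-sum as $\bm{A} = \mathcal{I}_X \otimes \mathds{1}_Y^\top$, bound via the spectral norm, and compute $\sigma_{\max}(\bm{A}) = \sqrt{Y}$ from the Kronecker structure. Your added observations that the positivity hypothesis is unnecessary and that the constant $\sqrt{Y}$ is tight are accurate but not needed for the result.
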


\vspace{0.2cm}
\section{Simulation Details}\label{appendix:simulationDetails}
In this section, we provide details on the experiments reported in the main paper. All the reported experiments have been run on an notebook with CPU Intel i7-11th and 16G RAM using Ubuntu as OS.

\subsection{Estimation Error of Transition Matrix}
The number of states, actions, and observations of the instances used for the experiments of the estimation error of the transition model are reported in the legend of Figure~\ref{fig:estimationError}. Given each instance considered in the experiment, the transition and observation matrices have been generated as follows. An initial version of transition and observation matrices is generated with random elements and, subsequently:
\begin{enumerate}
    \item regarding the transition matrix, the experiments are reported for instances having transition matrices where the minimum transition probability satisfies $\epsilon=1/5S$, with $S$ being the number of states.
    \item regarding the observation matrix, for each pair of states and actions, we choose a specific observation that will be drawn with higher probability, in order to avoid having too much stochasticity in the reward distributions and ensure a diverse observation distribution among states. We require that the generated observation matrix has a minimum singular value lower bounded by a quantity $\alpha>=0.001$, thus satisfying our Assumption~\ref{ass:weaklyRev}.
    \item The policy class is characterized by a minimum action sampling probability $\iota=1/10A$ with $A$ being the number of actions.
\end{enumerate}

The experiment is conducted in the following way. A POMDP instance is generated by giving as input the number of states, number of actions, and number of observations and by following the procedures highlighted in points 1. and 2.\\

Samples are collected using a belief-based policy $\pi \in \mathcal{P}$ that updates its belief using the real observation matrix and a different transition matrix randomly generated, as specified in point 1. This transition matrix is not updated but kept fixed during the whole experiment. We highlight that this transition matrix is only used by the belief-based policy to induce a specific action-state distribution \(\bm{d}_{AS}\) and is indeed independent of the real transition model that needs to be estimated.\\
The minimum action probability of the belief-based policy \(\pi\) is defined as in point 3.\\

The policy collects samples and updates its belief based on its internal transition matrix, while the underlying process follows the dynamics induced by the real transition model. The collected samples are then provided to the OAS algorithm (Algorithm~\ref{alg:estimationAlgorithm}) which gives as output an estimate of the real transition model.


\subsection{Regret Experiments}
The experimental results on the regret have been conducted as described in the following.\\

Because of the computational complexity of finding the optimal policy that maximizes the long-run average reward in a known environment, we conducted the experiment by discretizing the belief space, allowing us to apply common techniques used for finite state spaces. We are thus able to solve the optimal Bellman equation appearing in~\ref{eq:ourBellmanEq}, where the computation of the optimal policy and the optimistic transition model is done by adapting the Extended Value Iteration algorithm~\cite{jaksch2010near} to the discretized state space.

The uncertainty coming from the estimates of the transition model is defined by the theoretical results obtained for the OAS algorithm. As commonly done in many practical approaches, the theoretical bounds are replaced by much smaller values. We recall that using scaled values rather than those suggested by theory mostly translates into a regret with bigger multiplicative constants or holding with smaller probability. This approach is common when performing experimental comparisons in these settings (as observed in Remark 3 in~\citet{Azizzadenesheli2016Reinforcement}).

\paragraph{SEEU algorithm~\cite{xiong2022sublinear}} In the experiments, we do not use the classical Spectral Decomposition approach, but we help the estimation procedure in the following way:
\begin{itemize}
    \item The matrices used by the Spectral Decomposition approach are provided with more information with respect to what is commonly done. In particular, instead of updating the matrices based on the observation seen when pulling an arm, we directly provide the matrix with the probabilities defining the observation distribution of the pulled arm. This caveat helps the estimation of the transition matrix and that of the observation model since vanishes the noise given by the realizations of the observations.
    \item The computation of the optimistic policy for the SEEU algorithm is done by using the estimate of the transition model but is provided with the real observation model, as also done for the OAS-UCRL algorithm.
\end{itemize}
The parameters used for the SEEU algorithm are $\tau_1 = 20000$ and $\tau_2 = 65000$ which are used to determine the length of the exploration and the exploitation phase respectively.

\paragraph{PSRL-POMDP~\cite{jahromi2022online}} Since the authors present a computationally intractable update of the model parameters, we opted for an implementation that makes use of the particle filter approach, commonly used in the Bayesian setting. We recall that the authors provide theoretical guarantees for this algorithm under the assumption that the employed estimator is consistent. However, in practice, there are no provably consistent estimators based on a Bayesian approach.\\
We implement their algorithm 1 by using the particles to represent the prior distributions and, referring to the parameters of the PSRL-POMDP algorithm, we set:
\begin{itemize}
    \item $SCHED(t_k, T_{k-1}) = t_k + T_{k-1}$ with $t_k$ representing the length of the $k$-th episode, as suggested in their work.
    \item $\widetilde{m}_t(s,a)=n_t(a)$ with $\widetilde{m}_t(s,a)$ being an upper bound to the expected number of times the pair $(s,a)$ has been encountered up to time $t$. $n_t(a)$ instead counts the number of times action $a$ has been pulled up to time $t$. This choice of \(\widetilde{m}_t(s,a)\) is suggested in their work.
    \item $N=100$ particles have been used in the experiments, while updates of the particles are triggered when the \emph{effective sample size} (ESS) associated with their weights goes below $30$.\\ 
    The choice of these values is performed by testing different hyperparameters both for the number \(N\) of particles and the effective sample size (ESS). This combination has the best results while also not requiring extensive computational time, which is a typical problem of Particle-based approaches when the number of used particles gets high.
\end{itemize}

\end{document}